
\documentclass{article}
\pdfoutput=1

\usepackage[utf8]{inputenc} 
\usepackage[T1]{fontenc}    
\usepackage{nicefrac}       
\usepackage{microtype}      
\usepackage{xcolor}         

\usepackage[pdftex]{graphicx}
\usepackage{subfigure}
\usepackage{booktabs} 
\usepackage{threeparttable}
\usepackage{colortbl}
\usepackage{multicol}
\usepackage{multirow}
\usepackage{prettyref}
\usepackage{pifont}
\usepackage{fontawesome5}
\usepackage{enumerate}
\usepackage[shortlabels]{enumitem}
\usepackage{bm}

\usepackage[listings,skins,breakable]{tcolorbox}
\usepackage[backref=page]{hyperref}
\usepackage{tabularx}
\usepackage[labelfont=bf,format=plain,justification=raggedright,singlelinecheck=false]{caption}
\usepackage[numbers]{natbib}


\newcommand{\pref}{\prettyref}

\newcommand{\our}{\texttt{D-PBEMO}}

\usepackage{algorithm, algorithmic}


\usepackage{amsfonts,amssymb,amsmath,amsthm,amsopn,mathrsfs,mathtools,wasysym,bbm}	

\usepackage[capitalize,noabbrev]{cleveref}
\usepackage{enumitem}

\DeclareMathOperator*{\argmax}{argmax}
\DeclareMathOperator*{\argmin}{argmin}

\theoremstyle{plain}
\newtheorem{theorem}{Theorem}[section]

\theoremstyle{definition}
\newtheorem{definition}[theorem]{Definition}
\newtheorem{assumption}[theorem]{Assumption}
\newtheorem{remark}{Remark}

\newrefformat{fig}{Figure~\ref{#1}}
\newrefformat{tab}{Table~\ref{#1}}
\newrefformat{sec}{Section~\ref{#1}}
\newrefformat{alg}{Algorithm~\ref{#1}}
\newrefformat{property}{Property~\ref{#1}}
\newrefformat{theorem}{Theorem~\ref{#1}}
\newrefformat{definition}{Definition~\ref{#1}}
\newrefformat{corollary}{Corollary~\ref{#1}}
\newrefformat{lemma}{Lemma~\ref{#1}}
\newrefformat{conj}{Conjecture~\ref{#1}}
\newrefformat{def}{Definition~\ref{#1}}
\newrefformat{eq}{equation~(\ref{#1})}
\newrefformat{app}{Appendix~\ref{#1}}
\newrefformat{remark}{Remark~\ref{#1}}

\usepackage[textsize=tiny]{todonotes}


\title{Direct Preference-Based Evolutionary Multi-Objective Optimization with Dueling Bandits}

\author{%
  Tian Huang$^{1\dag}$, Shengbo Wang$^{1\dag}$, Ke Li$^2$\thanks{Correspondence: \texttt{k.li@exeter.ac.uk}; $^{\dag}$ Equal contributions.}\\
  $^1$ School of Computer Science and Engineering , \\ University of Electronic Science and Technology of China\\
  $^2$ Department of Computer Science,
  University of Exeter \\
  \texttt{tianhuang.uestc@gmail.com}\quad
  \texttt{shnbo.wang@foxmail.com} \quad
  \texttt{k.li@exeter.ac.uk} \\
}

\begin{document}
\maketitle

\begin{abstract}
    The ultimate goal of multi-objective optimization (MO) is to assist human decision-makers (DMs) in identifying solutions of interest (SOI) that optimally reconcile multiple objectives according to their preferences. Preference-based evolutionary MO (PBEMO) has emerged as a promising framework that progressively approximates SOI by involving human in the \textit{optimization-cum-decision-making} process. Yet, current PBEMO approaches are prone to be inefficient and misaligned with the DM’s true aspirations, especially when inadvertently exploiting mis-calibrated reward models. This is further exacerbated when considering the stochastic nature of human feedback. This paper proposes a novel framework that navigates MO to SOI by \textit{directly} leveraging human feedback without being restricted by a predefined reward model nor cumbersome model selection. Specifically, we developed a clustering-based stochastic dueling bandits algorithm that strategically scales well to high-dimensional dueling bandits. 
    The learned preferences are then transformed into a unified probabilistic format that can be readily adapted to prevalent EMO algorithms. This also leads to a principled termination criterion that strategically manages human cognitive loads and computational budget. Experiments on $48$ benchmark test problems, including the RNA inverse design and protein structure prediction, fully demonstrate the effectiveness of our proposed approach.

\end{abstract}


\section{Introduction}
\label{sec:introduction}

Multi-objective optimization (MO) represents a fundamental challenge in artificial intelligence~\cite{RussellN21}, with profound implications that span virtually every sector—from scientific discovery~\cite{ZhangZLXLLLMZJCSLMZH23} to engineering design~\cite{ErpsFKSGDSVM21}, and societal governance~\cite{Flecker22}. In MO, there is no single \textit{utopian solution} that optimizes all objectives; instead, the Pareto front (PF) comprises non-dominated solutions, each representing an efficient yet incomparable trade-off between objectives. The goal of MO is to assist human decision-makers (DMs) in identifying solutions of interest (SOI) that optimally reconcile these conflicting objectives according to their preferences. This field has been a subject of rigorous study within the multi-criterion decision-making (MCDM) community~\cite{Miettinen99} for more than half a century. Over the past two decades, we have witnessed a seamless transition from purely analytical methodologies to a burgeoning interest in interactive evolutionary meta-heuristics, known as preference-based evolutionary MO (PBEMO)~\cite{LiLDMY20}.

As in~\pref{fig:flowchart}(\textbf{a}), a PBEMO method involves three building blocks. The \texttt{optimization} module uses a population-based meta-heuristics to explore the search space. The preference information is progressively learned by periodically involving human DM in the \texttt{consultation} module to provide preference feedback. The learned preference representation is then transformed into the format that guides the evolutionary search progressively towards SOI in the \texttt{preference elicitation} module. The overall PBEMO process is DM-oriented and is an \textit{optimization-cum-decision-making} process. While PBEMO has been extensively studied in the literature in the past three decades (e.g.,~\cite{MiettinenM00,DebSKW10,BrankeGSZ15,LiCSY19,TomczykK20,LiLY23}), there are several fundamental issues unsolved, especially in the \texttt{consultation} and \texttt{preference elicitation} modules, that significantly hamper the further uptake in real-world problem-solving scenarios.

\begin{figure}[t!]
    \centering
    \includegraphics[width=1.0\linewidth]{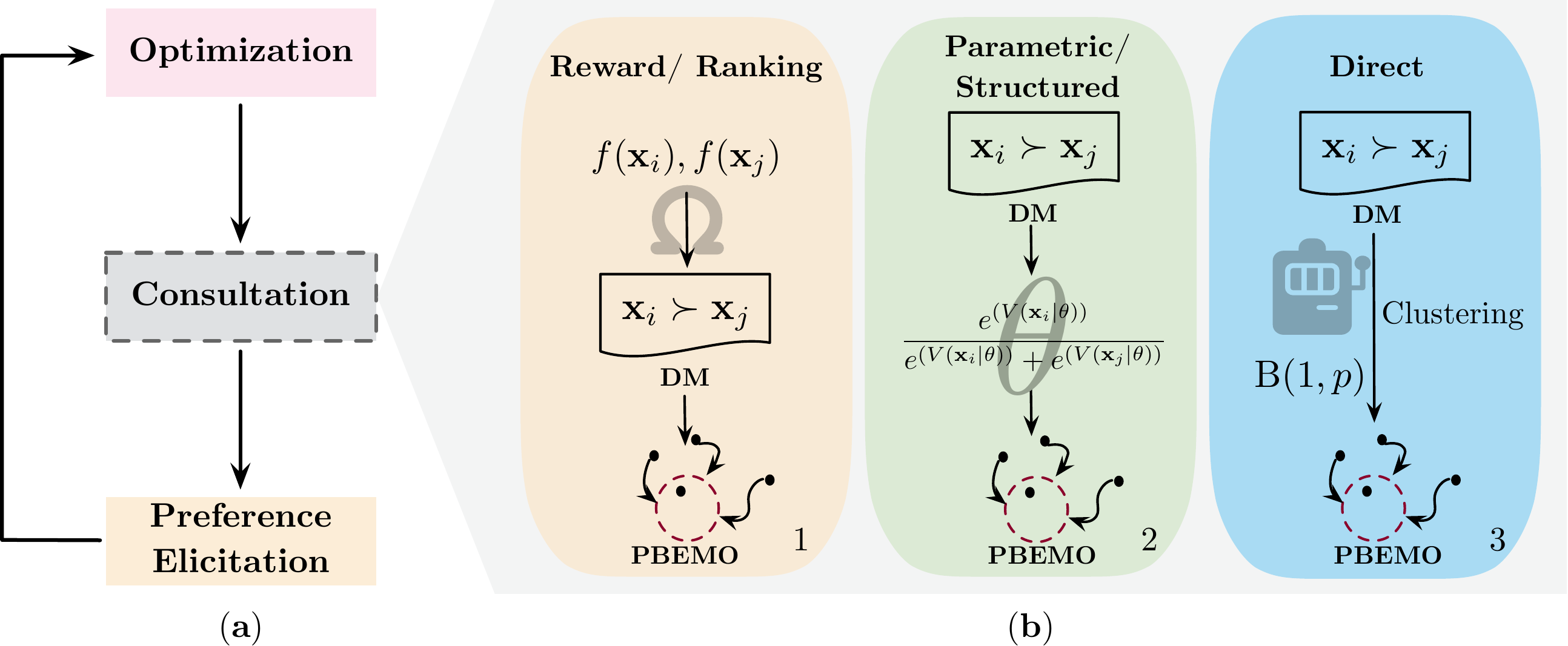}
    \caption{(\textbf{a}) Flow chart of a conventional PBEMO. (\textbf{b}) Conceptual illustration of reward-based, model-based, and direct preference learning strategies.}
\label{fig:flowchart}
\end{figure}

 \begin{itemize}
     \item The \texttt{consultation} module serves as the interface for DM interaction with the \texttt{optimization} module. It queries the DM about their preferences to effectively recommend solutions. This closely aligns with preference learning in the machine learning community. There are three strategies to accomplish this. The first, as shown in~\pref{fig:flowchart}(\textbf{b}-1), involves reward models~\cite{SahaG22-2} or ranking mechanisms~\cite{HoulsbyHHG12,HejnaS22,LiLY23}. In this approach, DMs often provide scores or rankings for a large set of solutions. However, this overburdens the DM and risks introducing errors into the optimization process due to overlooked human cognitive limitations. The other two strategies are grounded in the dueling bandits settings~\cite{YueJ11}, utilizing pairwise comparisons in consultation but with different assumptions about human feedback. Specifically, the second strategy relies on a parameterized transitivity model~\cite{BengsSH22} or a structured utility function for dueling feedback, such as the Bradley-Terry-Luce model~\cite{ChuG05} in~\pref{fig:flowchart}(\textbf{b}-2). While theoretically appealing, this method faces a challenge in model selection that can be as complex as the original problem, posing difficulties in practical applications. Different from the previous two strategies, the last one tackles human feedback as stochastic events such as Bernoulli trials~\cite{YanLCCVC22}, and it learns DM's preferences from their feedback as shown in~\pref{fig:flowchart}(\textbf{b}-3). However, a key bottleneck here is the souring number of DM queries required when considering a population of solutions in PBEMO.

     \item The \texttt{preference elicitation} module acts as a catalyst, transforming the preference information learned in the \texttt{consultation} module---usually not directly applicable---into a format usable in the underlying EMO algorithm. The stochastic nature of human feedback~\cite{AgranovO17} can lead to a misuse of learned preferences that adversely disturb search processes. This issue is pronounced in PBEMO contexts, where the number of consultations is constrained to reduce the human cognitive burden. Additionally, there is no thumb rule for determining the frequency of DM interactions or terminating such interactions, where current methods are often heuristics (e.g., setting a fixed number of interactions~\cite{LiCSY19}).
 \end{itemize}

In this paper, we propose a novel direct PBEMO framework (dubbed \our) that directly leverages DM's feedback to guide the evolutionary search for SOI. Note that it neither relies on any reward model nor cumbersome model selection. Our \our\ framework consists of two key features.
\begin{itemize}
    \item Given the stochastic nature of human feedback, we develop a novel clustering-based stochastic dueling bandits algorithm in the \texttt{consultation} module. It is model-free and its regret is $\mathcal{O}(K^2\log T)$, where $K$ is the number of clusters and $T$ is the number of rounds. This overcomes the challenge of substantial queries inherent in conventional dueling bandits~\cite{BengsBEH21}.

    \item The \texttt{preference elicitation} module transforms the learned preferences from the \texttt{consultation} module into a unified probabilistic format, in which the associated uncertainty represents the stochasticity involved in preference learning. This not only streamlines the incorporation of learned preferences into the \texttt{optimization} module to guide EMO to search for SOI, but also constitutes a principled termination criterion that strategically manages human cognitive burden and the computational budget.
\end{itemize}

\section{Preliminaries}
\label{preliminary}

\subsection{Multi-Objective Optimization Problem}

The MO problem is formulated as:
$\underset{\mathbf{x}\in\Omega}{\min}\ \mathbf{F}(\mathbf{x})={\big(f_1(\mathbf{x}),\ldots,f_m(\mathbf{x})\big)}^\top$, where $\mathbf{x}=(x_1,\ldots,x_n)^\top$ is an $n$-dimensional decision vector and $\mathbf{F}(\mathbf{x})$ is an $m$-dimensional objective vector whose $i$-th element is the objective mapping $f_i:\Omega\to\mathbb{R}$, where $\Omega$ is the feasible set in the decision space $\mathbb{R}^n$. Without considering the DM's preference information, given $\mathbf{x}^1,\mathbf{x}^2\in\Omega$, $\mathbf{x}^1$ is said to dominate $\mathbf{x}^2$ (denoted as $\mathbf{x}^1\preceq\mathbf{x}^2$) iff $\forall i\in\{1,\ldots,m\}$ we have $f_i(\mathbf{x}^1)\leq f_i(\mathbf{x}^2)$ and $\mathbf{F}(\mathbf{x}^1)\neq\mathbf{F}(\mathbf{x}^2)$. A solution $\mathbf{x}\in\Omega$ is said to be Pareto-optimal iff  $\nexists\mathbf{x}^\prime\in\Omega$ such that $\mathbf{x}^\prime\preceq\mathbf{x}$. The set of all Pareto-optimal solutions is called the Pareto-optimal set (PS) and their corresponding objective vectors constitute the PF.

The ultimate goal of MO is to identify the SOI from the PS satisfying DM's preference. It consists of two tasks: \ding{172} searching for Pareto-optimal solutions that cover SOI, and \ding{173} steering these solutions towards the SOI. PBEMO addresses the task \ding{172} by employing an EMO algorithm as a generator of an evolutionary population of non-dominated solutions $\mathcal{S}=\{\mathbf{x}^i\}_{i=1}^N$, striking a balance between convergence and diversity for coverage. For the task \ding{173}, PBEMO actively queries DM for preference information regarding these generated solutions, then it leverages the learned preferences to guide the EMO algorithm to approximate the SOI.

\subsection{Preference Learning as Dueling Bandits}
\label{sec:dueling bandit}
Since human feedback from relative comparisons is considerably more reliable than absolute labels \cite{RadlinskiKJ08}, we focus on pairwise comparisons as a form of indirect preference information. In PBEMO, a DM is asked to evaluate pairs of solutions $\langle\mathbf{x}^i,\mathbf{x}^j\rangle$ selected from $\mathcal{S}$, where $i,j\in\{1,\ldots,N\}$ and $i\neq j$. The DM's task is to decide, based on her/his preferences, whether $\mathbf{x}^i$ is better, worse, or equivalent to $\mathbf{x}^j$, denoted as $\mathbf{x}^i\succ_\mathrm{p}\mathbf{x}^j$, $\mathbf{x}^i\prec_\mathrm{p}\mathbf{x}^j$, or $\mathbf{x}^i\simeq_\mathrm{p}\mathbf{x}^j$. Regarding stochastic preference, there is a preference matrix for $K$-armed dueling bandits defined as $\mathrm{P}=[p_{i,j}]_{K\times K}$, where $p_{i,j}$ is the winning probability of the $i$-th arm over the $j$-th arm \cite{YueBKJ12}. In particular, we have $p_{i,j}+p_{j,i}=1$ with $p_{i,i}=0.5$. The $i$-th arm is said to be superior to the $j$-th one iff $p_{i,j}>0.5$. Simply considering each solution as an individual arm will yield $K = N$, which suffers efficiency in targeting the SOI when the evolutionary population is large \cite{LiMRZ20}. The ranking of all arms is determined by their Copeland scores, where the SOI should be the Copeland winners that have the biggest Copeland scores.


\begin{definition}[\cite{UrvoyCFN13}]
    The normalized Copeland score of the $i$-th arm, $i\in\{1,\ldots,K\}$, is given by:
    \begin{equation}
        \zeta_i = \frac{1}{K-1}\sum_{j\ne i,j\in\{1,\ldots,K\}}\mathbb{I}\left(p_{i,j}>0.5\right),
        \label{eq:Copeland_score}
    \end{equation}
    where $\mathbb{I}\left(\cdot\right)$ is an indicator function. Arm $k^\star$ satisfying $k^\star=\underset{i\in\{1,\ldots,K\}}{\argmax}\ \zeta_i$ is the Copeland winner.
\end{definition}

The goal of dueling bandits algorithm is to identify the Copeland winner among all candidate arms with no prior knowledge of $\mathrm{P}$. To this end, a winning matrix is introduced as $\mathrm{B}=[b_{i,j}]_{K\times K}$ to record the pairwise comparison labels, where $b_{i,j}$ is the number of time-slots when the $i$-th arm is preferred from pairs of $i$-th and $j$-th arms. Consequently, we can approximate the preference probability with mean $\tilde{p}_{i,j} = \frac{b_{i,j}}{b_{i,j} + b_{j,i}}$, whose upper confidence bound $u_{i,j}$ and lower confidence bound $l_{i,j}$ can be quantified as:
\begin{equation}
        u_{i,j}=\frac{b_{i,j}}{b_{i,j}+b_{j,i}}+\sqrt{\frac{\alpha\log t}{b_{i,j}+b_{j,i}}}, \quad l_{i,j}={\frac{b_{i,j}}{b_{i,j}+b_{j,i}}}-\sqrt{\frac{\alpha\log t}{b_{i,j}+b_{j,i}}},
\end{equation}
where $\alpha>0.5$ controls the confidence interval, and $t$ is the total number of comparisons so far. The performance of a dueling bandits algorithm is often evaluated by the regret defined as follows.

\begin{definition}
    The expected cumulative regret for a dueling bandits algorithm is given as:
    \begin{equation}
        R_T=\sum_{t=1}^T \frac{(\zeta^\star - \zeta^{\prime}(t)) + (\zeta^\star - \zeta^{\prime\prime}(t))}{2} =\zeta^\star T-\frac{1}{2}\sum_{t=1}^T \left(\zeta^{\prime}(t)+\zeta^{\prime\prime}(t)\right),
    \end{equation}
    where $T$ is the total number of pairwise comparisons, $\zeta^\prime(t)$ and $\zeta^{\prime\prime}(t)$ denote the pair to be compared at the $t$-th ($1\leq t\leq T$) round, $\zeta^\star$ represents the Copeland score of the Copeland winner. 
\end{definition}



\section{Proposed Method}
\label{sec:proposed method}

Our proposed \our\ framework follows the conventional PBEMO flow chart as in~\pref{fig:flowchart}(\textbf{a}). In the following paragraphs, we mainly focus on delineating the design of \our\ with regard to both \texttt{consultation} and \texttt{preference elicitation} modules, while leaving the design of the \texttt{optimization} module open.

\subsection{Consultation Module}
\label{sec:consultation_module}

As the interface by which the DM interacts with an EMO algorithm, the \texttt{consultation} module mainly aims to collect the DM's preference information from their feedback upon $\mathcal{S}$ to identify the SOI. We employ the stochastic dueling bandits ~\cite{ZoghiWM14}, to directly derive preferences from human feedback without relying on further assumptions such as contextual priors~\cite{LindnerTH022} or structured models~\cite{ChuG05}. In this setting, a natural choice is to consider each candidate solution as an arm to play. However, since the size of $\mathcal{S}$ is usually as large as over $100$ in the context of EMO, the conventional dueling bandits algorithms will suffer from a large amount of preference comparisons to converge~\cite{ZoghiWR15,LiMRZ20}. This is impractical in PBEMO when involving DM in the loop. To address this problem, we propose clustering-based stochastic dueling bandits algorithm that consist of the following three steps.

\paragraph{Step $1$: Partition $\mathcal{S}$ into $K$ subsets $\{\tilde{\mathcal{S}}^i\}_{i=1}^K$ based on solution features in the context of EMO.} 
Such partitioning is implemented as a clustering method based on the Euclidean distances between solutions of $\mathcal{S}$ in the objective space. Instead of viewing each solution as an individual arm, we consider each subset $\tilde{\mathcal{S}}^i$ as an arm in our proposed dueling bandits. We denote the solution-level preference matrix as $\mathrm{P}_s = [p^s_{i,j}]_{N\times N}$, where $p^s_{i,j}$ represents the probability that $\mathbf{x}^i\succ_\mathrm{p}\mathbf{x}^j$. Then, the preference matrix $\mathrm{P}$ in the subset-level can be calculated by $p_{i,j}=\frac{1}{\vert\tilde{\mathcal{S}}^i\vert\vert\tilde{\mathcal{S}}^j\vert} \sum_{ \mathbf{x}^u \in \tilde{\mathcal{S}}^i} \sum_{ \mathbf{x}^v \in \tilde{\mathcal{S}}^j} p^s_{u,v}$, where $\vert \tilde{\mathcal{S}}^i \vert$ stands for the size of $\tilde{\mathcal{S}}^i$. This probability is well-defined since it satisfies $p_{i,j} + p_{j,i} = 1$. So far, we have reformulated a subset-level dueling bandits problem. Accordingly, the subset-level Copeland winner is the subset $\tilde{\mathcal{S}}^\star$ that beats others on average.


\paragraph{Step $2$: Subset-level dueling sampling and solution-level pairwise comparisons.}
We employ the double Thompson sampling algorithm \cite{WuL16} to determine the subset pairs, and then select solutions from the pairs to query DM preferences. We introduce two vectors $\mathbf{v} =(v_1,\dots,v_N)^\top$ and $\bm{\ell} =(\ell_1,\dots,\ell_N)^\top$ to record the winning and losing times of each solution respectively, initialized by $v_i=0$, $\ell_i=0$, $i=\{1,\dots,N\}$. We perform the following steps within a given budget $T$.
\begin{enumerate}[leftmargin=1.5cm]
    \item[\textbf{Step $2.1$:}] \textbf{Determine the subset $\tilde{\mathcal{S}}^\prime$ that most likely covers the SOI.} We first narrow candidates to the subsets having the highest upper confidence Copeland scores, denoted as $\mathcal{C}^1=\{\tilde{\mathcal{S}}^i|i={\argmax}_i\ \tilde\zeta_i\}$, where $\tilde{\zeta}_i=\frac{1}{K-1}\sum_{j\ne i}\mathbb{I}(u_{i,j}>0.5)$, $i,j\in\{1,\ldots,K\}$. Then, $\forall\tilde{\mathcal{S}}^i\in\mathcal{C}^1$, we apply Thompson sampling as $\theta_{i,j}^{(1)}\sim\mathrm{Beta}(b_{i,j}+1, b_{j,i}+1)$ to sample the winning probability of $\tilde{\mathcal{S}}^i$ over other subsets $\tilde{\mathcal{S}}^j$, where $j\in\{1,\ldots,K\}$ and $j\ne i$. Finally, we apply the majority voting strategy to determine the candidate by $\tilde{\mathcal{S}}^\prime\leftarrow \argmax_{\tilde{\mathcal{S}}^i\in\mathcal{C}^1}\sum_{j\ne i}\mathbb{I}(\theta_{i,j}^{(1)}>0.5)$, where ties are broken randomly.

    \item[\textbf{Step $2.2$:}] \textbf{Select the subset $\tilde{\mathcal{S}}^{\prime\prime}$ that can be potentially preferred over $\tilde{\mathcal{S}}^\prime$.} To promote exploration, we narrow candidates to the subsets whose lower-confident winning probability over $\tilde{\mathcal{S}}^\prime$ is at most $0.5$, denoted as $\mathcal{C}^2=\{\tilde{\mathcal{S}}^i| l_{i,\prime} \leq 0.5 \}$. Note that $\tilde{\mathcal{S}}^\prime\in\mathcal{C}^2$ because $l_{\prime,\prime} \leq p_{\prime,\prime}=0.5$. Then, $\forall\tilde{\mathcal{S}}^i\in\mathcal{C}^2$, we apply Thompson sampling as $\theta_{i,\prime}^{(2)}\sim \mathrm{Beta}(b_{i,\prime}+1, b_{\prime,i}+1)$ to sample the winning probability of $\tilde{\mathcal{S}}^i$ over $\tilde{\mathcal{S}}^{\prime}$, and fix $\theta_{\prime,\prime}^{(2)} = 0.5$ according to the definition of preference matrix. Finally, the candidate is determined by $\tilde{\mathcal{S}}^{\prime\prime}\leftarrow\argmax_{\tilde{\mathcal{S}}^i\in\mathcal{C}^2}\theta_{i,\prime}^{(2)}$.

    \item[\textbf{Step $2.3$:}] \textbf{Select two representative solutions $\mathbf{x}^\prime\in\tilde{\mathcal{S}}^\prime$ and $\mathbf{x}^{\prime\prime}\in\tilde{\mathcal{S}}^{\prime\prime}$ to query DM.} We conduct uniform sampling to obtain solutions from the \textit{least-informative} perspective. The DM is asked to evaluate the pair of solutions $\langle\mathbf{x}^\prime,\mathbf{x}^{\prime\prime}\rangle$. If we observe $\mathbf{x}^\prime\succ_\mathrm{p}\mathbf{x}^{\prime\prime}$, we update $b_{\prime,\prime\prime} \leftarrow b_{\prime,\prime\prime} + 1$, $v_{\prime} \leftarrow v_{\prime} + 1$, and $\ell_{\prime\prime} \leftarrow\ell_{\prime\prime} + 1$, and vice versa. Note that other strategies to obtain solutions $\mathbf{x}^\prime$ and $\mathbf{x}^{\prime\prime}$ can be used to improve the query efficiency.
\end{enumerate}

\paragraph{Step $3$: Output the learned preferences.}
The output is a triplet $\{\tilde{\mathcal{S}}^\star,\mathbf{v},\bm{\ell}\}$, where $\tilde{\mathcal{S}}^\star=\argmax_{\tilde{\mathcal{S}}^i}\ \tilde\zeta_i$ is the optimal subset that most likely covers SOI, and candidate solutions are considered to be the SOI with uncertainty encoded by their winning and losing times.

The pseudo codes of the above algorithmic implementation are detailed in~\pref{app:step}. \pref{fig:subset_selection} gives an illustrative example of the consultation process.

\begin{figure}[t!]
    \centering
    \definecolor{R1}{HTML}{EA835C}
    \definecolor{R2}{HTML}{0062D2}
    \definecolor{R3}{HTML}{59B24A}
    \includegraphics[width=1.0\linewidth]{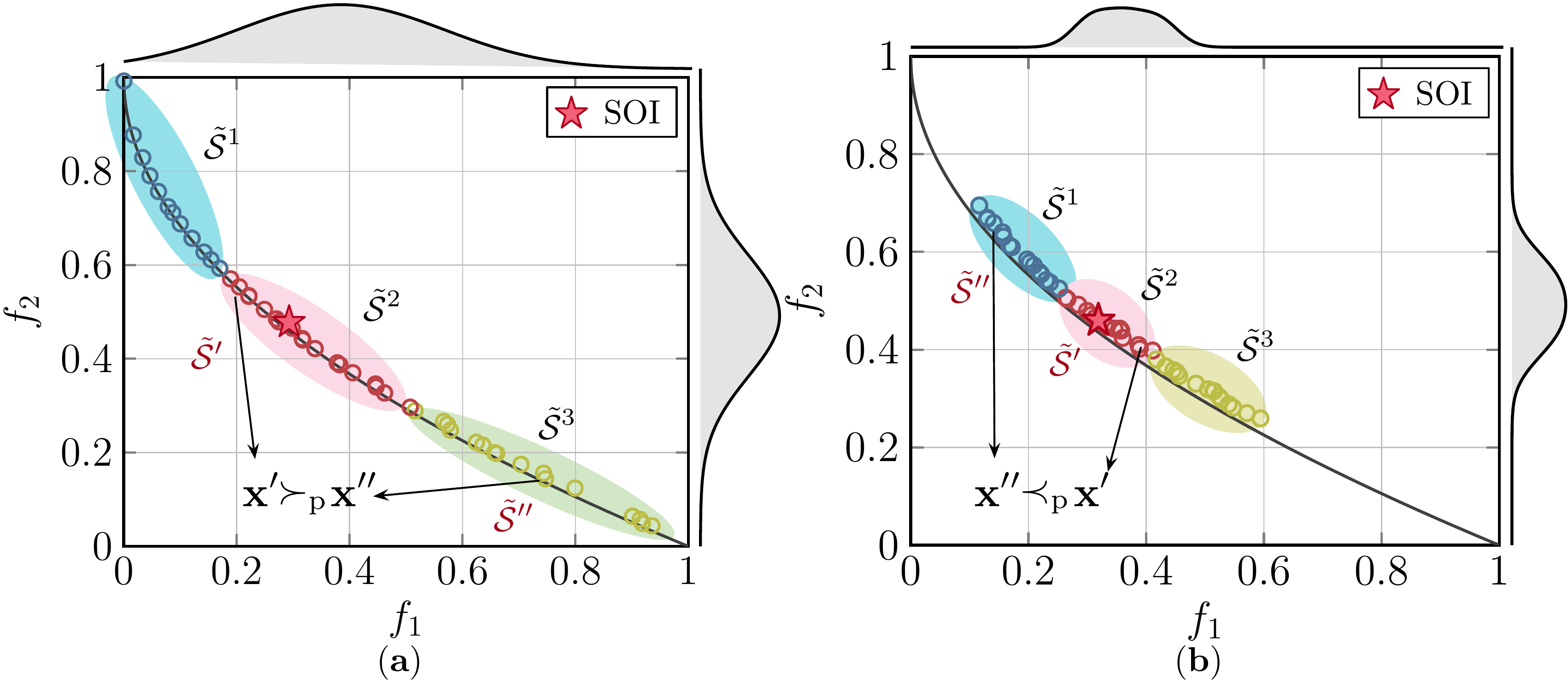}
    \caption{\textbf{(a)} The evolutionary population of an EMO algorithm is divided into three subsets, where $\tilde{\mathcal{S}}^{2}$ covers the SOI (denoted as a \textcolor{red}{$\star$}). \textbf{(b)} After a PBEMO round, in the next \texttt{consultation} session, all solutions are steered towards the SOI and their spreads become more tightened towards the SOI.}
    
\label{fig:subset_selection}
\end{figure}

\begin{remark}
    \textit{PBEMO is a optimization-cum-decision-making process. Instead of having a set of Pareto-optimal candidate solutions upfront, PBEMO starts with a coarse-grained representation of the PF. Then, it gradually steers incumbent solutions towards the learned SOI, which may be inaccurate initially. Subsequent consultations then serve as a refinement process. In this context, different from the dueling bandits, which are designed for identifying the single best solution in each round, our proposed method aims to recognize the SOI with a progressively refined fidelity.}
    \label{remark:bandits}
\end{remark}

\begin{remark}
    \textit{Based on the~\pref{remark:bandits}, we intend to explore the dependency among solutions~\cite{PandeyCA07,SuiZHY18}. This involves performing pairwise comparisons for solution groups, rather than for a single candidate, to enable more efficient interaction. Consequently, in the preference elicitation stage, it becomes essential not only to rely on the learned preference but also to consider the uncertainty introduced by the coarse-grained representation.}
\end{remark}

\textbf{Theoretical Analysis.} We present a rigorous analysis of the regret bound of our proposed algorithm. To this end, we introduce the following two assumptions derived from the current literature.

\begin{assumption}[\cite{WuL16}]
    The winning probability between two arms satisfies $p_{i,j}\ne 0.5$, $\forall i\ne j$.
    \label{assumption:tieprobability}
\end{assumption}

\begin{assumption}[Tight clustering \cite{JedorPL19}]
    All solutions in $\tilde{\mathcal{S}}^\star$ are the Copeland winners over other solutions in the sub-optimal subsets. 
    \label{assumption:tight}
\end{assumption}

\begin{theorem}
    Under the Assumptions \ref{assumption:tieprobability} and \ref{assumption:tight}, for any $\epsilon \in (0,1]$ and $\alpha>0.5$, the regret of our clustering-based stochastic dueling bandits algorithm is bounded by:
    \begin{align*}
        R_T(T) \leq  \sum_{i\ne j,~\underline{p}_{i,j}<0.5} \bigg(& (1+\epsilon) \frac{\log T}{D_\mathrm{KL}(\underline{p}_{i,j}~\Vert~0.5)} + \frac{4\alpha\log T}{( \underline{p}_{i,j} -0.5)^2 }\bigg) +\mathcal{O}\left(\frac{K^2}{\epsilon^2}\right).
    \end{align*}
    \label{thm:regret}
\end{theorem}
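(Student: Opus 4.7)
The plan is to reduce the problem to a standard $K$-armed dueling bandits regret analysis at the subset level, then adapt the Double Thompson Sampling (DTS) bound of Wu and Liu, on which Steps 2.1--2.2 are directly modelled. Under Assumption~\ref{assumption:tight}, any comparison between two solutions drawn from $\tilde{\mathcal{S}}^\star\times\tilde{\mathcal{S}}^\star$ contributes zero instantaneous regret, so the within-subset uniform sampling in Step 2.3 can be ignored and regret accumulates only when at least one of $\tilde{\mathcal{S}}',\tilde{\mathcal{S}}''$ is suboptimal. Writing $N_{i,j}(T)$ for the number of rounds in which $(\tilde{\mathcal{S}}^i,\tilde{\mathcal{S}}^j)$ is queried and $\Delta_{i,j}\leq 1$ for the corresponding per-round Copeland gap,
\[
R_T \leq \sum_{(i,j):\,\tilde{\mathcal{S}}^i\neq\tilde{\mathcal{S}}^\star\text{ or }\tilde{\mathcal{S}}^j\neq\tilde{\mathcal{S}}^\star} \Delta_{i,j}\,\mathbb{E}[N_{i,j}(T)],
\]
so it suffices to bound $\mathbb{E}[N_{i,j}(T)]$ for every suboptimal pair.

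Next I would split each selection event according to how the two subsets are chosen. When $\tilde{\mathcal{S}}'$ selected in Step 2.1 is suboptimal, the UCB filter onto $\mathcal{C}^1$ together with the Beta-Thompson draw $\theta^{(1)}_{i,j}\sim\mathrm{Beta}(b_{i,j}+1,b_{j,i}+1)$ lets me invoke the Agrawal-Goyal-style concentration that underlies the DTS proof: for any $\epsilon\in(0,1]$, the expected count of such rounds is at most $(1+\epsilon)\log T / D_{\mathrm{KL}}(\underline{p}_{i,j}\,\Vert\,0.5) + \mathcal{O}(1/\epsilon^2)$, where $\underline{p}_{i,j}$ is the effective winning probability induced by the cluster structure. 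When $\tilde{\mathcal{S}}'=\tilde{\mathcal{S}}^\star$ but $\tilde{\mathcal{S}}''$ selected in Step 2.2 is suboptimal, the LCB filter onto $\mathcal{C}^2$ retains only arms with $l_{i,\star}\leq 0.5$, so a Chernoff-Hoeffding bound on the confidence width $\sqrt{\alpha\log t/(b_{i,\star}+b_{\star,i})}$ yields the UCB-type bound $4\alpha\log T / (\underline{p}_{i,\star}-0.5)^2$ on the expected count. Assumption~\ref{assumption:tieprobability} rules out degenerate ties so that $\Delta_{i,j}$ is strictly positive and these inequalities are non-vacuous.

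Aggregating over all suboptimal ordered pairs recovers the two logarithmic sums in the theorem. The $\mathcal{O}(1/\epsilon^2)$ residuals, summed across the $\Theta(K^2)$ pairs that can ever enter $\mathcal{C}^1\cup\mathcal{C}^2$, produce the additive $\mathcal{O}(K^2/\epsilon^2)$ remainder.

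The main obstacle will be transporting the Thompson sampling analysis to the subset-induced preference matrix: one must verify that the posterior $\mathrm{Beta}(b_{i,j}+1,b_{j,i}+1)$ concentrates at a probability whose deviation from $0.5$ is controlled by $\underline{p}_{i,j}$ rather than a round-dependent aggregate of the solution-level $p^s_{u,v}$. This requires a careful coupling between the uniform within-subset sampling of Step 2.3 and the binomial counts $b_{i,j}$, and leans crucially on the tight-clustering assumption to guarantee $\underline{p}_{\star,i}>0.5$ uniformly for every suboptimal $i$, so that the subset-level Copeland winner is indeed identifiable. Without Assumption~\ref{assumption:tight} the analysis would have to track intra-cluster heterogeneity, which is not reflected in the stated bound.
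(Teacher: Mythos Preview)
Your overall reduction to a $K$-armed subset-level problem and appeal to the Wu--Liu DTS analysis is the same route the paper takes, and you correctly flag the central difficulty: the Bernoulli feedback driving $b_{i,j}$ is \emph{non-stationary}, since each round compares a fresh uniform draw of solutions within the chosen subsets, so the posterior $\mathrm{Beta}(b_{i,j}+1,b_{j,i}+1)$ is not tracking a fixed parameter. However, you leave this obstacle as an open ``careful coupling'' without giving the mechanism, and this is exactly where the paper's argument does its work. The paper builds an \emph{auxiliary stationary} dueling problem whose preference matrix is $\underline{p}_{i,j}$ (the winning probability of the weakest solution in $\tilde{\mathcal{S}}^i$ against the strongest in $\tilde{\mathcal{S}}^j$), runs a virtual Thompson sampler on it, and shows that the real samples $\theta_{i,j}$ stochastically dominate the virtual samples $\theta'_{i,j}$ in expectation. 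Assumption~\ref{assumption:tight} is used not only to guarantee $\underline{p}_{\star,i}>0.5$ as you say, but to ensure the Copeland winner is \emph{unchanged} when passing from $p_{i,j}$ to $\underline{p}_{i,j}$, which is what licenses transporting the play-count bounds back from the auxiliary to the original problem.

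Your attribution of the two logarithmic terms is also off. In both the paper and Wu--Liu, the KL term and the $4\alpha\log T/(\underline{p}_{i,j}-0.5)^2$ term both come from bounding the \emph{second-arm} selection in Step~2.2, conditioned on the first arm being $\tilde{\mathcal{S}}^i$ and split according to whether $\underline{p}_{j,i}<0.5$ (Thompson concentration $\Rightarrow$ KL) or $\underline{p}_{j,i}>0.5$ (LCB filter $\Rightarrow$ gap-squared). The first-arm selection is not handled by a separate KL argument as you propose; it is absorbed into these pair counts, with the self-comparison case $\tilde{\mathcal{S}}'=\tilde{\mathcal{S}}''$ dispatched by the $\mathcal{O}(K)$ bound of Lemma~3 in Wu--Liu. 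Your Step~2.1 versus Step~2.2 split does not directly line up with the summation index $\underline{p}_{i,j}<0.5$ in the theorem statement.
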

Definitions of $\underline{p}_{i,j}$ and proof can be found in~\cref{app:proof}.

\begin{remark}
    \textit{\cref{thm:regret} reveals that the expected regret of our algorithm is bounded by $\mathcal{O}(K^2\log T)$. Compared with the dueling bandits algorithms based on Thompson sampling and their extensions to large arms~\cite{WuL16,LiMRZ20}, whose regret is bounded by $\mathcal{O}(N^2\log T)$, our proposed clustering-based stochastic dueling bandits algorithm is more efficient in searching for the SOI, given $K\ll N$}.
\end{remark}

\subsection{Preference Elicitation Module}
\label{sec:preference_elicitation_module}

The \texttt{preference elicitation} module plays as a bridge that connects consultation and optimization. It transforms the preference learned from the \texttt{consultation} module into the configurations used in the underlying EMO algorithm, thus steering the EMO process progressively towards the SOI. In addition, this module maintains the preference information to inform a strategic termination criterion of consultation, thus reduces DM's workload. We present the design of this module by addressing the following three questions.

\textbf{How to leverage the preference learned from the consultation session?} Let us assume the DM's feedback collected in the \texttt{consultation} session is drawn from a preference distribution as the density ratio $p_v(\tilde{\mathbf{x}})/p_\ell(\tilde{\mathbf{x}})$, where $p_v(\tilde{\mathbf{x}})$ and $p_\ell(\tilde{\mathbf{x}})$ is respectively the winning and losing probability of a solution $\tilde{\mathbf{x}}$ sampled from the PS, see an illustrative example in~\pref{fig:density_ratio}(\textbf{a}). After the $\tau$-th round of the \texttt{consultation} session, we perform density-ratio estimation based on $\mathbf{v}$ and $\bm{\ell}$. By using moment matching techniques (see~\pref{app:density_ratio}), we obtain a Gaussian distribution with mean $\tilde{\mathbf{x}}^\ast_\tau$ and covariance ${\tilde{\Sigma}_\tau}$. Let $\tilde{\sigma}_\tau$ take the largest value of diagonal elements of $\tilde{\Sigma}_\tau$. The \texttt{preference elicitation} module maintains a Gaussian mixture distribution by a convex combination of Gaussian distributions from multiple consultation sessions:
\begin{equation}
    \widetilde{\Pr}(\mathbf{x})=\frac{1}{Z}\sum_{\tau=1}^{N_\mathrm{consult}}\frac{1}{\tilde{\sigma}_\tau}\mathcal{N}(\mathbf{x}~|~\tilde{\mathbf{x}}_\tau^\ast,\tilde{\Sigma}_\tau),
    \label{eq:preference_distribution}
\end{equation}
where $\mathbf{x}\in\Omega$, $N_\mathrm{consult}$ is the total number of consultation sessions conducted so far, and $Z=\sum_{\tau=1}^{N_\mathrm{consult}}1/\tilde{\sigma}_\tau$ is the normalization term. \pref{fig:density_ratio}(\textbf{b}) gives an illustrative example of a Gaussian distribution approximated by the DM's feedback collected at one consultation session.

\begin{figure}[t!]
        \centering
        \definecolor{mygreen}{RGB}{188,189,79}
        \definecolor{myblue}{RGB}{165,223,231}
        \definecolor{mygray}{RGB}{149,149,149}
        \definecolor{myyellow}{RGB}{243,200,70}
        \definecolor{myred}{RGB}{225,182,181}
        \includegraphics[width=1.0\linewidth]{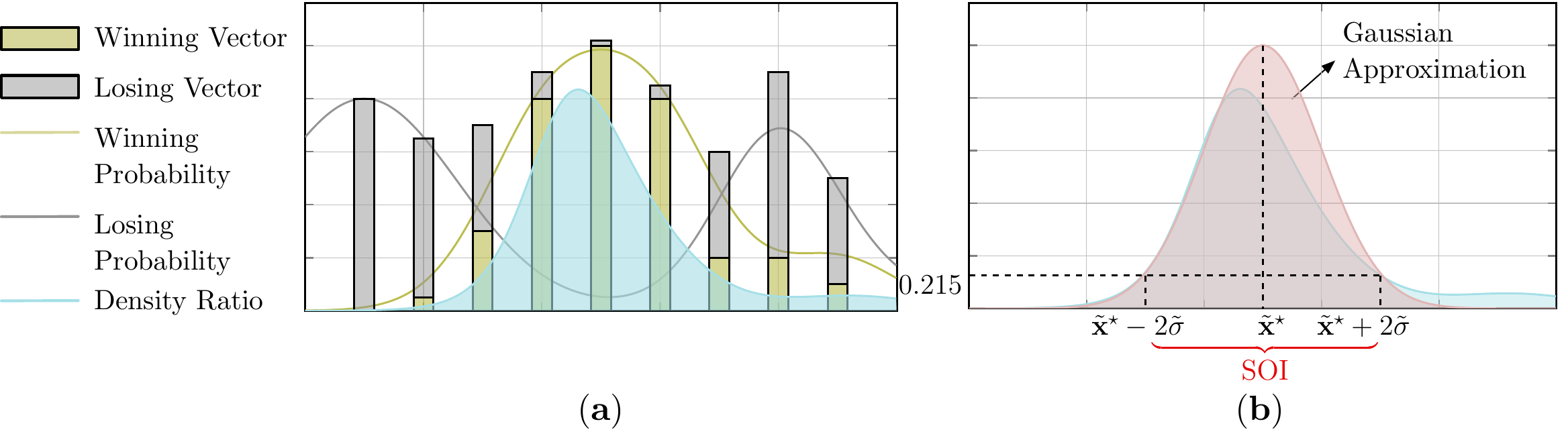}
        \caption{The density ratio between $p_\nu(\tilde{\mathbf{x}})$ and  $p_\ell(\tilde{\mathbf{x}})$ is shaded in \textcolor{myblue}{blue}, while its estimation is shaded in \textcolor{red!60!white}{red}. The SOI falls within the estimated Gaussian distribution for $95\%$ confidence interval.}
        \label{fig:density_ratio}
\end{figure}

\textbf{How to adapt $\widetilde{\Pr}(\mathbf{x})$ to EMO algorithms?} We believe $\widetilde{\Pr}(\mathbf{x})$ can be applied in any EMO algorithms with few adaptation in their environmental selection. For proof-of-concept purpose, this paper takes NSGA-II~\cite{DebPAMT02} and MOEA/D~\cite{ZhangL07}, two most popular algorithms in the EMO literature, as examples and we design two \our\ instances, dubbed \texttt{D-PBNSGA-II} and \texttt{D-PBMOEA/D}.
\begin{itemize}
    \item At each generation of the original NSGA-II, it first uses non-dominated sorting to divide the combination of parents and offspring into several non-domination fronts $F_1,\ldots,F_l$. Starting from $F_1$, one front is selected at a time to construct a new population, until its size equals to or exceeds the limit. The exceeded solutions in the last acceptable front will be eliminated according to the crowding distance metric to maintain population diversity. In \texttt{D-PBNSGA-II}, we replace the crowding distance with $\widetilde{Pr}(\mathbf{x})$. As a result, the solutions close to the SOI will survive to the next generation.

    \item The basic idea of the classic MOEA/D is to decompose the original MOP into a set of subproblems using weight vectors. Then, these subproblems are tackled collaboratively using population-based meta-heuristics. In \texttt{D-PBMOEA/D}, we progressively transform the originally uniformly distributed weight vectors $W = \{\mathbf{w}^i\}_{i=1}^N$ used in the original MOEA/D to the preference distribution $\widetilde{\Pr}(\mathbf{x})$. In practice, the transformed weight vector is $\mathbf{w}^{i\prime}=\widetilde{\Pr}^{-1}(\mathbf{w}^i)$, where $\widetilde{\Pr}^{-1}(\cdot)$ is the weighted sum of inverse Gaussian distribution, defined in~\pref{eq:weight_vector_shift}. 
\end{itemize}
Detailed implementation of \texttt{D-PBNSGA-II} and \texttt{D-PBMOEA/D} are in~\pref{app:step}.

\textbf{When to stop querying DM?} There is no principled termination criterion in exiting PBEMO, but is often set as a pre-defined number of consultation sessions. This is not rationale and likely to incur unnecessary workloads to DM, even when the evolutionary population is either converged to the SOI or being trapped by local optima. Under our \our\ framework, we have the following theoretical result about the convergence property of $\widetilde{\Pr}(\mathbf{x})$.
\begin{theorem}
    Assume the preference distribution $\widetilde{\Pr}(\mathbf{x})$ around the SOI follows a Gaussian mixture distribution. It becomes stable when $N_\mathrm{consult}$ increases.
    \label{thm:convergence}
\end{theorem}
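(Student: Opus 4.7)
The plan is to interpret ``stability'' as Cauchy convergence of the sequence of mixture densities $\widetilde{\Pr}_N(\mathbf{x})$ (with $N=N_\mathrm{consult}$) in $L^1$ or total variation: $\|\widetilde{\Pr}_{N+1}-\widetilde{\Pr}_N\|_{L^1}\to 0$ as $N\to\infty$. The intuition is that successive consultation rounds refine the density-ratio estimate around the SOI, producing Gaussian components whose means $\tilde{\mathbf{x}}^\ast_\tau$ converge to the true SOI $\mathbf{x}^\ast$, whose covariances $\tilde{\Sigma}_\tau$ shrink, and whose mixture weights $1/\tilde{\sigma}_\tau$ grow, so that the later components dominate and concentrate near $\mathbf{x}^\ast$.

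First, I would invoke \pref{thm:regret}. Since the expected regret is $\mathcal{O}(K^2\log T)$, the clustering-based dueling bandits algorithm identifies the Copeland-winner subset $\tilde{\mathcal{S}}^\star$ with probability tending to one as $T\to\infty$, and the recorded winning/losing counts $(\mathbf{v},\bm{\ell})$ concentrate around their expectations under the true preference matrix. Together with the consistency of the moment-matching density-ratio estimator (referenced through \pref{app:density_ratio}), this yields $\tilde{\mathbf{x}}^\ast_\tau\to\mathbf{x}^\ast$ almost surely as $\tau$ grows.

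Second, I would show that $\tilde{\sigma}_\tau\to 0$, or equivalently that $w_\tau:=1/\tilde{\sigma}_\tau\to\infty$. By Assumption~\ref{assumption:tight} and the EMO updating rule (see \pref{fig:subset_selection}), each consultation steers the incumbent population toward $\mathbf{x}^\ast$, so the Euclidean spread of solutions in $\tilde{\mathcal{S}}^\star$ decreases monotonically in expectation. Since the moment-matched covariance inherits this contraction from the sample support, the largest diagonal entry $\tilde{\sigma}_\tau$ vanishes. Consequently the partial sums $Z_N=\sum_{\tau=1}^{N}w_\tau$ diverge, and the normalized weights $w_\tau/Z_N$ concentrate on later indices, i.e., the mixture asymptotically gives negligible mass to the early, coarser components.

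Third, I would conclude by a tail-dominance argument. Split
\begin{equation*}
\widetilde{\Pr}_{N+1}(\mathbf{x})-\widetilde{\Pr}_{N}(\mathbf{x})=\frac{w_{N+1}}{Z_{N+1}}\mathcal{N}(\mathbf{x}\,|\,\tilde{\mathbf{x}}_{N+1}^\ast,\tilde{\Sigma}_{N+1})+\Bigl(\tfrac{1}{Z_{N+1}}-\tfrac{1}{Z_N}\Bigr)\sum_{\tau=1}^{N}w_\tau\,\mathcal{N}(\mathbf{x}\,|\,\tilde{\mathbf{x}}_\tau^\ast,\tilde{\Sigma}_\tau).
\end{equation*}
Both terms on the right-hand side are convex combinations of Gaussian densities centered near $\mathbf{x}^\ast$ with vanishing bandwidth; integrating the absolute value in $\mathbf{x}$ and using $|Z_{N+1}^{-1}-Z_N^{-1}|=w_{N+1}/(Z_N Z_{N+1})$ shows the $L^1$ increment is bounded by $2w_{N+1}/Z_{N+1}$ plus a vanishing term involving $\|\tilde{\mathbf{x}}_\tau^\ast-\mathbf{x}^\ast\|$ and $\tilde{\Sigma}_\tau$, both shrinking. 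Summability/boundedness of this sequence yields Cauchyness and thus stability.

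The main obstacle is the second step: rigorously quantifying the rate at which $\tilde{\sigma}_\tau$ contracts. This requires coupling the preference-learning convergence (governed by \pref{thm:regret}) to the dynamics of the underlying EMO algorithm, which is stochastic and problem-dependent. I expect the cleanest route is to impose (or inherit from Assumption~\ref{assumption:tight}) a mild monotonicity of the population diameter across consultations, so that $\tilde{\sigma}_\tau$ admits an explicit envelope and the Cauchy criterion above can be closed without further algorithm-specific assumptions.
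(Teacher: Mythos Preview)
Your approach diverges substantially from the paper's, and the central step goes in the wrong direction. The paper does not attempt a Cauchy-in-$L^1$ argument; it argues by two extreme cases. In case~(i) the population distribution stops changing after some $N_u$, so every later consultation produces the same Gaussian, and a direct computation shows the mixture converges to that fixed Gaussian. In case~(ii) the EMO population shrinks into an ever-smaller region around the SOI, and the paper claims this drives $\tilde{\sigma}_\tau$ \emph{large}, not small: once all candidate solutions sit near the SOI, they win and lose with nearly equal probability, the density ratio $p_v/p_\ell$ is essentially flat, and the moment-matched Gaussian therefore has large variance. The weight $1/\tilde{\sigma}_\tau$ of each new component then tends to zero, so later consultations contribute negligibly and the mixture stabilizes. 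General behaviour is asserted to lie between these two extremes.

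Your second step asserts the opposite mechanism: you claim $\tilde{\sigma}_\tau\to 0$ because ``the moment-matched covariance inherits this contraction from the sample support,'' so that later weights $w_\tau=1/\tilde{\sigma}_\tau$ diverge and dominate the mixture. This conflates the spread of the population with the variance of the fitted density-ratio Gaussian; per \pref{app:density_ratio}, $\tilde{\Sigma}_\tau$ is meant to capture the shape of $p_v/p_\ell$, which flattens (large variance) precisely when the population concentrates near the SOI. Under the paper's reading, your tail-dominance argument collapses: the bound $2w_{N+1}/Z_{N+1}$ goes to zero because $w_{N+1}\to 0$ while $Z_N$ stays bounded below, not because $Z_N\to\infty$. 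Even on your own terms, the step is incomplete: if $w_\tau\to\infty$ geometrically (nothing rules this out), $w_{N+1}/Z_{N+1}$ need not vanish, so the increment bound you derive does not by itself give Cauchyness. The gap you flag at the end is therefore not merely a matter of rate but of sign, and fixing it requires revisiting what $\tilde{\sigma}_\tau$ actually measures.
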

The proof of \pref{thm:convergence} is in \pref{app:converge}. In \our, we apply \pref{thm:convergence} to adaptively terminate the \texttt{consultation} session when $\widetilde{\Pr}(\mathbf{x})$ becomes stable. In practice, this happens when the Kullback–Leibler (KL) divergence of $\widetilde{\Pr}(\cdot)$ between two consecutive consultation sessions is smaller than a threshold $\varepsilon$:
\begin{equation}
    D_\mathrm{KL}\left(\widetilde{\Pr}_{\tau-1}~\Vert~\widetilde{\Pr}_\tau\right)=\sum_{i=1}^{N_\mathrm{s}}\widetilde{\Pr}_{\tau-1}\left(\hat{\mathbf{x}}_i\right)\frac{\log\widetilde{\Pr}_{\tau-1}\left(\hat{\mathbf{x}}_i\right)}{\log\widetilde{\Pr}_\tau\left(\hat{\mathbf{x}}_i\right)},
\end{equation}
where $1<\tau\leq N_\mathrm{consult}$, $\hat{\mathbf{x}}_i$ is sampled from $\tilde{\mathcal{S}}^\star$. $N_\mathrm{s}$ is the number of samples when calculating the KL divergence. Here we use $\varepsilon=10^{-3}$, and its parameter sensitivity is studied in~\pref{sec:parameters}.





\section{Experiments}
\label{sec:experiments}

\subsection{Experimental Setup}

This section outlines some key experimental setup including benchmark test problems and performance metrics. More detailed information can be found in~\cref{app:experimental_setting}.

\paragraph{Benchmark problems}
\label{sec:benchmark}

Our experiments considers $33$ \textit{synthetic test instances} including ZDT1 to ZDT4 and ZDT6~\cite{ZitzlerDT00} ($m=2$), DTLZ1 to DTLZ6~\cite{DebTLZ05} where $m=\{3,5,8,10\}$, and WFG1, WFG3, WFG5, and WFG7~\cite{HubandHBW06} ($m=3$). These problems are with various PF shapes and challenges such as bias, plateau, and multi-modal. In addition, we also consider two \textit{scientific discovery problems} including $10$ two-objective RNA inverse design tasks~\cite{Taneda10,ZhouDL0M023} and $5$ four-objective protein structure prediction (PSP) task~\cite{ZhangGLXC23}. The problem formulations are detailed in~\cref{app:problems}.

\paragraph{Performance metrics}
\label{sec:metrics}

As discussed in~\cite{LiDY18}, quality assessment of non-dominated solutions is far from trivial when considering DM's preference information regarding conflicting objective. In our experiments, we consider two metrics to serve this purpose. One is approximation accuracy $\epsilon^\star(\mathcal{S})$ that evaluates the closest distance of $\mathbf{x}\in\mathcal{S}$ regarding the DM preferred solution in the objective space, denoted as \lq golden point\rq\ $\mathbf{z}^\star\in\mathbb{R}^m$, while the other is average accuracy $\bar{\epsilon}(\mathcal{S})$ that evaluates the average distance to $\mathbf{z}^\star$. Note that $\mathbf{z}^\star$ is unknown to the underlying algorithm in practice, and the corresponding settings used in our experiments are in~\cref{app:parameters}. Due to the stochastic nature of evolutionary computation, each experiment is repeated $20$ times with different random seeds. To derive a statistical meaning of comparison results, we consider Wilcoxon rank-sum test and Scott-Knott test in our experiments. They are briefly introduced in~\pref{app:stastical_test}.

\subsection{Comparison Results with State-of-the-art PBEMO algorithms}
\label{sec:MOP_test_suite}

\begin{figure}[t]
    \centering
    \vspace{-1em}
    \includegraphics[width=0.8\linewidth]{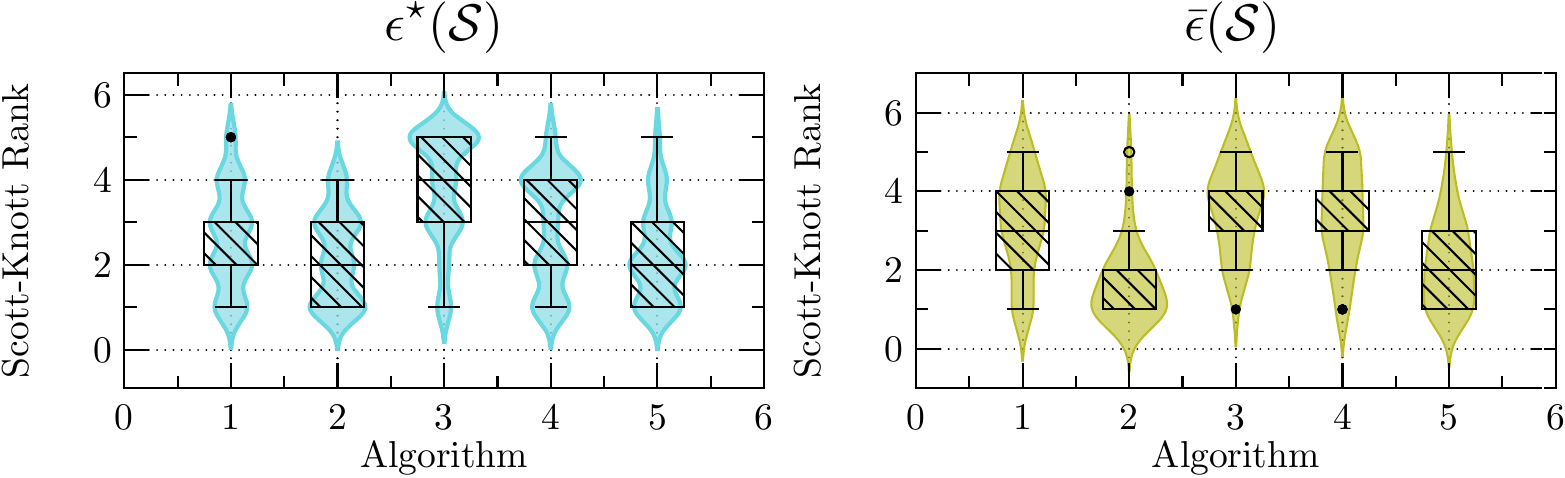}
    \caption{Box plot for the Scott-Knott test rank of \our\ and peer algorithms achieved by $33$ test problems running for $20$ times. The index of algorithms are as follows: 1 $\leadsto$ \texttt{D-PBNSGA-II}, 2 $\leadsto$ \texttt{D-PBMOEA/D}, 3 $\leadsto$ \texttt{I-MOEA/D-PLVF}, 4 $\leadsto$ \texttt{I-NSGA-II/LTR}, 5 $\leadsto$ \texttt{IEMO/D}.}
    \label{fig:sk_rank}
\end{figure}

\begin{figure}[t]
    \centering
    \vspace{-1em}
    \includegraphics[width=1.0\linewidth]{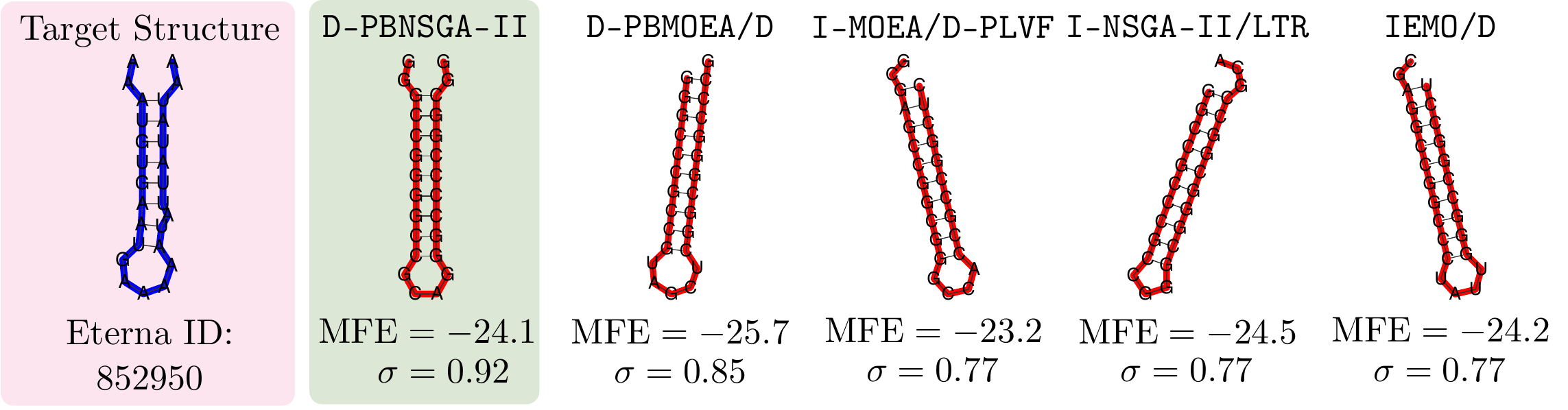}
    \caption{Comparison result of \texttt{D-PBNSGA-II} against the other three state-of-the-art PBEMO algorithms on a selected RNA inverse design task (Eterna ID: $852950$). The target structure is shaded in \textcolor{blue}{blue color} while the predicted structures obtained by different optimization algorithms are highlighted in \textcolor{red}{red color}. In this experiment, the preference is set to 
     $\sigma=1$. The closer $\sigma$ is to $1$, the better performance achieved by the corresponding algorithm. When the $\sigma$ shares the same biggest value, the smaller $MFE$ the better the performance is. Full results can be found in~\cref{app:science}.}
    \label{fig:RNA}
\end{figure}

\begin{figure}[t]
    \centering
    \vspace{-1em}
    \includegraphics[width=1.0\linewidth]{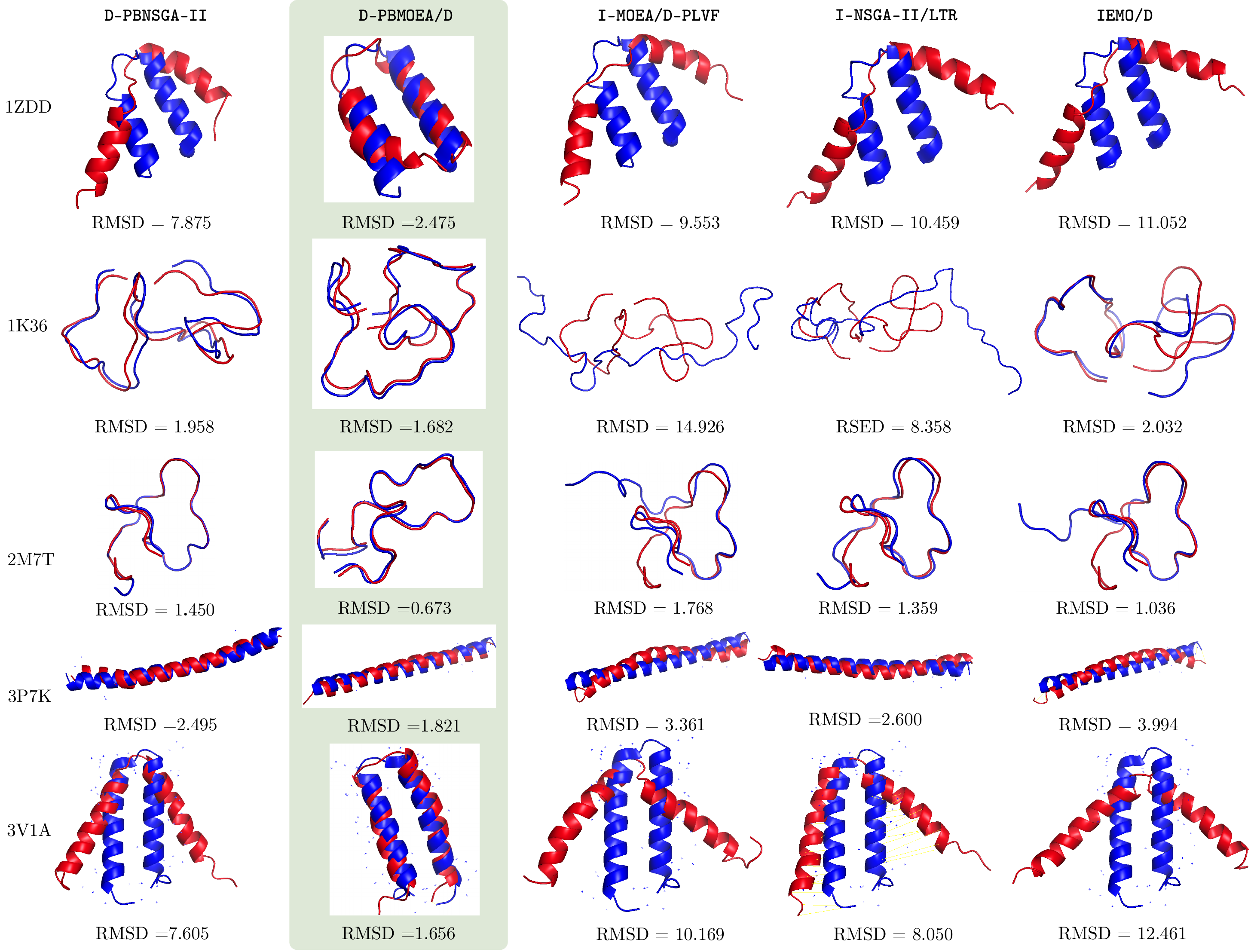}
    \caption{Experiments results for comparison results between \our\ and the other three state-of-the-art PBEMO algorithms on the PSP problems. In particular, the native protein structure is represented in a \textcolor{blue}{blue color} while the predicted one obtained by different optimization algorithms are highlighted in a \textcolor{red}{red color}. The smaller RMSD as defined in~\cref{eq:rmsd} of appendix, the better performance achieved by the corresponding algorithm.}
    \label{fig:protein}
\end{figure}

To validate the effectiveness of our proposed \our\ framework, we first compare the performance of \texttt{D-PBNSGA-II} and \texttt{D-PBMOEA/D} against three state-of-the-art PBEMO algorithms, \texttt{I-MOEAD-PLVF}~\cite{LiCSY19}, \texttt{I-NSGA2/LTR}~\cite{LiLY23}, \texttt{IEMO/D}~\cite{TomczykK19}.

For the synthetic benchmark test problems, the comparison results of $\epsilon^\star(\mathcal{S})$ and $\bar{\epsilon}(\mathcal{S})$ in Tables~\ref{tab:minDis_peer} and~\ref{tab:expDis_peer} have demonstrated the competitiveness of our proposed \our\ algorithm instances. In particular, \texttt{D-PBNSGA-II} and \texttt{D-PBMOEA/D} have achieved the best metric values in 77 out of 96 comparisons according to the Wilcoxon rank-sum test at the $0.05$ significant level. In~\cref{fig:sk_rank}, the box plots of the ranks derived from the Scott-Knott test of \texttt{D-PBNSGA-II} and \texttt{D-PBMOEA/D} compared against the other three peer algorithms further consolidate our observation about the effectiveness of our \our\ framework for searching for SOI within a given computational budget. From the plots of the final non-dominated solutions obtained by different algorithms shown in~Figures~\ref{fig:ZDT_pop} to~\ref{fig:WFG}, we can see that the solutions found by our \our\ algorithm instances are more concentrated on the \lq golden point\rq\ while the others are more scattered. Further, the superiority of our proposed \our\ algorithm instances becomes more evident when tackling problems with many objectives, i.e., $m\geq 3$. Similar observations can be found in the scientific discovery problems, as the results shown in Tables~\ref{tab:MinDis_RNA},~\ref{tab:ExpectDis_RNA},~\ref{tab:minDis_RNA_ref2} and~\ref{tab:expDis_RNA_ref2}. For the RNA inverse design tasks, the sequences identified by our proposed \our\ algorithm instances have a good match regarding the targets as the selected results shown in~\pref{fig:RNA} (full plots in Figures~\ref{fig:RNA_1_5} and~\ref{fig:RNA_6_10}). As for the PSP problems, from the results in~\cref{fig:protein}, it is clear to see that \our\ algorithm instances significantly outperform the other peers. The protein structures predicted by our algorithms are more aligned with the native protein structure. Full results can be found in~\pref{app:science}.

\vspace{-1em}
\subsection{Investigation of the Effectiveness of Our \texttt{Consultation} Module}
\label{sec:consultation_validation}
\vspace{-.5em}

The \texttt{consultation} module, which learns the DM's preferences from their feedback, is one of the most important building blocks of our proposed \our\ framework. To validate its effectiveness, we designed a \our\ variant (denoted as \our\texttt{-DTS}) that uses the double Thompson sampling (DTS) widely used in conventional stochastic dueling bandits~\cite{WuL16} as an alternative of our proposed clustering-based stochastic dueling bandit algorithm. From the results in Tables~\ref{tab:minDis_distillation} and~\ref{tab:expDis_distillation}, it is clear to see that \our\texttt{-DTS} is always outperformed by our three \texttt{D-PBNSGA-II} and \texttt{D-PBMOEA/D}. This observation can be attributed to the ineffectiveness of the traditional dueling bandit algorithms for tackling a large number of arms. In our PBEMO context, DTS requires at least thousands comparisons when encountering more than $100$ candidate solutions. This is not feasible under the limited amount of computational budget. We envisage the same results will be obtained when considering other dueling bandits variants such as~\cite{ZoghiWM14,KomiyamaHKN15}.

Further, we designed another variant (denoted as \our\texttt{-PBO}) that uses a parameterized preference learning model in Bayesian optimization~\cite{GonzalezDDL17} as alternative in the \texttt{consultation} module. From the comparison results shown in Tables~\ref{tab:minDis_distillation} and~\ref{tab:expDis_distillation}, we find that the performance of \our\texttt{-PBO} is competitive on problems with a small number of objective ($m=2$). However, its performance degenerate significantly with the increase of the number of objectives. This can be attributed to the exponentially increased search that renders \our\texttt{-PBO} ineffective by suggesting too many solutions outside of the region of interest. In contrast, the clustering strategy in our proposed method strategically and significantly narrow down the amount of comparisons without compromising the learning capability. Additionally, the enlarged search space also makes the model selection in \our\texttt{-PBO} difficult. Our proposed method, on the other hand, learns human preferences directly from their feedback, thus is scalable to a high-dimensional space.

\vspace{-1em}
\subsection{Parameter Sensitivity Study}
\label{sec:parameters}
\vspace{-.5em}

This section discusses the sensitivity study of two hyperparameters of the \our\ framework.

$K$ is the parameter that controls the number of subsets used in our clustering-based stochastic dueling bandits algorithm. From the results shown from Tables~\ref{tab:K_minDis_2} and~\ref{tab:K_expDis_2} considering $K=\{2,5,10\}$, it is interesting to note that \our\ is not sensitive to the setting of $K$ \textcolor{black}{when the dimensionality is low (i.e., $m=2$)}. This implies that we can achieve a reasonably good performance even when using a smaller $K$, i.e., a coarser-grained approximation of SOI. By doing so, we can further improve the efficiency of our \our\ framework. 

\textcolor{black}{As the dimensionality increases (i.e., $m=\{3,5,8,10\}$), the population size will increase. Generally, as is shown in Table~\ref{tab:K_minDis_3} \~ ~\ref{tab:K_expDis_10}, with larger populations, a higher $K$ tends to yield better results, aligning with our intuition. Furthermore, our significance analysis across 20 repeated experiment (\cref{fig:sk_rank}) reveals that the optimal $K$ does not show significant differences in performance. In summary, $K$ does not significantly impact the performance of our proposed \our\ framework. For most problems, we do not recommend choosing a very small/large $K$ (e.g., $K=2$, $K=N$), as it may inefficiently narrow down the ROI.}

To a certain extent, the threshold $\varepsilon$ plays an important role for controlling the budget of consulting the DM. From the results shown in Tables~\ref{tab:epsilon_minDis} and~\ref{tab:epsilon_expDis}, we find that all three settings of $\varepsilon=\{10^{-1},10^{-3},10^{-6}\}$ have shown comparable results. However, a too large $\varepsilon=10^{-1}$ may lead to a premature convergence risk. On the other hand, a too small $\varepsilon=10^{-6}$ may be too conservative to terminate. This renders more consultation iterations, thus leading to a larger amount of cognitive workloads to DMs. In contrast, we find that \our\ algorithm instances can converge with less than $10$ consultation iterations with $\varepsilon=10^{-3}$.

\section{Limitations}
\label{sec:discussion}

Our proposed \our\ directly leverages DM's feedback to guide the evolutionary search for SOI, which neither relies on any reward model nor cumbersome model selection. However, there are several potential limitations of \our\
that warrant discussion here:
\begin{itemize}
    \item The regret analysis of our proposed clustering-based stochastic dueling bandits is for the optimal subset, i.e., the region of interest on the PF. It is not yet directly applicable to identify the exact optimal solution of interest. As part of our future work, we will work on efficient algorithms and theoretical study on the best arm identification in the context the preference-based EMO.
    \item This paper only analyzes the regret of the consultation module. How to further analyze the convergence of the \our as a whole remains unknown. This will also lead to the next step of our research. In particular, if it is successful, we may provide a radically new perspective to analyze the convergence of evolutionary multi-objective optimization algorithms.
\end{itemize}



\vspace{-1em}
\section{Conclusion}
\label{sec:conclusion}
\vspace{-.5em}

This paper introduced the \our\ framework that is featured in a novel clustering-based stochastic dueling bandits algorithm. It enables learning DM's preferences directly from their feedback, neither relying on a predefined reward model nor cumbersome model selection. Additionally, we derived a unified probabilistic format to adapt the learned preference to prevelant EMO algorithms. Meanwhile, such probabilistic representation also contributes to a principled termination criterion of DM interactions. Experiments demonstrate the effectiveness of our proposed \our\ algorithm instances. In future, we will investigate more principled approaches to obtain $K$ subsets, such as fuzzy clustering with overlapping. Further, we plan to extend this current $K$-armed bandits setup to a best arm identification. By doing so, we can directly obtain the solution of interest, with a potential explainability for MCDM. Moreover, we will extend our theoretical analysis of the termination criterion to a convergence analysis of PBEMO, even generalizable to the conventional EMO. Last but not the least, we will collaborate with domain experts to promote a emerging \lq expert-in-the-loop\rq\ scientific discovery platform, contributing to the prosperity of AI for science.




\newpage
\section*{Acknowledgements}
This paper presents work whose goal is to advance the field of Machine Learning. There are many potential societal consequences of our work, none which we feel must be specifically highlighted here.
Li was supported by UKRI Future Leaders Fellowship (MR/X011135/1, MR/S017062/1), NSFC (62376056, 62076056), the Kan Tong Po Fellowship (KTP/R1/231017), Alan Turing Fellowship, and Amazon Research Award.

\bibliographystyle{abbrv}
\bibliography{reference}

\newpage


\appendix

\renewcommand{\thefigure}{A\arabic{figure}}
\renewcommand{\thetable}{A\arabic{table}}


\section{Related Works}
\label{app:related_work}

\subsection{Multi-objective optimization}
\label{sec:moreview}
In the context of MO, to align with DM's preference, existing methods fall into three main categories: the \textit{a posteriori} methods, the \textit{a priori} methods, and the \textit{interactive} methods \cite{Deb01}. The \textit{a posteriori} methods first generates extensive efficient solutions through MO algorithms, then ask DM to select SOI from these solutions. In this way, DM is involved only when solution generation from the optimization module is finished, where the decision making process can be finished according to multi-criterion decision analysis~\cite{GrecoEF16} or certain SOI analysis~\cite{DebG11, BhattacharjeeSR17, LiNGY22}. However, the absence of DM's preferences in solution generation can paradoxically complicate pinpointing the SOI, and the sheer number of generated solutions can be cognitively overwhelming for DM to choose from. The \textit{a priori} method, leverages pre-defined DM's preference to guide the search of SOI~\cite{BrankeD05,LuqueSHCC09,ThieleMKL09,LiDY18}. In other words, the decision making happens before optimization. This is more related to offline preference learning such as reinforcement learning from human feedback (RLHF) \cite{LiLY23}. However, given the black-box nature of real-world problems, eliciting reasonable preferences a priori can be controversial. \cite{LiLDMY20,TanabeL23} pointed out that the \textit{a priori} method may lead to disruptive preferences and result in faulty decisions. Moreover, elaborating the DM labeled dataset used for offline preference learning such as RLHF is also a difficult task. In contrast, the \textit{interactive} method also known as preference-based evolutionary multi-objective optimization (PBEMO)~\cite{LiLDMY20} presents a valuable opportunity for the DM to gradually comprehend the underlying black-box system and consequently refine user preference information \cite{MiettinenM00,DebSKW10,BrankeGSZ15,LiCSY19,TomczykK20, LiLY23}.

\subsection{Preference learning in PBEMO}
The consultation module in PBEMO collects DM's preference by actively querying DM's preference towards recommended candidates. Based on the requirements of human feedback, methods for consultation can be categorized into the following three types, as also presented in \pref{fig:flowchart}(\textbf{b}). 
\begin{itemize}
    \item The first type requires DM to provide scores or rankings for a bunch of (typically more than three) solutions.  
    In \cite{LiLY23}, a rank-net was introduced to conduct consultation in PBEMO.
    In \cite{HejnaS22}, a ranking model was required in the context of RLHF. In addition, as presented in \cite{SahaG22}, relative preference can only be leveraged by ranking on more than three candidates. However, providing reward or ranking on a bunch of solutions will not only increase the workload of DMs, but will also introduce uncertainty and randomness into optimization processes due to the limited capacity of DMs \cite{RadlinskiKJ08}. 
    \item Algorithms in the second type rely on a parameterized transitivity model or a structured utility function for dueling feedback~\cite{BengsSH22}, such as the Bradley-Terry-Luce model~\cite{ChuG05} which has been widely explored in the literature~\cite{JacquetS82,Furkranz03,ChuG05,HoulsbyHHG12,ZintgrafRLJN18}. 
    These algorithms can be traced back to 1982, when \cite{JacquetS82} introduced the UTA (UTilités Additives) method for deducing value functions based on a provided ranking of reference set. Following up work included \cite{Furkranz03} that employed pairwise preference to predict a ranking for potential labels associated with new training examples, \cite{ChuG05} that utilized GP for pairwise preference learning (PGP) within a Bayesian framework, and \cite{CaoQLML07} that proposed the ListNet which was an NN-based learning-to-rank method to model preference feedback. In addition, \cite{HoulsbyHHG12} extended the work in \cite{CaoQLML07} to handle multi-user scenarios by introducing a weight vector for each user and combining multiple preference latent functions. More recently, \cite{GonzalezDDL17} proposed a novel pairwise preference learning method using the concept of Bayesian optimization based on a structured surrogate model assumption, named preferential Bayesian optimization (PBO). Moreover, in \cite{ZintgrafRLJN18}, a comparative study towards four prominent preference elicitation algorithms was conducted, and results shown that ranking queries outperformed the pairwise and clustering approaches in terms of utility models ad human preference. 
    \item Algorithms that belong to the third type consider human feedback as stochastic events such as Bernoulli trails, enabling quantifying randomness regarding the DM's capability. Dueling bandits algorithms are known for its efficiency towards interactive preference learning \cite{YueBKJ12}. In recent years, work like \cite{ZoghiWDM14,WuL16,YanLCCVC22} has improved dueling bandits algorithms in both querying efficiency and computational complexity. Further investigations on dueling bandits are presented in \ref{sec:banditreview}.
\end{itemize}

Algorithms in the first two types burden the development of PBEMO, as reward/ranking method fails to take human capability into account and will navigate solutions to an inaccurate direction, while the parameterized method relies heavily on model selection. It is well known that, in real-world problems, selecting an appropriate model is as difficult as solving the original problems. We appreciate the model-free settings in dueling bandits algorithms.

\subsection{Bandit algorithms}
\label{sec:banditreview}

The dueling bandits problem involves a sequential decision-making process where a learner selects two out of $K$ “arms” in each round and receives binary feedback about which arm is preferred. Following \cite{BengsBEH21}, dueling bandits algorithms can be classified into three categories: MAB-related, merge sort/quick sort, and tournament/challenge. In this paper, we focus on traditional dueling bandit algorithms falling in the MAB-related category. comprising four distinct methodologies for handling pairwise comparisons.
\begin{itemize}
    \item The first method is known as explore then commit (ETC), which is utilized by algorithms such as interleaved filtering (IF)~\cite{YueBKJ12},  beat the mean (BTM)~\cite{YueJ11} and SAVAGE~\cite{UrvoyCFN13}. ETC methods kick out solutions that are unlikely to win, but this approach may lead to lower accuracy of predictive probability. 
    \item The second method involves using the upper confidence bound (UCB), for example relative upper confidence bound (RUCB)~\cite{ZoghiWM14}, MergeRUCB~\cite{ZoghiWR15}, and relative confidence sampling (RCS)~\cite{ZoghiWDM14}. MergeRUCB, an extension of RUCB, is particularly designed for scenarios with a large number of arms. RCS combines UCB and Beta posterior distribution to recommend one arm for each duel in each iteration step. 
    \item The third method employs Thompson sampling, as demonstrated by double Thompson sampling (DTS)~\cite{WuL16} and MergeDTS~\cite{LiMRZ20}. Similar to MergeRUCB, MergeDTS is designed for dealing with a substantial number of arms. It is worth nothing that UCB methods assume the existence of a Condorcet winner, whereas Thompson sampling methods assume a Copeland winner, representing a fundamental distinction between these two types. 
    \item The fourth method involves using the minimum empirical divergence, as introduced by relative minimum empirical divergence (RMED)~\cite{KomiyamaHKN15} and deterministic minimum empirical divergence (DMED)~\cite{HondaT10}. RMED and DMED employ KL divergence as a metric to evaluate candidate arms.
\end{itemize}
Overall, these four methods represent different approaches to pairwise comparison in traditional dueling bandit algorithms. However, the key bottleneck here is the sourcing number of DM queries required when considering a population solutions in PBEMO. 

To our knowledge, there is no clustering-based dueling bandit algorithms. While in the field of multi-armed bandit (MAB), there has already exists clustered MABs which consider the correlation of 
arms~\cite{PandeyCA07, NguyenL14}, they relied heavily on complex model selection that is not applicable for real-world scenarios.

Additionally, it is noted that multi-objective multi-armed bandit (MOMAB) is potential to address PBEMO problems. However, as pointed out in \cite{HuyukT21,TekinT18}, MOMABs belong to the \textit{a posteriori} MO method, recalling \ref{sec:moreview}. Specifically, the goal of MOMABs is to find an optimal multi-objective arm by sampling potential winners. However, MOMABs do not involve optimization for PF generation nor active query for preference learning. Thus it's not feasible to use MOMABs as our peer algorithms or consultation module.


\section{Derivation and Theoretical Analysis}




\subsection{Regret bound of clustering dueling bandits}
\label{app:proof}
\begin{proof}[Proof of Theorem \ref{thm:regret}]
    We follow the proof of double Thompson sampling for dueling bandits \cite{WuL16}, different from which the clustering operation will lead to a non-stationary environment. As shown subsequently, the Assumption \ref{assumption:tight} is important to build an auxiliary problem that is stationary, facilitating to apply the results in \cite{WuL16}.

    According to the property of Beta distribution, the estimated mean of probability for subset $i$ beats subset $j$ is $\tilde p_{i,j}(t) = \frac{b_{i,j}(t)}{b_{i,j}(t-1) + b_{j,i}(t-1)}$. Unfortunately, the pairwise comparison feedback does not follow a stationary distribution. Specifically, $p_{i,j}$ represents winning probability between the best solutions in $\tilde{\mathcal{S}}^i$ and $\tilde{\mathcal{S}}^j$, while $\tilde p_{i,j}(t)$ is computed from comparison among different solutions in the two subsets due to limited knowledge of the best solutions at the beginning. As a result, the reward distribution is drifting \cite{CarlssonDJ21}. To handle this, we introduce an auxiliary problem with stationary reward distributions. Based on comparison principles, the number of plays on each subset in the original problem can be upper bounded by the ones of the auxiliary problem. Note that the auxiliary problem are introduced merely for theoretical analysis.

    Without loss of generality, we assume $\tilde{\mathcal{S}}^1$ to be $\tilde{\mathcal{S}}^*$. Since we are most interested in locating the optimal subset, 
    we define the underestimation of winning probability of $p_{1,j}$ as $\underline{p}_{1,j}$, which is the winning probability between: $i)$ the solution $\underline{\mathbf{x}}^1\in \tilde{\mathcal{S}}^1$ with the lowest Copeland score over all solutions in $\tilde{\mathcal{S}}^j$, and $ii)$ the solution $\overline{\mathbf{x}}^j \in \tilde{\mathcal{S}}^j$ with the highest Copeland score over all solutions in $\tilde{\mathcal{S}}^1$, $j=2,\dots K$. Likewise, we define $\underline{p}_{i,j}$ as the winning probability between: $i)$ the solution $\underline{\mathbf{x}}^i\in \tilde{\mathcal{S}}^i$ with the lowest Copeland score over all solutions in $\tilde{\mathcal{S}}^j$, and $ii)$ the solution $\overline{\mathbf{x}}^j \in \tilde{\mathcal{S}}^j$ with the highest Copeland score over all solutions in $\tilde{\mathcal{S}}^1$, $p_{i,j}>0.5, ~i j=1,\dots K$. For $p_{i,j}<0.5$, denote $\underline{p}_{i,j} = 1 - \underline{p}_{j,i}$. Lastly, we denote $\underline{p}_{i,i}=0.5$, $\forall i=1,\dots, K$. According to Assumption \ref{assumption:tight}, the optimality remains, i.e., the optimal subset in the context of $p_{ij}$ is also optimal in the context of $\underline{p}_{ij}$.
    
    With the auxiliary problem, we introduce the virtual Thompson sampling strategy that generates samples as $\theta^\prime_{j, i} \sim \mathrm{Beta}(b^\prime_{j,i}(t)-1, b^\prime_{i,j}(t)-1)$, where $b^\prime_{i,j}$ is the number of time-slots when subsets $\tilde{\mathcal{S}}^i$ beats $\tilde{\mathcal{S}}^j$ under $\underline{p}_{i,j}$. It is easy to verify two facts in expectation: $i)$ $b_{i,j}^\prime \leq b_{i,j}$, and $ii)$ $b_{j,i}^\prime \ge b_{j,i}$. Consequently, when $p_{i,j}>0.5$ or $i=1$, for samples $\theta_{i,j}(t) \sim \mathrm{Beta}(b_{i,j}+1, b_{j,i}+1)$ and $\theta_{i,j}^\prime(t) \sim \mathrm{Beta}(b^\prime_{i,j}+1, b^\prime_{j,i}+1)$, we have $\theta_{i,j}$ dominates $\theta^\prime_{i,j}$ in expectation \cite{CarlssonDJ21}.

    In what follows, we are going to quantify the number of plays of subset pair $\tilde{\mathcal{S}}^i$ and $\tilde{\mathcal{S}}^j$ To avoid ambiguity, let $N_{i,j}(t)$ denote the case where $\tilde{\mathcal{S}}^\prime = \tilde{\mathcal{S}}^i$ and $\tilde{\mathcal{S}}^{\prime\prime} = \tilde{\mathcal{S}}^j$.
    
    When the first subset for comparison has been determined as $\tilde{\mathcal{S}}^i$, the number of plays for $\tilde{\mathcal{S}}^j$, $j\ne i$, is
    \begin{align}
        \mathbb{E}[N_{i,j}(T)] = & \sum_{t=1}^T \mathbb{P}\left( \tilde{\mathcal{S}}^{\prime\prime} = \tilde{\mathcal{S}}^j\right) \nonumber\\
        = & \sum_{t=1}^T \mathbb{P}\left( \tilde{\mathcal{S}}^{\prime\prime} = \tilde{\mathcal{S}}^j, \underline{p}_{j,i} < 0.5 \right) + \sum_{t=1}^T \mathbb{P}\left( \tilde{\mathcal{S}}^{\prime\prime} = \tilde{\mathcal{S}}^j, \underline{p}_{j,i} > 0.5 \right).
        \label{eq:playdivision}
    \end{align}
    When $\underline{p}_{i,j}<0.5$, we consider the following facts: $i)$ $\theta_{j,i}$ are sampled from $\mathrm{Beta}(b_{j,i}+1, b_{i,j}+1)$ when $j \ne i$; $ii)$ $\theta_{i,i} = 0.5$; $iii)$ $\theta_{j,i} > \theta_{i,i}$ since subset $\tilde{\mathcal{S}}^j$ is selected by sampling for comparison. From these observations, the problem is equivalent to a multi-arm bandit problem where the best arm is indexed by $i$. Then, the left item of equation \eqref{eq:playdivision} can be further divided by
    \begin{align}
        & \sum_{t=1}^T \mathbb{P}\left( \tilde{\mathcal{S}}^{\prime\prime} = \tilde{\mathcal{S}}^j, \underline{p}_{j,i} < 0.5 \right) \nonumber\\
        = &  \sum_{t=1}^T \mathbb{P}\left( \tilde{\mathcal{S}}^{\prime\prime} = \tilde{\mathcal{S}}^j, \underline{p}_{j,i} < c <0.5 \right) + \sum_{t=1}^T \mathbb{P}\left( \tilde{\mathcal{S}}^{\prime\prime} = \tilde{\mathcal{S}}^j, c < \underline{p}_{j,i} <0.5 \right) \nonumber \\ 
        = &  \sum_{t=1}^T \mathbb{P}\left( \tilde{\mathcal{S}}^{\prime\prime} = \tilde{\mathcal{S}}^j, \underline{p}_{j,i} < c, N_{j,i}(t-1) \leq \frac{\log T}{D_\mathrm{KL} (c \Vert 0.5)} \right)  \nonumber \\
        & + \sum_{t=1}^T \mathbb{P}\left( \tilde{\mathcal{S}}^{\prime\prime} = \tilde{\mathcal{S}}^j, \underline{p}_{j,i} < c, N_{j,i}(t-1) > \frac{\log T}{D_\mathrm{KL} (c \Vert 0.5)} \right) + \sum_{t=1}^T \mathbb{P}\left( \tilde{\mathcal{S}}^{\prime\prime} = \tilde{\mathcal{S}}^j, c < \underline{p}_{j,i} <0.5 \right).
    \end{align}
    When $\underline{p}_{j, i} < 0.5$, if $p_{j,i}>0.5$, we have $p_{j,i} > \underline{p}_{j, i}$, thus $\theta_{j,i}$ dominates $\theta^\prime_{j,i}$ in expectation. Therefore, we have $\mathbb{P}\left( N_{j,i}(t-1) \leq \frac{\log T}{D_\mathrm{KL} (c \Vert 0.5)} \right) \leq \mathbb{P}\left( \underline{N}_{j,i}(t-1) \leq \frac{\log T}{D_\mathrm{KL} (c \Vert 0.5)} \right)$, where $\underline{N}_{j,i}(t-1)$ is the number of plays of pair $\tilde{\mathcal{S}}^j$ against $\tilde{\mathcal{S}}^i$ using the virtual Thompson sampling strategy. Otherwise, if $p_{j,i} < 0.5$, we have $p_{j,i} < \underline{p}_{j, i}$, thus $\mathbb{P}\left( p_{j,i} < c \right) \ge \mathbb{P}\left( \underline{p_{j,i}} < c \right)$ which follows the regular form as Lemma 1 in \cite{WuL16}. According to the concentration property of Thompson samples \cite{AgrawalG13}, we have
    \begin{align}
        & \sum_{t=1}^T \mathbb{P}\left( \tilde{\mathcal{S}}^{\prime\prime} = \tilde{\mathcal{S}}^j, \underline{p}_{j,i} < 0.5 \right) \nonumber\\
        \leq &  \sum_{t=1}^T \mathbb{P}\left( \tilde{\mathcal{S}}^{\prime\prime} = \tilde{\mathcal{S}}^j, \underline{p}_{j,i} < c, \underline{N}_{j,i}(t-1) \leq \frac{\log T}{D_\mathrm{KL} (c \Vert 0.5)} \right)  \nonumber \\
        & + \sum_{t=1}^T \mathbb{P}\left( \tilde{\mathcal{S}}^{\prime\prime} = \tilde{\mathcal{S}}^j, \underline{p}_{j,i} < c, N_{j,i}(t-1) > \frac{\log T}{D_\mathrm{KL} (c \Vert 0.5)} \right) + \sum_{t=1}^T \mathbb{P}\left( \tilde{\mathcal{S}}^{\prime\prime} = \tilde{\mathcal{S}}^j, c < \underline{p}_{j,i} <0.5 \right) \nonumber \\
        \leq &  \frac{\log T}{D_\mathrm{KL}(c \Vert 0.5)} + \frac{1}{D_\mathrm{KL}(c \Vert \underline{p}_{j, i})} + 1.
    \end{align}
    As shown in \cite{AgrawalG13}, for any $\epsilon \in (0, 1]$, $c$ can be chosen such that $D_\mathrm{KL}(c\Vert 0.5) = {D_\mathrm{KL}(\underline{p}_{j,i}\Vert 0.5)}/(1 + \epsilon)$, and $1 / D_\mathrm{KL}(c\Vert \underline{p}_{j, i}) = O(1 / \epsilon^2)$. Therefore the left item in equation \eqref{eq:playdivision} is bounded from above by 
    \begin{equation}
        \sum_{t=1}^T \mathbb{P}\left( \tilde{\mathcal{S}}^{\prime\prime} = \tilde{\mathcal{S}}^j, \underline{p}_{j,i} < 0.5 \right) \leq (1+\epsilon) \frac{\log T}{D_\mathrm{KL}(\underline{p}_{j,i}\Vert 0.5)}+\mathcal{O}\left(\frac{1}{\epsilon^2}\right).
    \end{equation}

To account for the second term in equation \eqref{eq:playdivision}, the concentration property of relative upper confidence bound \cite{ZoghiWM14,WuL16} will be used. For $\underline{p}_{j, i} > 0.5$, define $\Delta_{j,i} = \underline{p}_{j,i} - 0.5 > 0$, thus the right side of equation \eqref{eq:playdivision} can be divided into
\begin{align}
        & \sum_{t=1}^T \mathbb{P}\left( \tilde{\mathcal{S}}^{\prime\prime} = \tilde{\mathcal{S}}^j, \underline{p}_{j,i} > 0.5 \right) \nonumber\\
        = &  \sum_{t=1}^T \mathbb{P}\left( \tilde{\mathcal{S}}^{\prime\prime} = \tilde{\mathcal{S}}^j, \underline{p}_{j,i} > 0.5, N_{j,i}(t-1) < \frac{4\alpha\log T}{\Delta_{j,i}} \right)  \nonumber \\
        & + \sum_{t=1}^T \mathbb{P}\left( \tilde{\mathcal{S}}^{\prime\prime} = \tilde{\mathcal{S}}^j, \underline{p}_{j,i} > 0.5, N_{j,i}(t-1) \ge \frac{4\alpha\log T}{\Delta_{j,i}} \right).
        \label{eq:rucbbound}
\end{align}
Note that the subset $\tilde{\mathcal{S}}^j$ can be selected as $\tilde{\mathcal{S}}^{\prime\prime}$ only if $l_{j, i} < 0.5$. For the right item in equation \eqref{eq:rucbbound}, $N_{j,i}(t-1) \ge \frac{4\alpha\log T}{\Delta_{j,i}}$ is sufficient to yield $l_{j,i} + \Delta_{j,i} \ge u_{j, i}$, finally we have $u_{j, i} \leq p_{j, i}$. Applying concentration properties in Lemma 6 of \cite{WuL16}, we have
\begin{equation*}
    \sum_{t=1}^T \mathbb{P}\left( \tilde{\mathcal{S}}^{\prime\prime} = \tilde{\mathcal{S}}^j, \underline{p}_{j,i} > 0.5, N_{j,i}(t-1) \ge \frac{4\alpha\log T}{\Delta_{j,i}} \right) \leq \sum_{t=1}^T \mathbb{P}\left( u_{i,j} \leq p_{i,j} \right) = \mathcal{O}(1).
\end{equation*}
As a result, we can also have a upper bound of the left item in equation \eqref{eq:rucbbound} as
\begin{equation}
    \sum_{t=1}^T \mathbb{P}\left( \tilde{\mathcal{S}}^{\prime\prime} = \tilde{\mathcal{S}}^j, \underline{p}_{j,i} > 0.5, N_{j,i}(t-1) < \frac{4\alpha\log T}{\Delta_{j,i}} \right) \leq \frac{4\alpha\log T}{\Delta_{j,i}}.
\end{equation}

Altogether, for $j\ne i$, we have 
\begin{align*}
        \mathbb{E}[N_{j,T}^\prime] = & \sum_{t=1}^T \mathbb{P}\left( \tilde{\mathcal{S}}^{\prime\prime} = \tilde{\mathcal{S}}^j\right) \leq (1+\epsilon) \frac{\log T}{D_\mathrm{KL}(\underline{p}_{j,i}\Vert 0.5)}+\frac{4\alpha\log T}{\Delta_{j,i}}+\mathcal{O}\left(\frac{1}{\epsilon^2}\right).
\end{align*}

For the case where $\tilde{\mathcal{S}}^{\prime}$ and $\tilde{\mathcal{S}}^{\prime\prime}$ are selected to be the same subsets, an upper bound of $\mathcal{O}(K)$ has been proven in Lemma 3 of \cite{WuL16}. Therefore we omit the proof of this case for brevity.

\end{proof}

\subsection{Density-ratio estimation of preference distribution}
\label{app:density_ratio}
Consider the DM's preference distribution is determined by the density ratio of winning distribution over losing distribution, in other word, preference on solution $\tilde{\mathbf{x}}$ is determined by $p_v(\tilde{\mathbf{x}}) / p_\ell(\tilde{\mathbf{x}})$. From consultation module, we have obtained the winning and losing time vectors $\mathbf{v}$ and $\bm{\ell}$ of all candidate solutions, which in turn gives us a density estimation of $p_v$ and $p_\ell$. We are going to perform density-ratio estimation based upon $\mathbf{v}$ and $\bm{\ell}$ to obtain the density ratio, namely the preference distribution $p_r(\tilde{\mathbf{x}})$, which can be mathematically formulated as
\begin{equation}
    p_{v}(\tilde{\mathbf{x}}) = p_r(\tilde{\mathbf{x}}) p_{\ell}(\tilde{\mathbf{x}}).
\end{equation}
We use finite-order moment matching approach without conducting complex density estimation of each $p_v(\tilde{\mathbf{x}})$ and $ p_\ell(\tilde{\mathbf{x}})$. Denote a vector-valued nonlinear function $\bm{\phi}: v \to \mathbb{R}^k$. Consider the first $k$-th order moments, we have $\bm{\phi}(\tilde{\mathbf{x}}) = \left( \tilde{\mathbf{x}}^{(1)}, \dots, \tilde{\mathbf{x}}^{(k)} \right)$, where $\tilde{\mathbf{x}}^{(k)}$ stands for element-wise $k$-th power of $\tilde{\mathbf{x}}$. Then, the moment matching problem for density-ratio estimation can be formulated as
\begin{equation}
    \arg\min_{p_r} \left\Vert \int \bm{\phi}(\tilde{\mathbf{x}}) p_r(\tilde{\mathbf{x}}) p_\ell(\tilde{\mathbf{x}}) \, \mathrm{d} \tilde{\mathbf{x}} - \int \bm{\phi}(\tilde{\mathbf{x}}) p_v(\tilde{\mathbf{x}}) \, \mathrm{d} \tilde{\mathbf{x}} \right\Vert^2.
\end{equation}
Equivalently, we can reformulate the problem by
\begin{equation}
    \arg\min_{p_r} \left\Vert \int \bm{\phi}(\tilde{\mathbf{x}}) p_r(\tilde{\mathbf{x}}) p_\ell(\tilde{\mathbf{x}})\, \mathrm{d} \tilde{\mathbf{x}} \right\Vert^2 - 2 \left\langle  \int \bm{\phi}(\tilde{\mathbf{x}}) p_r(\tilde{\mathbf{x}}) p_\ell(\tilde{\mathbf{x}}) \, \mathrm{d} \tilde{\mathbf{x}}, \int \bm{\phi}(\tilde{\mathbf{x}}) p_v(\tilde{\mathbf{x}})\, \mathrm{d} \tilde{\mathbf{x}} \right\rangle,
    \label{eq:density_ratio_equivalent}
\end{equation}
where $\langle \cdot \rangle$ denotes the inner product. Viewing each component in \pref{eq:density_ratio_equivalent} as the expectation operation of function $\bm{\phi}(\tilde{\mathbf{x}}) p_r(\tilde{\mathbf{x}})$ over distribution $p_v(\tilde{\mathbf{x}})$ and $ p_\ell(\tilde{\mathbf{x}})$, we use sample averages to estimate the expectation \cite{SugiyamaSK12}. To this end, we construct two sample vectors $\Phi_v = \left( \bm{\phi}(\tilde{\mathbf{x}}_v^1), \dots, \bm{\phi}(\tilde{\mathbf{x}}_v^{N_v}) \right)$ and $\Phi_\ell = \left( \bm{\phi}(\tilde{\mathbf{x}}_\ell^1), \dots, \bm{\phi}(\tilde{\mathbf{x}}_\ell^{N_\ell}) \right)$ based on the wining and losing time vectors $\mathbf{v}$ and $\bm{\ell}$. Specifically $\tilde{\mathbf{x}}_v^i$ and $\tilde{\mathbf{x}}_\ell^i$ are solutions that has won or lost one time during consultation. Note that a solution may appear in $\Phi_v$ or $\Phi_\ell$ more than one time since its winning or losing time is greater than $1$. $N_v$ and $N_\ell$ are the number of winning and losing solutions, calculated by the sum of all winning or losing times for all solutions. Correspondingly, we denote $\bm{p}_r = \left(p_r(\tilde{\mathbf{x}}_\ell^1),\dots, p_r(\tilde{\mathbf{x}}_\ell^{N_\ell})  \right)$. An estimator $ \hat{\bm{p}}_r$ for $\bm{p}_r$ can be calculated by solving the following problem:
\begin{equation}
    \hat{\bm{p}}_r = \arg\min_{\bm{p}_r \in \mathbb{R}^{N_\ell}} \frac{1}{N_\ell^2} \bm{p}_r^\top \Phi_{\ell}^\top \Phi_{\ell} \bm{p}_r - \frac{2}{N_\ell N_\ell} \bm{p}_r^\top \Phi_{\ell}^\top \Phi_{v} \bm{1}_{N_v},
    \label{eq:convex_density_objective}
\end{equation}
where $\bm{1}_{N_v}$ is the $N_{v}$-dimensional vector whose elements are all valued by $1$. The optimality is reached by when the derivative of \pref{eq:convex_density_objective} is zero, i.e.,
\begin{equation}
    \frac{2}{N_\ell^2} \Phi_{\ell}^\top \Phi_{\ell} \bm{p}_r - \frac{2}{N_\ell N_\ell} \Phi_{\ell}^\top \Phi_{v} \bm{1}_{N_v} = \bm{0}_k,
\end{equation}
where $\bm{0}_{k}$ is the $k$-dimensional vector whose elements are all valued by $0$. Correspondingly, we have
\begin{equation}
    \hat{\bm{p}}_r = \frac{N_\ell}{N_v} \left(\Phi_{\ell}^\top \Phi_{\ell}\right)^{-1} \Phi_{\ell}^\top \Phi_{v} \bm{1}_{N_v}.
    \label{eq:density_analytic}
\end{equation}
Note that $\hat{\bm{p}}_r$ is the estimation of true density ratio ${\bm{p}}_r$, namely the probability density function, on solutions $\tilde{\mathbf{x}}_\ell^i$, $i=1, \dots, N_\ell$. By assuming Gaussian distribution around SOI, we can further approximate the mean and variance by
\begin{equation}
    \tilde{\mathbf{x}}^*_\tau = \sum_{i=1}^{N_{\ell}} \tilde{\mathbf{x}}_\ell^i \hat{\bm{p}}_r^i, \quad \tilde{\Sigma}_\tau = \mathrm{diag}\left\{ \sum_{i=1}^{N_{\ell}} \tilde{\mathbf{x}}_\ell^{i\, 2} \hat{\bm{p}}_r^i - \tilde{\mathbf{x}}^{*\, 2}_\tau\right\},
\end{equation}
where we dismiss the covariance between any two dimensions in solution space for brevity. 
\begin{remark}
    The sample averages can also be performed by first estimating the winning and losing density distributions based on $v$ and $\ell$, then sample solutions on them to increase the sample numbers. This will improve the estimation in \pref{eq:density_ratio_equivalent} at the cost of complex density estimation. In addition, perturbations on solutions can be introduced for legal inverse operation of $\Phi_{\ell}^\top \Phi_{\ell}$ in \pref{eq:density_analytic}.
\end{remark}

\subsection{Convergence of preference distribution}
\label{app:converge}
\begin{proof}[Proof of Theorem \ref{thm:convergence}]
Form \pref{thm:regret}, when $N_\mathrm{consult}$ increases, the clustering dueling bandits can grasp DM's preference towards pairwise comparisons with increasing accuracy. Therefore, the convergence of preference distribution $\widetilde{\Pr}(\mathbf{x})$ will be mainly influenced by the behaviors of EMO algorithms as well as the landscape of MO problems. To this end, we consider two extreme cases: 
$i)$ the population distribution of candidate solutions from EMO generators remains unchanged, and $ii)$ the population of solutions are steered into shrinking regions by EMO algorithms. 

For the first case, we let the unchanged distribution be $\mathcal{N}(\mathbf{x}~|~\tilde{\mathbf{x}}^\ast_u,\tilde{\Sigma}_u)$, $\forall \tau > N_u$, where $N_u>1$ denotes a threshold number of consultation sessions. The convergence is presented by the following truth:
\begin{align*}
    \widetilde{\Pr}(\mathbf{x})= & \frac{1}{Z}\sum_{\tau=1}^{N_u}\frac{1}{\tilde{\sigma}_\tau}\mathcal{N}(\mathbf{x}~|~\tilde{\mathbf{x}}_\tau^\ast,\tilde{\Sigma}_\tau) + \frac{1}{Z}\sum_{\tau=N_u+1}^{N_\mathrm{consult}}\frac{1}{\tilde{\sigma}_u}\mathcal{N}(\mathbf{x}~|~\tilde{\mathbf{x}}^\ast_u,\tilde{\Sigma}_u) \\
    = & \frac{1}{{\sum_{\tau=1}^{N_u}\frac{1}{\tilde{\sigma}_\tau}} + (N_\mathrm{consult} - N_u) \frac{1}{\tilde{\sigma}_u}}\sum_{\tau=1}^{N_u}\frac{1}{\tilde{\sigma}_\tau}\mathcal{N}(\mathbf{x}~|~\tilde{\mathbf{x}}_\tau^\ast,\tilde{\Sigma}_\tau) \\
    & + \frac{\frac{1}{\tilde{\sigma}_u}(N_\mathrm{consult} - N_u) \mathcal{N}(\mathbf{x}~|~\tilde{\mathbf{x}}^\ast_u,\tilde{\Sigma}_u) }{{\sum_{\tau=1}^{N_u}\frac{1}{\tilde{\sigma}_\tau}} + (N_\mathrm{consult} - N_u) \frac{1}{\tilde{\sigma}_u}}.
\end{align*}
Note that $\frac{1}{\tilde{\sigma}_u} > 0$ and is constant. When $N_\mathrm{consult} \gg N_u$, we have $\widetilde{\Pr}(\mathbf{x}) \to 0 + \mathcal{N}(\mathbf{x}~|~\tilde{\mathbf{x}}^\ast_u,\tilde{\Sigma}_u)$, therefore $\widetilde{\Pr}(\mathbf{x})$ is convergent to $\mathcal{N}(\mathbf{x}~|~\tilde{\mathbf{x}}^\ast_u,\tilde{\Sigma}_u)$. 

For the second case, shrinking regions with increasing $N_\mathrm{consult}$ result in a uniform distribution in the context of Gaussian distribution around SOI. In this case winning and losing probability are close to each other. As a result, a large $\tilde{\sigma}_\tau$ will be computed. Therefore this distribution hardly contributes to the mixture distribution. Consequently, with shrinking regions, effect of consultation sessions decreases to zero, thus $\widetilde{\Pr}(\mathbf{x})$ convergent ultimately.

Based on these observations, two extreme cases can result in a convergent stable mixture distribution, covering all other cases in PBEMO between the two extreme ones. The proof is complete.
\end{proof}


\section{Algorithmic Implementation of Our \our\ Instances}
\label{app:step}

The structural pseudocode for \our\ is shown in~\cref{alg:DPBEMO}. The \our\ framework is made up of three component, consultation, preference elicitation and optimization module. In this paper, our contribution is mainly in consultation and preference elicitation module (highlighted by $rhd$). In consultation module, we proposed a clustering-based stochastic dueling bandit algorithm (pseudocode can be referenced in~\cref{alg:C-DTS}). While in this section, we will delineate the step-by-step process of three different preference elicitation module. The code of our algorithms and peer algorithms are available at~\url{https://github.com/COLA-Laboratory/EMOC/}.

\begin{algorithm}[t]
	\renewcommand{\algorithmicrequire}{\textbf{Input:}}
	\renewcommand{\algorithmicensure}{\textbf{Output:}}
	\caption{\our\ structure}
	\label{alg:DPBEMO}
	\begin{algorithmic}[1]
		\REQUIRE $G$ maximum generation, $N$ solutions in population.
		\STATE $\tau\leftarrow 0$; // $\tau$ represents the consultation session count.
        \STATE $current\_generation\leftarrow 0 $.
        \STATE $\widetilde{\Pr}_0(\cdot) \leftarrow 0$.
        \WHILE{$current\_ generation < G$}
        \STATE // \textit{Phase 1:Consult user using \texttt{C-DTS}.}
        \IF{time to consult (e.g. $current\_generation \ge 0.5 * G$ ) and $D_{KL}(\widetilde{\Pr}_{\tau-1}|| \widetilde{\Pr}_\tau) > \varepsilon $}
        \STATE $\rhd $run clustering-based stochastic dueling bandit (\cref{alg:C-DTS});
        \STATE $\tau\leftarrow \tau+1$;
        \ENDIF 
        \STATE // \textit{Phase 2: Preference elicitation and optimization.}
        \IF{$s =0$}
        \STATE Optimization;
        \ELSE 
        \STATE $\rhd$ Preference Elicitation, update $\widetilde{\Pr}_\tau(\mathbf{x})$;
        \STATE Optimization;
        \ENDIF
        \ENDWHILE
        \ENSURE Solutions $\mathbf{x}_i,i\in\{1,2,\ldots,N\}$.
	\end{algorithmic}
\end{algorithm}

\begin{algorithm}[t]
    \renewcommand{\algorithmicrequire}{\textbf{Input:}}
	\renewcommand{\algorithmicensure}{\textbf{Output:}}
    \caption{Clustering-based Stochastic Dueling Bandit (\texttt{C-DTS})}
    \label{alg:C-DTS}
    \begin{algorithmic}[1]
        \REQUIRE $T\in\{1,2,\ldots\}\cup\{\infty\}$, $N$ arms, $K$ subset.
        \STATE $\mathbf{v}\leftarrow\mathbf{0}_{N}$,  $\bm{\ell}\leftarrow\mathbf{0}_{N}$;
        \STATE ${B}\leftarrow \mathbf{0}_{K\times K}$, $u_{i,j} = 1$, $l_{i,j} = 0$ for $i,j\in \{1, \dots, K\}$;
        \FOR{$t=1,\ldots,T$}
        \STATE // \textit{Step 1: Choose the first candidate subset $\tilde{\mathcal{S}}^\prime$}
        \STATE $u_{i,j} = \frac{b_{i,j}}{b_{i,j}+b_{j,i}} + \sqrt{\alpha\log t/ (b_{i,j} + b_{j,i})}$, $l_{i,j} = \frac{b_{i,j}}{b_{i,j}+b_{j,i}} - \sqrt{\alpha\log t /( b_{i,j} + b_{j,i})}$, if $i\ne j$, and $u_{i,i} = \ell_{i,i}=1/2,\forall i$; // let $x/ 0:=1\ for\ any\ x$, when $b_{i,j} + b_{j,i} = 0$.
        \STATE $\tilde\zeta_i \leftarrow {1/ (K-1)}\sum_{j\ne i}\mathbb{I}(u_{i,j}>1/2)$; // Upper bound of the normalized Copeland score
        \STATE $\mathcal{C}^1\leftarrow \{i:\tilde\zeta_i=\max_i \tilde\zeta_i\}$;
            \FOR{$i,j=1,\ldots,K$ with $i<j$}
            \STATE Sample $\theta_{i,j}^{(1)}\sim \mathrm{Beta}(b_{i,j}+1,b_{j,i}+1)$;
            \STATE $\theta_{j,i}^{(1)}\leftarrow 1-\theta_{i,j}^{(1)}$;
            \ENDFOR
        \STATE $\tilde{\mathcal{S}}^\prime\leftarrow \arg\max_{i\in\mathcal{C}^1}\sum_{j\ne i}\mathbb{I}(\theta_{i,j}^{(1)}>1/2)$; //Choosing from  $\mathcal{C}^1$ to eliminate likely non-winner arms; Ties are broken randomly.
        \STATE //\textit{Step 2: Choose the second candidate subset $\tilde{\mathcal{S}}^{\prime\prime}$}
        \STATE $\mathcal{C}^2\leftarrow \{\tilde{\mathcal{S}}^i| l_{i,\prime} \leq 0.5 \}$;
        \STATE Sample $\theta_{i,\prime}^{(2)}\sim \mathrm{Beta}(b_{i,\prime} +1, b_{\prime, i}+1)$ for all $i\ne \tilde{\mathcal{S}}^\prime$, and let $\theta_{\prime, \prime}^{(2)}=1/2$;
        \STATE $\tilde{\mathcal{S}}^{\prime\prime}\leftarrow\argmax_{\tilde{\mathcal{S}}^i\in\mathcal{C}^2}\theta_{i,\prime}^{(2)}$;
        \STATE // \textit{Step 3: Query and update}
        \STATE Randomly select an arm $\mathbf{x}^\prime$ from $\tilde{\mathcal{S}}^{\prime}$, and $\mathbf{x}^{\prime\prime}$ from $\tilde{\mathcal{S}}^{\prime\prime}$. And observe the result.
            \IF{$\mathbf{x}^\prime\succ_\mathrm{p}\mathbf{x}^{\prime\prime}$}
            \STATE $b_{\prime,\prime\prime} \leftarrow b_{\prime,\prime\prime} + 1$ and $v_{\prime} \leftarrow v_{\prime} + 1$, $\ell_{\prime\prime} \leftarrow\ell_{\prime\prime} + 1$;
            \ELSE
            \STATE $b_{\prime\prime,\prime} \leftarrow b_{\prime\prime,\prime} + 1$ and $v_{\prime\prime} \leftarrow v_{\prime\prime} + 1$, $\ell_{\prime} \leftarrow\ell_{\prime} + 1$;
            \ENDIF
        \ENDFOR
        \ENSURE $\{\tilde{\mathcal{S}}^\star,\mathbf{v},\bm{\ell}\}$, where $\tilde{\mathcal{S}}^\star=\argmax_{\tilde{\mathcal{S}}^i}\ \tilde\zeta_i$
    \end{algorithmic}
\end{algorithm}

\subsection{\texttt{D-PBNSGA-II}}

This section will discuss how the learned preference information can be used in NSGA-II~\cite{KalyanmoyLME02}. Based on~\cite{DebS06}, solutions from the best non-domination levels are chosen front-wise as before and a modiﬁed crowding distance operator is used to choose a subset of solutions from the last front which cannot be entirely chosen to maintain the population size of the next population, the following steps are performed:
\begin{itemize}[leftmargin=12mm]
    \item[Step 1:] Before the first consultation session, the NSGA-II runs as usual without considering the preference information.
    \item[Step 2:] If it is time to consult user preferences (e.g., when we have evaluated the population for $50\%$ of the total generation), we will update the preference distribution $\widetilde{\Pr}(\mathbf{x})$ defined in~\pref{eq:preference_distribution}.
    \item[Step 3:] Between two interactions, the crowding distance of each solution will be evaluated by the predicted preference distribution $\widetilde{\Pr}(\mathbf{x})$ learned from the last consultation session.
\end{itemize}

\subsection{\texttt{D-PBMOEA/D}}

Following~\cite{LiCSY19}, MOEA/D~\cite{ZhangL07} is designed to use a set of evenly distributed weight vectors $W=\{\mathbf{w}^i\}_{i=1}^N$ to approximate the whole PF. The recommendation point learned from the consultation module is to adjust the distribution of weight vectors. The following four-step process is to achieve this purpose.
\begin{itemize}[leftmargin=12mm]
    \item[Step 1:] Before the first consultation session, the EMO algorithm runs as usual without considering any preference information.
    \item[Step 2:] If time to consult (e.g., when we have evaluated the population for $50\%$ of the total generation), the whole population is fed to consultation module and the three outputs will be recorded and used to update the predicted preference distribution for the current population as $\widetilde{\Pr}(\mathbf{x})$ defined in~\cref{eq:preference_distribution}. 
    \item[Step 3:] The weight vectors $W=\{\mathbf{w}^i\}_{i=1}^N$ will be projected to the preference distribution $\widetilde{\Pr}$ by using the weighted sum of inverse Gaussian distribution. For a specific weight vector $\mathbf{w}^i$:
    \begin{equation}
        \mathbf{w}^{i\prime} = \widetilde{\Pr}^{-1}(\mathbf{w}^i) = \frac{1}{Z}\sum_{\tau=1}^{N_\mathrm{consult}}\frac{1}{\tilde{\sigma}_\tau}\mathcal{N}^{-1}(\mathbf{w}^i~|~\tilde{\mathbf{w}}_\tau^\ast,\tilde{\Sigma}_\tau),
        \label{eq:weight_vector_shift}
    \end{equation}
    where $i=1,2,\ldots,N$ and $\tilde{\mathbf{w}}_\tau^\ast$ is the corresponding weight vector of $\tilde{\mathbf{x}}_\tau^\ast$ in $\widetilde{\Pr}(\mathbf{x})$ (\cref{eq:preference_distribution}). Output the adjusted $\mathbf{w}^{i\prime}$ as new weight vectors $W^\prime$ for next generation.
\end{itemize}

\subsection{\our\texttt{-DTS}}

This section will discuss how to utilize DTS as consultation module in ablation experiment. In each consultation session, the budget for pairwise comparison is limited to $100$. The optimization module is Pareto-based EMO algorithm, i.e., NSGA-II~\cite{KalyanmoyLME02}. Based on~\cite{DebS06}, solutions from the best non-domination levels are chosen front-wise as before and a modiﬁed crowding distance operator is used to choose a subset of solutions from the last front which cannot be entirely chosen to maintain the population size of the next population, the following steps are performed:
\begin{itemize}[leftmargin=12mm]
    \item[Step 1:] Before the first consultation session, the NSGA-II runs as usual without considering the preference information.
    \item[Step 2:] If it is time to consult (e.g., when we have evaluated the population for $50\%$ of the total generation), then the whole population will be fed into the DTS and the optimal arm will be recorded and used to update the predicted preference distribution $\widetilde{\Pr}(\mathbf{x})$ as~\cref{eq:preference_distribution} for the current population.
    \item[Step 3:] Between two interactions, the crowding distance of each solution will be evaluated by the predicted preference distribution $\widetilde{\Pr}(\mathbf{x})$ learned from the last consultation session.
\end{itemize}

\subsection{\our\texttt{-PBO}}

In this section, PBO~\cite{GonzalezDDL17} is utilized as the consultation module for \our. The PBO is initialized with $4$ pairwise comparisons and its query budget is limited to $10$. Also the acquisition function is set to Thompson sampling. The optimization module is chosen to be decomposition-based EMO algorithm, i.e., MOEA/D~\cite{ZhangL07}. Following~\cite{LiCSY19}, the decomposition-based EMO algorithm (e.g., MOEA/D~\cite{ZhangL07}) is designed to use a set of evenly distributed weight vectors $W=\{\mathbf{w}^i\}_{i=1}^N$ to approximate the whole PF. The recommendation point learned from the consultation module is to adjust the distribution of weight vectors. The following four-step process is to achieve this purpose.
\begin{itemize}[leftmargin=12mm]
    \item[Step 1:] Before the first consultation session, the EMO algorithm runs as usual without considering any preference information.
    \item[Step 2:] If time to consult (e.g., when we have evaluated the population for $50\%$ of the total generation), the whole population is fed to consultation module and the recommendation solution will be recorded and used to update the predicted preference distribution for the current population $\widetilde{\Pr}(\mathbf{x})$ as~\cref{eq:preference_distribution}. 
    \item[Step 4:] The weight vectors $W=\{\mathbf{w}^i\}_{i=1}^N$ will be projected to the preference distribution $\widetilde{\Pr}$ by using the weighted sum of inverse Gaussian distribution as defined in~\cref{eq:weight_vector_shift}. The adjusted weight vectors $W^\prime = \{\mathbf{w}^{i\prime}\}_{i=1}^N$ will be the new weight vectors for next generation.
\end{itemize}

We present the lookup table for key notations in the following table~\footnote{In Table \ref{tab:lookup}, we use the $i$-th arm to denote the $i$-th subset in the context of PBEMO.}.

\begin{table}[ht]
    \centering
    \caption{Lookup Table for Key Notations}
    \begin{tabular}{>{\centering\arraybackslash}m{2.5cm} >{\centering\arraybackslash}m{3cm} m{6cm}} 
        \toprule
        \textbf{Notation} & \textbf{Dimension} & \textbf{Description} \\
        \midrule
        $\mathbf{x}$ & $\mathbb{R}^n$ & $n$-dimensional decision vector \\
        ${x}_i$ & $\mathbb{R}$ & the $i$-th decision variable \\
        $F$ & $\mathbb{R}^m$ & $m$-dimensional objective vector \\
        $f_i$ & $\mathbb{R}$ & the $i$-th objective function \\
        $P$ & $\mathbb{R}^{K\times K}$ & preference matrix of $K$ arms \\
        $p_{i,j}$ & $[0,1]$ & winning probability of the $i$-th arm over the $j$-th arm \\
        $\zeta_i$ & $[0, 1]$ & the normalized Copeland score of the $i$-th arm \\
        $k^*$ & $\mathbb{N}^K$ & the Copeland winner in $K$ arms \\
        $B$ & $\mathbb{R}^{K\times K}$ & constructed winning matrix of $K$ arms in dueling bandits algorithm \\
        $b_{i,j}$ & $[0,1]$ &  number of time-slots when the $i$-th arm is preferred from the pair of the $i$-th and the $j$-th arms \\
        $\tilde{p}_{i,j}$ & $[0,1]$ &  approximated preference probability of the $i$-th arm over the $j$-th arm \\
        $\tilde{u}_{i,j}$ & $[0,1]$ &  upper confidence bound of $\tilde{p}_{i,j}$ \\
        $\tilde{l}_{i,j}$ & $[0,1]$ &  lower confidence bound of $\tilde{p}_{i,j}$  \\
        $\alpha$ & $\mathbb{R}^+$ &  parameter for confidence level  \\
        $t$ & $\mathbb{N}^+$ &  total number of comparisons so far \\
        $R_T$ & $\mathbb{R}^+$ &   expected cumulative regret \\
        $T$ & $\mathbb{N}^+$ &  maximum number of comparisons \\
        $P_s$ & $\mathbb{R}^{N\times N}$ &  solution-level preference matrix \\
        $p^s_{i,j}$ & $[0,1]$ & winning probability of $\mathbf{x}^i$ is preferred over $\mathbf{x}^j$ \\
        $\mathbf{v}$ & $\mathbb{N}^N$ & winning time vector of population of solutions \\
        $v_i$ & $\mathbb{N}$ & winning times of the $i$-th solution \\
        $\bm{\ell}$ & $\mathbb{N}^N$ & losing time vector of population of solutions \\
        $\ell_i$ & $\mathbb{N}$ & losing times of the $i$-th solution \\
        $\tilde{\zeta}_i$ & $[0, 1]$ & upper confidence Copeland score of the $i$-th solution \\
        $\theta_{i,j}^{(1)} $ & $[0, 1]$ & winning probability of $\tilde{\mathcal{S}}^i$ subsets $\tilde{\mathcal{S}}^j$ sampled from Thompson sampling \\
        $\theta_{i,\prime}^{(2)}$ & $[0, 1]$ & winning probability of $\tilde{\mathcal{S}}^i$ over $\tilde{\mathcal{S}}^{\prime}$ sampled from Thompson sampling \\
        \hline
        $\Omega$ & $\mathbb{R}^n$ & feasible set in the decision space  \\
        $\mathcal{S}$ & finite set & population of non-dominated solutions \\
        $\tilde{\mathcal{S}}^i$ & finite set & the $i$-th subset of population of non-dominated solutions \\
        $\mathcal{C}^1$ & finite set &  subset of population of non-dominated solutions with the highest upper confidence Copeland score \\
        $\tilde{\mathcal{S}}^\prime$ & finite set & subset most likely covers the SOI \\
        $\mathcal{C}^2$ & finite set &  subset of population of non-dominated solutions with  lower-confident winning probability over $\tilde{\mathcal{S}}^\prime$ \\
        $\tilde{\mathcal{S}}^{\prime\prime}$ & finite set & subset potentially preferred over $\tilde{\mathcal{S}}^\prime$ \\
        \bottomrule
    \end{tabular}
    \label{tab:lookup}
\end{table}




\section{Experimental Settings}
\label{app:experimental_setting}
\subsection{Benchmark Test Problems}
\label{app:problems}

This section introduces the problem definitions of two real-world scientific discovery problems considered in our experiments. Further, we summarize the key parameter settings in our experiments.

\subsubsection{Inverse RNA design}
An RNA sequence $\mathbf{x}$ of length $n$ is specified as a string of base nucleotides $x_1x_2\ldots x_n$ where $x_i\in\{A,C,G,U\}$ for $i=1,2,\ldots,n$. A secondary structure $\mathcal{P}$ for $\mathbf{x}$ is a set of paired indices where each pair $(i,j)\in\mathcal{P}$ indicates two distinct bases $x_ix_j\in\{CG,GC,AU,UA,GU,UG\}$ and each index from $1$ to $n$ can only be paired once. For example, given a target secondary structure $"(...)"$, the predicted sequence $x_1x_2x_3x_4x_5$ should satisfied that the $1^{st}$ and $5^{th}$ nucleotides in paired set $\{CG,GC,AU,UA,GU,UG\}$.

Reference from~\cite{Taneda10, ZhouDL0M023}, our inverse RNA design problem adopts two objective functions.
\paragraph{Stability}
The ensemble of an RNA sequence $\mathbf{x}$ is the set of all secondary structures that $\mathbf{x}$ can possibly fold into, denoted as $\mathcal{Y}(\mathbf{x})$. The free energy $\Delta G(\mathbf{x},\mathbf{y})$ is used to characterize the stability of $\mathbf{y}\in\mathcal{Y}(\mathbf{x})$. The lower the free energy $\Delta G(\mathbf{x},\mathbf{y})$, the more stable the secondary structure $\mathbf{y}$ for $\mathbf{x}$. The structure with the minimum free energy (MFE) is the most stable structure in the ensemble, i.e., MFE structure.
\begin{equation}
    f_1 = MFE(\mathbf{x}) = \argmin_{\mathbf{y}\in \mathcal{Y}} \Delta G(\mathbf{x}, \mathbf{y}).
    \label{eq:MFE}
\end{equation}
Note that ties for $\argmin$ are broken arbitrarily, thus there could be multiple MFE structures for given $\mathbf{x}$. Technically, $MFE(\mathbf{x})$ should be a set. In our experiment, the MFE is calculate by ViennaRNA\footnote{https://github.com/ViennaRNA/ViennaRNA} package.

\paragraph{Similarity}
The second objective function is to assess the similarity between the best secondary structure of our predicted sequence and the target structure.
\begin{equation}
    \sigma = (n-d)/n,
    \label{eq:similarity}
\end{equation}
where $d$ is the Hamming distance between our predicted structure and target structure, and $\sigma\in[0,1]$. For example, when the target structure is  $"(...)"$ and the predicted secondary structure is $"....."$, $d = 2$ and $\sigma = (5-2)/5 = 0.6$. The bigger the similarity $\sigma$, the more precise our predicted structure is. Since our MO problems are minimization problems, the second objective should be:
\begin{equation}
    f_2 = 1-\sigma = d/n.
\end{equation}

\subsubsection{Protein Structure Prediction}
\label{sec:psp}

Protein structure prediction (PSP) is the inference of the three-dimensional structure of a protein from its amino acid sequence — that is, the prediction of its secondary and tertiary structure from primary structure.

To computationally address PSP, the initial step involves constructing protein conformations using a suitable model. Traditional Protein Data Bank files contain Cartesian coordinates of all atoms in a protein, often totaling over $1000$ atoms for a modest $70$-amino acid sequence. This abundance of coordinates renders Protein Backbone Exploration with EMO algorithms nearly impracticable. To mitigate this, protein structures are represented using torsion and dihedral angles, obtained by converting atom coordinates along consecutive bonds, effectively reducing the search space while retaining essential structural information. Protein backbones are primarily defined by three torsion angles: $\Phi$ (around the $-N-C_\alpha-$ bond), $\Psi$ (around the $-C_\alpha-C-$ bond), and $\Omega$ (around the $-C-N-$ peptide bond). Additionally, each amino acid residue may feature varying numbers of side-chain torsion angles ($\chi_i$, $i\in{1,2,3,4}$) depending on its structure. Despite this, the conformational search space remains extensive. To address this, secondary structures and backbone-independent rotamer libraries~\cite{DunbrackRC97} are utilized to further constrain backbone and side-chain angles, respectively. These libraries also integrate biological insights from protein secondary structures. Moreover, this approach necessitates less optimization space compared to those relying on contact maps. For instance, reconstructing the main and side chains of a $50$-amino acid sequence requires only $250-300$ angles using dihedral angles, whereas contact maps demand $2500$ distances for the main chain alone.

In this paper, we consider PSP tasks as a four-objective optimization problem. Each objective function is an energy function that determines a regulatory property of a protein.

\paragraph{CHARMM}
The first objective function $f_1$ is CHARMM that calculates several energy terms that determine protein properties. The corresponding energy function is formulated as:
\begin{equation}
    \begin{aligned}
        f_1=E_C &= \sum_{bounds} k_b(b-b_0)^2+\sum_{angles} k_\theta(\theta-\theta_0)^2+\\
            &\sum_{dihedrals}k_\phi[1+\cos(n\phi-\delta)]+\\
            &\sum_{improper} k_\omega(\omega-\omega_0)^2+\sum_{Urey-Bradly} k_u(u-u_0)^2+\\
            &\sum_{Van-der-Vaals}\varepsilon_{ij}[(\frac{R_{ij}}{r_{ij}})^{12}-(\frac{R_{ij}}{r_{ij}})^6] +\sum_{charge}\frac{q_iq_j}{e\cdot r_{ij}},
    \end{aligned}
\end{equation}
where $b$ is the bound length, $b_0$ is the ideal length, $\theta$ is the angle formed by three atoms involved in two connected bonds, $\phi$ is the torsion angle, $\omega$ is the improper angle, $u$ is the distance between two atoms of nonbonded interactions in an angle, $k_b,k_\theta,k_\phi, \delta, n, k_\omega, \omega_0, k_u,u_0$ are constants. The bonds, angles dihedrals, improper, and Urey-Bradley terms belong to the bond term, while the non-bond term includes Van-der-Waals and charge terms. Apart from these, Van-der-Waals and charge terms are used to calculate the Van der Waals force and electrostatic energy between a pair of atoms $(i,j)$, respectively.

\paragraph{dDFIRE}
The second one $f_2$ is dDFIRE that follows two interactions of atom pairs, i.e., that between polar and nonpolar atoms, and that between polar and non-hydrogen-bonded polar atoms, to get excellent solution for potential. The energy of atoms $(p,q)$ pair is calculated as:
 \begin{equation}
    \bar{\mu}_D(r,\theta_p,\theta_q,\theta_{pq})=\begin{cases}
        -RT\ln\frac{N_o(\theta_p,\theta_q,\theta_pq,r)}{(\frac{r}{r_{cut}})^\alpha \frac{\Delta r}{\Delta r_{cut}} N_o(\theta_p,\theta_q,\theta_{pq},r_{cut})}, & r<r_{cut},\\
        0, &r\ge r_{cut},
    \end{cases}
\end{equation}
where $N_o(\theta_p,\theta_q,\theta_pq,r)$ is the number of polar atom pair $(p,q)$ at distance $r$. $\theta_p$, $\theta_q$ and $\theta_{pq}$ are orientation angles of polar atoms. $R$ is the gas constant, temperature $T$ is set as $300\ K$, $r$ is the distance between an atom pair, the cutoff distance $r_{cut}$ is set to $14.5\mathring{A}$, and $\Delta r(\Delta r_{cut})$ is the bin width at $r\ (r_{cut})$. The value of parameter $\alpha$ is proven to be $1.51$~\cite{YangZ08}. The total dDFIRE potential is the summation of energy of all possible atom pairs $(p,q)$ in the protein structure:
\begin{equation}
f_2=E_D=\sum_{r_{pq},\theta_p,\theta_q,\theta_{pq}}\bar{\mu}_D(r_{pq},\theta_p,\theta_q,\theta_{pq}).
\end{equation}

\paragraph{Rosetta}
The third objective function $f_3$ is Rosetta, a composite energy energy function based on physics and knowledge, each potentials of it is intricately designed:
        \begin{equation}
            f_3 =E_R= \sum_i w_i E_i(\Theta_i, P),
        \end{equation}
        where $\mathrm{w}$ and $\Theta$ are the weight and degree of freedom of each energy term, respectively. $P$ is the protein structure. Details on Rosetta can be referenced in~\cite{AlfordLJ17}.
\paragraph{RWplus}
The last objective function $f_4$ is RWplus, whose calculation is the same as that of dDFIRE. However, it emphasizes more about the potential of short-range interactions:

\begin{equation}
    f_4=E_{RW}=\sum_{\alpha,\beta}-kT\ln\frac{N_o(\alpha,\beta,R)}{N_e(\alpha,\beta,R)} + \sum_{A,B}-0.1\delta(A,B)kT\ln\frac{N_o(A,B,O_{AB})}{N_e(A,B,O_{AB})},
\end{equation}
where $k$ is the Boltzmann constant and $T$ is the temperature in units of Kelvin. $N_o(\alpha,\beta,R)$ and $N_e(\alpha,\beta,R)$ are the observed and expected number of atom pairs, with $\alpha$ and $\beta$ atom types within a distance $R$, respectively. Similarly, $N_o(A,B,O_{AB})$ and $N_e(A,B,O_{AB})$ are the observed and expected number of atom vector pairs with $(A,B)$ type within a relative orientation $O_{AB}$ respectively. The type of atom vector pairs between atom types $\alpha$ and $\beta$ is $(A,B)$. $\delta(A,B)$ is $0$ when vector pairs $A$ and $B$ are not contact, and $1$ vice versa. The total RWplus potential is the sum of these atom pairs' energies. 

The problem formulation of PSP can be referenced in~\cite{ZhangGLXC23}~\footnote{https://github.com/zhangzm0128/PCM-Protein-Structure-Prediction}.

\subsection{Parameter Setting}
\label{app:parameters}

This section lists several parameters used in experiments, including the parameters in EMO algorithms, and other parameters we need in \texttt{D-PBEMO}:
\begin{itemize}
    \item The probability and distribution of index for SBX: $p_c=1.0$ and $\eta_c=20$;
    \item The mutation probability and distribution of index for polynomial mutation operator: $p_m=\frac{1}{m}$ and $\eta_m=20$;
    \item The population size for different problems can be referenced in~\cref{tab:pop_num};
    \item The maximum number of generation $G$ can be referenced in~\cref{tab:max_gen};
    \item For \texttt{I-MOEA/D-PLVF} and \texttt{I-NSGA2/LTR}, the number of incumbent candidate presented to decison maker (DM) for consultation: $\mu=10$;
    \item  For \texttt{I-MOEA/D-PLVF} and \texttt{I-NSGA2/LTR}, there exists the number of consecutive consultation session $\tau$: $\tau=25$;
    \item The step size of reference point update $\eta$ for MOEA/D-series PBEMO algorithms are set to: $\eta=0.3$.
    \item The reference points in different test problems can be referenced in~\cref{tab:reference_point}. The reference point of real-world problems will be introduced in their experiment results.
    \item While the winning probability is usually known in advance in dueling bandits, it however does not exist in the context of MO. In this paper, to project a MO solution to an arm in the bandit setting, we define the winning probability $p_{i,j}$ of a pair of candidate solutions $\langle\mathbf{x}_i,\mathbf{x}_j\rangle,\forall i,j\in\{1,\ldots,N\}$ as~\cite{DingZ18}:
\begin{equation}
    p_{i,j}=\mu(\Pr(\mathbf{x}_i)-\Pr(\mathbf{x}_j)),
    \label{eq:pij}
\end{equation}
where $\mu(a)=1/(1+\exp(-a))$, $\Pr$ denotes the real preference distribution of each arm. For example, we set $\Pr(\mathbf{x})=\mathcal{N}(\mathbf{x}|\mathbf{x}^\star,{\sigma^{\star}}^2)$, where $\mathbf{x}^\star$ is decision vector of the reference point $\mathbf{z}^\ast$ (\cref{tab:reference_point}) and $\sigma^\star$ is a constant value initially set as $0.1$. Note that $p_{i,j}$ can take other forms in case it satisfies the conditions in~\cref{sec:dueling bandit}. 
\item The subset number $K$: 
\begin{equation*}
K=
    \begin{cases}
    10 &\text{if } m=2,\\
    8 &\text{if } m=3,\\
    12 &\text{if } m=5,\\
    14 &\text{if } m = 8,\\
    18 &\text{if } m=10.
    \end{cases}
\end{equation*}
\item The budget for dueling bandits algorithm $T$ is set as $100$.
\item The parameter $\alpha$ in $u_{i,j}$ and $l_{i,j}$ is set as $\alpha=0.6$.
\end{itemize}


\begin{table}[t]
	\centering
  	\caption{Population size in different problems}
   \resizebox{0.3\linewidth}{!}{
   	\begin{tabular}{c|c|l}
    \toprule
    Problem      & $m$     & $N$ \\
    \midrule
    ZDT   & $2$     & $100$ \\\midrule
    \multirow{4}[0]{*}{DTLZ} & $3$     & $64$ \\
          & $5$     & $128$ \\
          & $8$     & $224$ \\
          & $10$    & $288$ \\\midrule
    WFG   & $3$     & $64$ \\\midrule
    PSP   & $4$     & $50$ \\ \midrule
    Inverse RNA & $2$ & 100\\
    \bottomrule
    \end{tabular}%
    }
  	\label{tab:pop_num}%
\end{table}%

\begin{table}[t]
	\centering
  	\caption{Maximum generation in different problems}
   \resizebox{0.4\linewidth}{!}{
    \begin{tabular}{c|l}
    \toprule
    Problem & G \\
    \midrule
    ZDT   & $250$ \\
    DTLZ1 & $500+50(m-2)$ \\
    DTLZ2 & $200+50(m-2)$ \\
    DTLZ3 & $1000+50(m-2)$ \\
    DTLZ4 & $200+50(m-2)$ \\
    WFG   & $1000+50(m-2)$ \\
    PSP   & $300$ \\
    Inverse RNA & $250$ \\
    \bottomrule
    \end{tabular}%
    }
  	\label{tab:max_gen}%
\end{table}%

\begin{table}[ht]
  \centering
  \caption{The settings of the \lq golden solution\rq\ $\mathbf{z}^\ast$ for different benchmark problems (represented in the objective space).}
  \resizebox{0.6\linewidth}{!}{
    \begin{tabular}{c|c|l}
    \toprule
    \multicolumn{1}{c|}{problem} & $m$     & \multicolumn{1}{c}{reference point} \\
    \midrule
    ZDT1 & $2$     & $(0.3,0.4)^\top$ \\
    ZDT2 & $2$     & $(0.2,0.8)^\top$ \\
    ZDT3 & $2$     & $(0.15,0.4)^\top$ \\
    ZDT4 & $2$     & $(0.3,0.4)^\top$ \\
    ZDT6 & $2$     & $(0.9,0.3)^\top$ \\
    \midrule
    WFG1 & $3$     & $(0.2,0.5,0.6)^\top$ \\
    WFG3 & $3$     & $(0.6,0.8,0.8)^\top$ \\
    WFG5 & $3$     & $(0.3,0.7,0.3)^\top$ \\
    WFG7 & $3$     & $(0.7,0.4,0.4)^\top$ \\
    \midrule
    \multicolumn{1}{c|}{\multirow{4}[2]{*}{DTLZ1}} & 3     & $(0.3,0.3,0.2)^\top$ \\
          & $5$     & $(0.2,0.1,0.1,0.3,0.4)^\top$ \\
          & $8$     & $(0.1,0.2,0.1,0.4,0.4,0.1,0.3,0.1)^\top$ \\
          & $10$    & $(0.02,0.01,0.06,0.04,0.01,0.02,0.03,0.05,0.08)^\top$ \\
    \midrule
    \multicolumn{1}{c|}{\multirow{4}[2]{*}{DTLZ2}} & 3     & $(0.7,0.8,0.5)^\top$ \\
          & $5$     & $(0.7,0.6,0.3,0.8,0.5)^\top$ \\
          & $8$     & $(0.6,0.5,0.75,0.2,0.3,0.55,0.7,0.6)^\top$ \\
          & $10$    & $(0.3,0.3,0.3,0.1,0.3,0.55,0.35,0.35,0.25,0.45)^\top$ \\
    \midrule
    \multicolumn{1}{c|}{\multirow{4}[2]{*}{DTLZ3}} & 3     & $(0.7,0.8,0.5)^\top$ \\
          & $5$     & $(0.7,0.6,0.3,0.8,0.5)^\top$ \\
          & $8$     & $(0.6,0.5,0.75,0.2,0.3,0.55,0.7,0.6)^\top$ \\
          & $10$    & $(0.3,0.3,0.3,0.1,0.3,0.55,0.35,0.35,0.25,0.45)^\top$ \\
    \midrule
        \multicolumn{1}{c}{\multirow{4}[2]{*}{DTLZ4}} & 3     & $(0.7,0.8,0.5)^\top$ \\
          & $5$     & $(0.7,0.6,0.3,0.8,0.5)^\top$ \\
          & $8$     & $(0.6,0.5,0.75,0.2,0.3,0.55,0.7,0.6)^\top$ \\
          & $10$    & $(0.3,0.3,0.3,0.1,0.3,0.55,0.35,0.35,0.25,0.45)^\top$ \\
    \midrule
    \multicolumn{1}{c|}{\multirow{4}[2]{*}{DTLZ5}} & 3     & $(0.2,0.3,0.6)^\top$ \\
          & $5$     & $(0.12,0.12,0.17,0.24,0.7)^\top$ \\
          & $8$     & $(0.04,0.04,0.0566,0.8,0.113,0.16,0.2263,0.68)^\top$ \\
          & $10$    & $(0,0,0,0,0.0096,0.027,0.082,0.25,0.75,0.08)^\top$ \\
    \midrule
    \multicolumn{1}{c|}{\multirow{4}[2]{*}{DTLZ6}} & 3     & $(0.2,0.3,0.6)^\top$ \\
          & $5$     & $(0.12,0.12,0.17,0.24,0.7)^\top$ \\
          & $8$     & $(0.04,0.04,0.0566,0.8,0.113,0.16,0.2263,0.68)^\top$ \\
          & $10$    & $(0,0,0,0,0.0096,0.027,0.082,0.25,0.75,0.08)^\top$ \\
    \bottomrule
    \end{tabular}%
    }
  \label{tab:reference_point}%
\end{table}%

\subsection{Statistical Test}
\label{app:stastical_test}
\paragraph{Wilcoxon rank-sum test}
To offer a statistical interpretation of the significance of comparison results, we conduct each experiment 20 times. To analyze the data, we employ the Wilcoxon signed-rank test~\cite{Wilcoxon92} in our empirical study.

The Wilcoxon signed-rank test, a non-parametric statistical test, is utilized to assess the significance of our findings. The test is advantageous as it makes minimal assumptions about the data's underlying distribution. It has been widely recommended in empirical studies within the EA community~\cite{DerracGMH11}. In our experiment, we have set the significance level to $p=0.05$.

\paragraph{Scott-Knott rank test}
Instead of merely comparing the raw $\epsilon^\star(\mathcal{S})$ and $\bar{\epsilon}(\mathcal{S})$ values, we apply the Scott-Knott test~\cite{MittasA13} to rank the performance of different peer techniques over 31 runs on each experiment. In a nutshell, the Scott-Knott test uses a statistical test and effect size to divide the performance of peer algorithms into several clusters. In particular, the performance of peer algorithms within the same cluster is statistically equivalent. Note that the clustering process terminates until no split can be made. Finally, each cluster can be assigned a rank according to the mean $\epsilon^\star(\mathcal{S})$ or $\bar{\epsilon}(\mathcal{S})$ values achieved by the peer algorithms within the cluster. In particular, since a smaller $\epsilon^\star(\mathcal{S})$ and $\bar{\epsilon}(\mathcal{S})$ value is preferred, the smaller the rank is, the better performance of the technique achieves.

\section{Experiment on Test Problem Suite}
\label{app:MOP}

\subsection{Population Results}
In this section, we show the results of our proposed method running on ZDT, DTLZ, and WFG test suites.

From the selected plots of the final non-dominated solutions obtained by different algorithms shown in~\pref{fig:compare_pop} (full results are in Figures~\ref{fig:ZDT_pop} to~\ref{fig:WFG}), we can see that the solutions found by our \our\ instances are more concentrated on the \lq golden solution\rq\ while the others are more scattered. Furthermore, the superiority of our proposed \our\ algorithm instances become more evident when tackling problems with many objectives, i.e., $m\geq 3$.

The population results of \our\ running on ZDT1$\sim$ZDT4, and ZDT6 are shown in~\cref{fig:ZDT_pop}. The population results running on DTLZ1$\sim$DTLZ4 ($m=\{3,5,8,10\}$) are shown in~\cref{fig:DTLZ3D_pop}~\cref{fig:DTLZ5D_pop}~\cref{fig:DTLZ8D_pop}~\cref{fig:DTLZ10D_pop} respectively. The results running on WFG ($m=3$) are shown in~\cref{fig:WFG}.

The performance metrics are shown in~\cref{tab:minDis_peer} and~\cref{tab:expDis_peer}.

\begin{figure}[t]
    \centering
    \vspace{-1em}
    \includegraphics[width=\linewidth]{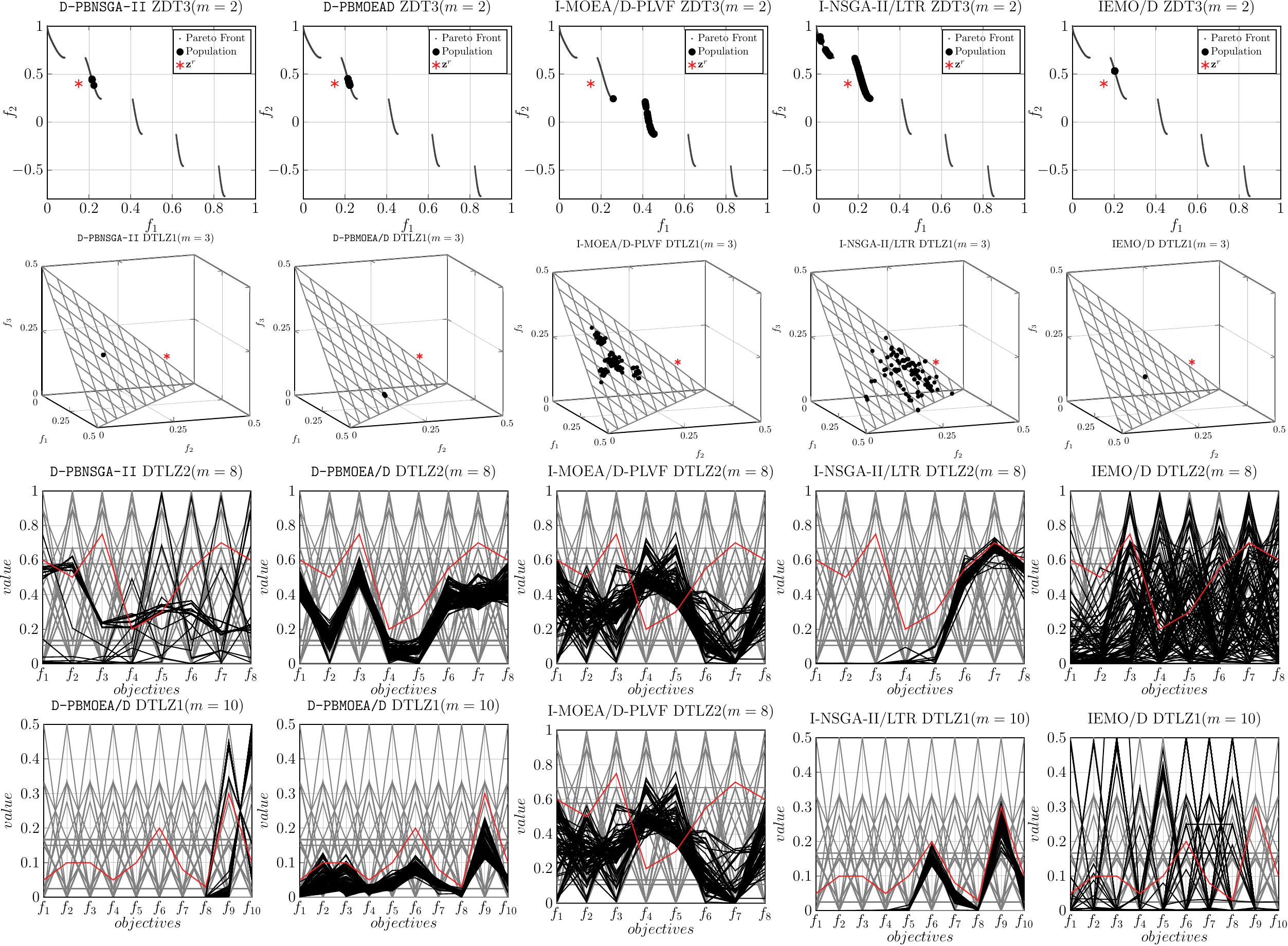}
    \caption{Selected plots of the final populations of a \our\ instance and peer algorithms. The red star (\textcolor{red}{$\ast$}) or line (\textcolor{red}{---}) represents reference point, the gray lines (\textcolor{gray}{---}) represent PF, and the black points ($\bullet$) or lines (---) represent the non-dominated solutions obtained by an algorithm.}
    \label{fig:compare_pop}
\end{figure}

\begin{table}[t]
  \centering
  \caption{The mean(std) of $\epsilon^\star(\mathcal{S})$ obtained by our proposed \our\ algorithm instances against the peer algorithms.}
  \resizebox{0.9\linewidth}{!}{
    \begin{tabular}{ccccccc}
    \toprule
    \textsc{PROBLEM} & \multicolumn{1}{c}{$m$} & \texttt{D-PBNSGA-II} & \texttt{D-PBMOEA/D} & \texttt{I-MOEA/D-PLVF} & \texttt{I-NSGA-II/LTR} & \texttt{IEMO/D} \\
    \midrule
    ZDT1  & 2     & \cellcolor[rgb]{ .588,  .588,  .588}0.039(2.40E-7) & 0.067(1.85E-3) & 0.054(1.23E-3)\dag & 0.064(5.99E-4)\dag & 0.066(8.83E-3) \\
    ZDT2  & 2     & 0.172(1.71E-3)\ddag & 0.208(1.26E-3)\ddag & 0.274(1.07E-2) & \cellcolor[rgb]{ .588,  .588,  .588}0.121(4.35E-4) & 0.294(5.59E-2) \\
    ZDT3  & 2     & 0.086(5.87E-4)\ddag & 0.098(2.17E-3)\ddag & 0.362(4.32E-2) & \cellcolor[rgb]{ .588,  .588,  .588}0.05(4.11E-4) & 0.117(6.93E-4) \\
    ZDT4  & 2     & 0.141(1.91E-2) & 0.114(7.88E-3) & \cellcolor[rgb]{ .588,  .588,  .588}0.071(3.93E-3) & 0.078(5.34E-3) & 0.086(5.37E-3) \\
    ZDT6  & 2     & \cellcolor[rgb]{ .588,  .588,  .588}0.046(1.76E-6) & 0.055(2.16E-5) & 0.106(1.07E-2)\dag & 0.048(7.62E-5) & 0.052(5.77E-6)\dag \\
    \midrule
    WFG1  & 3     & 2.24(6.44E-4)\ddag & 2.36(3.17E-1)\ddag & \cellcolor[rgb]{ .588,  .588,  .588}2.02(3.81E-3) & 2.2(2.30E-2) & 2.48(9.35E-2) \\
    WFG3  & 3     & 1.7(1.17E-1)\ddag & 1.01(4.48E-2)\ddag & \cellcolor[rgb]{ .588,  .588,  .588}0.658(1.03E-3) & 0.835(2.09E-3) & 0.743(6.97E-5) \\
    WFG5  & 3     & \cellcolor[rgb]{ .588,  .588,  .588}1.29(3.74E-1) & 1.62(1.23e+00) & 2.24(2.27E-1)\dag & 1.59(1.86E-2)\dag & 2.64(2.28E-4)\dag \\
    WFG7  & 3     & \cellcolor[rgb]{ .588,  .588,  .588}1.1(3.78E-1) & 1.79(1.58E-1) & 2.12(3.19E-1)\dag & 1.5(4.39E-4) & 1.49(4.53E-3)\dag \\
    \midrule
    \multirow{4}[2]{*}{DTLZ1} & 3     & \cellcolor[rgb]{ .588,  .588,  .588}0.144(1.48E-3) & 0.194(3.11E-4) & 0.171(1.04E-3)\dag & 0.178(4.41E-4)\dag & 0.171(1.11E-4)\dag \\
          & 5     & 0.417(2.20E-2)\ddag & 0.318(3.99E-4)\ddag & 0.25(1.69E-2) & \cellcolor[rgb]{ .588,  .588,  .588}0.198(4.92E-3) & 0.302(3.38E-5) \\
          & 8     & 0.526(7.62E-1)\ddag & 0.512(2.81E-4)\ddag & \cellcolor[rgb]{ .588,  .588,  .588}0.364(3.34E-2) & 0.367(1.78E-2) & 0.473(7.06E-5) \\
          & 10    & 0.5(4.91E-3)\ddag & 0.25(6.59E-4)\ddag & \cellcolor[rgb]{ .588,  .588,  .588}0.208(6.88E-3) & 0.246(4.95E-3) & 0.224(2.97E-4) \\
    \midrule
    \multirow{4}[2]{*}{DTLZ2} & 3     & 0.237(1.74E-3)\ddag & 0.213(1.77E-3)\ddag & 0.256(1.03E-2) & 0.177(5.09E-5) & \cellcolor[rgb]{ .588,  .588,  .588}0.176(1.67E-5) \\
          & 5     & 0.669(2.78E-2)\ddag & 0.507(4.02E-3)\ddag & 0.504(8.40E-3) & 0.594(3.74E-2) & \cellcolor[rgb]{ .588,  .588,  .588}0.356(1.34E-4) \\
          & 8     & \cellcolor[rgb]{ .588,  .588,  .588}0.589(3.31E-2) & 0.714(2.53E-3) & 0.623(2.87E-2)\dag & 1.06(7.44E-3)\dag & 0.647(4.35E-3)\dag \\
          & 10    & 0.675(1.07E-1) & \cellcolor[rgb]{ .588,  .588,  .588}0.336(4.38E-3) & 0.684(2.87E-2) & 0.845(1.74E-2)\dag & 0.447(8.80E-3)\dag \\
    \midrule
    \multirow{4}[1]{*}{DTLZ3} & 3     & 0.552(8.33E-2)\ddag & 0.265(6.54E-3)\ddag & 1.8(2.42e+00) & \cellcolor[rgb]{ .588,  .588,  .588}0.192(4.92E-3) & 0.912(3.99E-1) \\
          & 5     & 0.702(5.80E-1)\ddag & 0.498(7.69E-3)\ddag & 1.13(5.84E-1) & 0.985(5.18E-1) & \cellcolor[rgb]{ .588,  .588,  .588}0.355(6.75E-5) \\
          & 8     & \cellcolor[rgb]{ .588,  .588,  .588}0.592(1.58E-1) & 0.701(3.39E-3) & 0.786(3.77E-2)\dag & 2.28(3.64e+00) & 0.625(4.35E-3)\dag \\
          & 10    & \cellcolor[rgb]{ .588,  .588,  .588}0.361(4.24E-1) & 0.385(1.01E-2) & 0.637(1.52E-2)\dag & 1.35(4.79E-1) & 0.431(7.60E-3)\dag \\
    \midrule
    
    \multirow{4}[1]{*}{DTLZ4} & 3     & 0.551(9.04E-2)\ddag & 0.618(9.03E-2) & 0.621(9.88E-2) & \cellcolor[rgb]{ .588,  .588,  .588}0.474(9.66E-2) & 0.553(8.92E-2) \\
          & 5     & 0.877(3.90E-2)\ddag & 0.612(2.75E-2) & \cellcolor[rgb]{ .588,  .588,  .588}0.583(2.27E-2) & 1.16(1.15E-2) & 0.664(6.38E-2) \\
          & 8     & 0.875(2.18E-2)\ddag & 0.851(1.17E-2) & \cellcolor[rgb]{ .588,  .588,  .588}0.807(7.42E-2) & 1.5(2.09E-6) & 0.835(3.24E-2) \\
          & 10    & 0.773(1.88E-1)\ddag & 0.6(2.26E-2) & 0.668(1.69E-2) & 1.26(1.10E-8) & \cellcolor[rgb]{ .588,  .588,  .588}0.556(1.65E-2) \\
    \midrule
    \multirow{4}[2]{*}{DTLZ5} & 3     & 0.336(6.22E-5) & \cellcolor[rgb]{ .588,  .588,  .588}0.312(6.14E-7) & 0.327(1.15E-3) & 0.312(3.10E-6) & 0.321(5.68E-9)\dag \\
          & 5     & 0.226(8.72E-1) & \cellcolor[rgb]{ .588,  .588,  .588}0.223(1.09E-6) & 0.344(4.26E-2)\dag & 0.426(1.52E-2)\dag & 0.23(3.00E-6)\dag \\
          & 8     & 0.756(3.11E-1) & \cellcolor[rgb]{ .588,  .588,  .588}0.721(1.12E-4) & 0.8(7.53E-3)\dag & 0.734(1.14E-3) & 0.736(1.42E-7)\dag \\
          & 10    & \cellcolor[rgb]{ .588,  .588,  .588}0.538(7.42E-1) & 0.581(1.17E-2) & 0.779(1.49E-2)\dag & 1.51(5.33E-2)\dag & 1.22(1.37E-9)\dag \\
    \midrule
    \multirow{4}[2]{*}{DTLZ6} & 3     & 0.444(3.49E-2)\ddag & 0.425(1.06E-3) & 0.459(2.43E-3) & 1.54(1.40E-2) & \cellcolor[rgb]{ .588,  .588,  .588}0.424(7.19E-4) \\
          & 5     & 0.449(7.50E-1) & \cellcolor[rgb]{ .588,  .588,  .588}0.33(9.20E-4) & 0.4(4.59E-2) & 3.41(2.17e+00)\dag & 0.33(1.40E-3) \\
          & 8     & 0.941(5.30E-1) & \cellcolor[rgb]{ .588,  .588,  .588}0.735(3.99E-4) & 1.11(2.28E-1)\dag & 2.76(5.79E-1)\dag & 0.741(2.75E-4) \\
          & 10    & 1.03(3.71E-1) & \cellcolor[rgb]{ .588,  .588,  .588}0.645(1.62E-2) & 0.889(2.11E-2)\dag & 7.54(1.06e+00)\dag & 1.29(6.01E-4)\dag \\
    \bottomrule
    \end{tabular}%
    }
     \begin{tablenotes}
        \item[1] {\dag} denotes our proposed method significantly outperforms other peer algorithms according to the Wilcoxon's rank sum test at a 0.05 significance level;
        \item[2] {\ddag} denotes the corresponding peer algorithm outperforms our proposed algorithm.
    \end{tablenotes}
  \label{tab:minDis_peer}%
\end{table}%

\begin{table}[t]
  \centering
  \caption{The mean(std) of $\bar\epsilon(\mathcal{S})$ obtained by our proposed \our\ algorithm instances against the peer algorithms.}
  \resizebox{0.9\linewidth}{!}{
    \begin{tabular}{ccccccc}
    \toprule
    \textsc{PROBLEM} & \multicolumn{1}{c}{$m$} & \texttt{D-PBNSGA-II} & \texttt{D-PBMOEA/D} & \texttt{I-MOEA/D-PLVF} & \texttt{I-NSGA-II/LTR} & \texttt{IEMO/D} \\
    \midrule
    ZDT1  & 2     & \cellcolor[rgb]{ .588,  .588,  .588}0.04(6.47E-7) & 0.126(1.41E-3) & 0.146(3.37E-3)\dag & 0.162(1.24E-3)\dag & 0.082(1.14E-2) \\
    ZDT2  & 2     & \cellcolor[rgb]{ .588,  .588,  .588}0.173(1.73E-3) & 0.683(9.54E-2) & 0.426(1.85E-2)\dag & 0.208(1.64E-3) & 0.366(9.23E-2) \\
    ZDT3  & 2     & \cellcolor[rgb]{ .588,  .588,  .588}0.088(5.80E-4) & 0.209(3.26E-3) & 0.593(4.02E-2)\dag & 0.188(2.15E-3)\dag & 0.129(8.78E-4)\dag \\
    ZDT4  & 2     & 0.161(2.29E-2) & 0.166(6.78E-3) & 0.145(7.41E-3) & 0.337(2.80E-2) & \cellcolor[rgb]{ .588,  .588,  .588}0.11(1.28E-2) \\
    ZDT6  & 2     & \cellcolor[rgb]{ .588,  .588,  .588}0.052(2.30E-4) & 0.134(4.59E-5) & 0.226(1.81E-2)\dag & 0.33(1.21E-3)\dag & 0.053(6.69E-6)\dag \\
    \midrule
    WFG1  & 3     & 2.31(5.73E-4)\ddag & 2.37(3.19E-1)\ddag & \cellcolor[rgb]{ .588,  .588,  .588}2.08(6.04E-3) & 2.17(3.73E-4) & 2.52(8.62E-2) \\
    WFG3  & 3     & 1.7(1.17E-1)\ddag & 1.15(1.66E-1)\ddag & 1.12(3.28E-2) & 1.27(2.38E-3) & \cellcolor[rgb]{ .588,  .588,  .588}0.745(6.44E-5) \\
    WFG5  & 3     & 2.57(3.74E-1) & \cellcolor[rgb]{ .588,  .588,  .588}1.64(1.22e+00) & 3.14(3.15E-1)\dag & 2.5(2.69E-2)\dag & 2.64(2.11E-4)\dag \\
    WFG7  & 3     & 2.19(3.78E-1)\ddag & 1.8(1.57e+00)\ddag & 3.33(2.32E-1) & 2.9(9.81E-3) & \cellcolor[rgb]{ .588,  .588,  .588}1.51(4.05E-3) \\
    \midrule
    \multirow{4}[2]{*}{DTLZ1} & 3     & 0.223(1.42E-3) & \cellcolor[rgb]{ .588,  .588,  .588}0.143(2.05E-2) & 0.239(2.02E-3) & 0.21(8.71E-4)\dag & 0.172(9.39E-5)\dag \\
          & 5     & 0.56(8.51E-2)\ddag & 0.391(1.12E-4)\ddag & 3.6(5.58E-1) & \cellcolor[rgb]{ .588,  .588,  .588}0.265(3.57E-2) & 0.305(1.65E-5) \\
          
          & 8     & 3.38(8.02e+01) & \cellcolor[rgb]{ .588,  .588,  .588}0.54(2.00E-4) & 57.6(4.90e+00)\dag & 0.399(1.69E-2)\dag & 0.548(1.40E-3) \\
          & 10    & 13.3(5.29e+02) & \cellcolor[rgb]{ .588,  .588,  .588}0.209(4.21E-4) & 8.87(7.74E-1)\dag & 0.285(7.48E-3) & 0.255(9.30E-4)\dag \\
    \midrule
    \multirow{4}[2]{*}{DTLZ2} & 3     & 0.254(3.64E-3)\ddag & 0.416(2.08E-3)\ddag & 0.55(9.63E-3) & 0.235(1.38E-3) & \cellcolor[rgb]{ .588,  .588,  .588}0.176(1.66E-5) \\
          & 5     & 0.669(2.72E-2)\ddag & 0.626(3.14E-3)\ddag & 0.779(1.32E-2) & 0.701(3.59E-2) & \cellcolor[rgb]{ .588,  .588,  .588}0.406(2.66E-2) \\
          & 8     & 1.25(2.68E-2) & \cellcolor[rgb]{ .588,  .588,  .588}0.798(2.30E-3) & 1.03(2.51E-2)\dag & 1.12(3.55E-3)\dag & 0.941(7.17E-2)\dag \\
          & 10    & 1.65(3.64E-1) & \cellcolor[rgb]{ .588,  .588,  .588}0.498(1.77E-2) & 0.968(1.13E-2)\dag & 0.905(1.41E-2)\dag & 0.831(4.02E-2)\dag \\
    \midrule
    \multirow{4}[2]{*}{DTLZ3} & 3     & 0.574(8.00E-2) & \cellcolor[rgb]{ .588,  .588,  .588}0.447(1.06E-2) & 158.0(1.42e+02)\dag & 0.502(1.23E-2)\dag & 1.04(4.56E-1)\dag \\
          & 5     & 6.03(5.18e+01)\ddag & 0.633(6.21E-3)\ddag & 19.4(7.02e+00) & 1.46(2.65e+00) & \cellcolor[rgb]{ .588,  .588,  .588}0.429(2.51E-2) \\
          & 8     & 48.5(7.50e+03) & \cellcolor[rgb]{ .588,  .588,  .588}0.795(7.39E-3) & 329.0(8.25e+01)\dag & 4.82(1.14e+02)\dag & 0.903(4.10E-2)\dag \\
          & 10    & 1.47(3.72E-1) & \cellcolor[rgb]{ .588,  .588,  .588}0.565(4.12E-2) & 49.7(5.93e+00)\dag & 2.25(3.56e+00)\dag & 0.814(2.43E-2)\dag \\
    \midrule
    \multirow{4}[2]{*}{DTLZ4} & 3     & 0.566(9.57E-2) & 0.716(5.49E-2) & 0.834(5.17E-2) & 0.615(4.95E-2) & \cellcolor[rgb]{ .588,  .588,  .588}0.563(9.02E-2) \\
          & 5     & 0.989(4.83E-2)\ddag & 0.684(2.42E-2) & 0.777(1.45E-2) & 1.17(8.18E-3) & \cellcolor[rgb]{ .588,  .588,  .588}0.674(6.00E-2) \\
          & 8     & 1.21(2.31E-2) & \cellcolor[rgb]{ .588,  .588,  .588}0.935(1.03E-2) & 1.19(4.18E-2)\dag & 1.5(0.00e+00)\dag & 1.0(4.23E-2) \\
          & 10    & 1.62(4.54E-1) & \cellcolor[rgb]{ .588,  .588,  .588}0.683(2.30E-2) & 0.979(6.08E-3)\dag & 1.26(5.69E-8)\dag & 0.893(1.01E-2)\dag \\
    \midrule
    \multirow{4}[2]{*}{DTLZ5} & 3     & 0.337(4.33E-5)\ddag & 0.345(1.37E-4)\ddag & 0.552(5.70E-3) & 0.344(2.43E-4) & \cellcolor[rgb]{ .588,  .588,  .588}0.321(2.94E-9) \\
          & 5     & 1.54(8.14E-1)\ddag & 0.261(6.29E-4)\ddag & 0.909(1.48E-1) & 0.576(1.99E-2) & \cellcolor[rgb]{ .588,  .588,  .588}0.222(1.37E-11) \\
          & 8     & 1.69(3.34E-1) & \cellcolor[rgb]{ .588,  .588,  .588}0.758(1.83E-4) & 1.15(7.10E-2)\dag & 0.772(4.92E-3) & 0.776(4.04E-8)\dag \\
          & 10    & 2.3(3.47E-1) & \cellcolor[rgb]{ .588,  .588,  .588}0.951(4.70E-3) & 0.992(1.11E-1) & 1.79(5.38E-2\dag) & 1.22(1.41E-9)\dag \\
    \midrule
    \multirow{4}[2]{*}{DTLZ6} & 3     & 1.54(7.80E-2)\ddag & 0.489(1.64E-3)\ddag & 1.81(4.49E-3) & 1.83(2.18E-2) & \cellcolor[rgb]{ .588,  .588,  .588}0.424(7.21E-4) \\
          & 5     & 8.99(8.03E-1)\ddag & 0.421(2.94E-3)\ddag & 1.57(4.90E-1) & 3.86(2.41e+00) & \cellcolor[rgb]{ .588,  .588,  .588}0.332(1.46E-3) \\
          & 8     & 9.57(5.51E-1)\ddag & \cellcolor[rgb]{ .588,  .588,  .588}0.798(1.99E-3) & 3.79(1.98E-1) & 3.12(7.40E-1) & \cellcolor[rgb]{ .588,  .588,  .588}0.798(4.12E-4) \\
          & 10    & 10.5(2.09E-1) & \cellcolor[rgb]{ .588,  .588,  .588}1.08(7.47E-2) & 1.53(3.75E-1)\dag & 8.15(8.62E-1)\dag & 1.3(1.76E-3)\dag \\
    \bottomrule
    \end{tabular}%
    }
    \begin{tablenotes}
        \item[1] {\dag} denotes our proposed method significantly outperforms other peer algorithms according to the Wilcoxon's rank sum test at a 0.05 significance level;
        \item[2] {\ddag} denotes the corresponding peer algorithm outperforms our proposed algorithm.
    \end{tablenotes}
  \label{tab:expDis_peer}%
\end{table}%

\begin{table}[t]
  \centering
  \caption{The mean(std) of $\epsilon^\star(\mathcal{S})$ obtained by our proposed \our\ algorithm in distillation experiments.}
  \resizebox{0.8\linewidth}{!}{
    \begin{tabular}{cccccc}
    \toprule
    \textsc{PROBLEM} & \multicolumn{1}{c}{$m$} & \texttt{D-PBNSGA-II} & \texttt{D-PBMOEA/D} & \our\texttt{-DTS} & \our\texttt{-PBO} \\
    \midrule
    ZDT1  & 2     & \cellcolor[rgb]{ .588,  .588,  .588}0.039(2.40E-7) & 0.067(1.85E-3) & 0.721(4.48E-3)\dag & 0.04(3.53E-6) \\
    ZDT2  & 2     & 0.172(1.71E-3)\ddag & 0.208(1.26E-3)\ddag & 0.915(4.78E-2) & \cellcolor[rgb]{ .588,  .588,  .588}0.153(7.07E-4) \\
    ZDT3  & 2     & \cellcolor[rgb]{ .588,  .588,  .588}0.086(5.87E-4) & 0.098(2.17E-3) & 1.16(2.38E-2)\dag & 0.75(4.25E-2)\dag \\
    ZDT4  & 2     & 0.141(1.91E-2) & 0.114(7.88E-3) & 0.146(2.07E-2) & \cellcolor[rgb]{ .588,  .588,  .588}0.107(6.81E-3) \\
    ZDT6  & 2     & \cellcolor[rgb]{ .588,  .588,  .588}0.046(1.76E-6) & 0.055(2.16E-5) & 0.13(1.41E-3) & 0.058(2.59E-5) \\
    \midrule
    WFG1  & 3     & \cellcolor[rgb]{ .588,  .588,  .588}2.24(6.44E-4) & 2.36(3.17E-1) & 2.26(8.56E-4) & 2.28(2.03E-2) \\
    WFG3  & 3     & 1.7(1.17E-1)\ddag & 1.01(4.48E-2) & \cellcolor[rgb]{ .588,  .588,  .588}0.913(3.05E-3) & 0.926(3.82E-4) \\
    WFG5  & 3     & \cellcolor[rgb]{ .588,  .588,  .588}1.29(3.74E-1) & 1.62(1.23e+00) & 1.72(1.21E-3)\dag & 1.97(2.74E-3)\dag \\
    WFG7  & 3     & \cellcolor[rgb]{ .588,  .588,  .588}1.1(3.78E-1) & 1.79(1.58E-1) & 1.33(2.30E-3)\dag & 2.04(2.35E-2)\dag \\
    \midrule
    \multirow{4}[2]{*}{DTLZ1} & 3     & \cellcolor[rgb]{ .588,  .588,  .588}0.144(1.48E-3) & 0.194(3.11E-4) & 0.34(3.26E-3)\dag & 0.204(2.32E-3)\dag \\
          & 5     & 0.417(2.20E-2) & \cellcolor[rgb]{ .588,  .588,  .588}0.318(3.99E-4) & 0.572(1.05E-1)\dag & 0.327(1.93E-5) \\
          & 8     & 0.526(7.62E-1)\ddag & 0.512(2.81E-4)\ddag & 4.23(5.36e+01) & \cellcolor[rgb]{ .588,  .588,  .588}0.35(1.28E-3) \\
          & 10    & 0.5(4.91E-3) & \cellcolor[rgb]{ .588,  .588,  .588}0.25(6.59E-4) & 5.99(5.99e+01)\dag & 0.275(5.37E-2) \\
    \midrule
    \multirow{4}[2]{*}{DTLZ2} & 3     & 0.237(1.74E-3)\ddag & 0.213(1.77E-3)\ddag & 0.56(1.89E-2) & \cellcolor[rgb]{ .588,  .588,  .588}0.17(2.74E-4) \\
          & 5     & 0.669(2.78E-2) & \cellcolor[rgb]{ .588,  .588,  .588}0.507(4.02E-3) & 1.06(1.62E-2)\dag & 0.528(8.98E-5)\dag \\
          & 8     & \cellcolor[rgb]{ .588,  .588,  .588}0.589(3.31E-2) & 0.714(2.53E-3) & 1.42(9.76E-3)\dag & 0.793(1.73E-4)\dag \\
          & 10    & 0.675(1.07E-1) & \cellcolor[rgb]{ .588,  .588,  .588}0.336(4.38E-3) & 1.13(1.75E-3)\dag & 0.457(1.73E-6)\dag \\
    \midrule
    \multirow{4}[2]{*}{DTLZ3} & 3     & 0.552(8.33E-2) & \cellcolor[rgb]{ .588,  .588,  .588}0.265(6.54E-3) & 0.827(1.92E-2)\dag & 0.298(8.37E-5) \\
          & 5     & 0.702(5.80E-1)\ddag & 0.498(7.69E-3)\ddag & 2.26(3.21e+00) & \cellcolor[rgb]{ .588,  .588,  .588}0.351(1.29E-3) \\
          & 8     & \cellcolor[rgb]{ .588,  .588,  .588}0.592(1.58E-1) & 0.701(3.39E-3) & 133.0(8.89e+03)\dag & 0.709(3.62E-3)\dag \\
          & 10    & \cellcolor[rgb]{ .588,  .588,  .588}0.361(4.24E-1) & 0.385(1.01E-2) & 981.0(5.48e+05)\dag & 0.391(2.83E-3) \\
    \midrule
    \multirow{4}[2]{*}{DTLZ4} & 3     & \cellcolor[rgb]{ .588,  .588,  .588}0.551(9.04E-2) & 0.618(9.03E-2) & 0.72(3.47E-2)\dag & 0.563(1.20E-2) \\
          & 5     & 0.877(3.90E-2) & \cellcolor[rgb]{ .588,  .588,  .588}0.612(2.75E-2) & 0.866(2.92E-2)\dag & 0.698(8.33E-2) \\
          & 8     & 0.875(2.18E-2) & \cellcolor[rgb]{ .588,  .588,  .588}0.851(1.17E-2) & 1.47(4.52E-3)\dag & 0.988(3.71E-1)\dag \\
          & 10    & 0.773(1.88E-1) & \cellcolor[rgb]{ .588,  .588,  .588}0.6(2.26E-2) & 1.23(4.61E-3)\dag & 0.684(1.12E-4) \\
    \midrule
    \multirow{4}[2]{*}{DTLZ5} & 3     & 0.336(6.22E-5) & \cellcolor[rgb]{ .588,  .588,  .588}0.312(6.14E-7) & 0.392(5.13E-3)\dag & 0.328(1.20E-2) \\
          & 5     & 0.226(8.72E-1) & \cellcolor[rgb]{ .588,  .588,  .588}0.223(1.09E-6) & 1.65(2.44E-1)\dag & 0.36(3.38E-5)\dag \\
          & 8     & 0.756(3.11E-1) & \cellcolor[rgb]{ .588,  .588,  .588}0.721(1.12E-4) & 1.43(2.89E-1)\dag & 0.749(4.35E-3) \\
          & 10    & \cellcolor[rgb]{ .588,  .588,  .588}0.538(7.42E-1) & 0.581(1.17E-2) & 0.923(2.15E-1)\dag & 0.803(7.29E-2)\dag \\
    \midrule
    \multirow{4}[2]{*}{DTLZ6} & 3     & 0.444(3.49E-2) & 0.425(1.06E-3) & 1.39(2.19E-2) & \cellcolor[rgb]{ .588,  .588,  .588}0.421(4.91E-3) \\
          & 5     & 0.449(7.50E-1) & \cellcolor[rgb]{ .588,  .588,  .588}0.33(9.20E-4) & 2.69(6.31E-1)\dag & 0.383(1.48E-3) \\
          & 8     & 0.941(5.30E-1) & \cellcolor[rgb]{ .588,  .588,  .588}0.735(3.99E-4) & 4.64(1.58e+00)\dag & 0.798(2.75E-2)\dag \\
          & 10    & 1.03(3.71E-1) & \cellcolor[rgb]{ .588,  .588,  .588}0.645(1.62E-2) & 4.33(6.03E-1)\dag & 1.23(1.12E-4)\dag \\
    \bottomrule
    \end{tabular}%
    }
    \begin{tablenotes}
        \item[1] {\dag} denotes our proposed method significantly outperforms other peer algorithms according to the Wilcoxon's rank sum test at a 0.05 significance level;
        \item[2] {\ddag} denotes the corresponding peer algorithm outperforms our proposed algorithm.
    \end{tablenotes}
  \label{tab:minDis_distillation}%
\end{table}%

\begin{table}[t]
  \centering
  \caption{The mean(std) of $\bar\epsilon(\mathcal{S})$ obtained by our proposed \our\ algorithm in distillation experiments.}
  \resizebox{0.9\linewidth}{!}{
    \begin{tabular}{cccccc}
    \toprule
    \textsc{PROBLEM} & \multicolumn{1}{c}{$m$} & \texttt{D-PBNSGA-II} & \texttt{D-PBMOEA/D} & \our\texttt{-DTS} & \our\texttt{-PBO}\\
    \midrule
    ZDT1  & 2     & \cellcolor[rgb]{ .588,  .588,  .588}0.04(6.47E-7) & 0.126(1.41E-3) & 0.73(3.89E-3)\dag & 0.061(8.99E-5)\dag \\
    ZDT2  & 2     & \cellcolor[rgb]{ .588,  .588,  .588}0.173(1.73E-3) & 0.683(9.54E-2) & 0.939(4.45E-2)\dag & 0.174(3.00E-3) \\
    ZDT3  & 2     & \cellcolor[rgb]{ .588,  .588,  .588}0.088(5.80E-4) & 0.209(3.26E-3) & 1.21(6.54E-3)\dag & 0.879(3.97E-2)\dag \\
    ZDT4  & 2     & 0.161(2.29E-2) & 0.166(6.78E-3) & 0.231(2.66E-2) & \cellcolor[rgb]{ .588,  .588,  .588}0.138(7.41E-3) \\
    ZDT6  & 2     & \cellcolor[rgb]{ .588,  .588,  .588}0.052(2.30E-4) & 0.134(4.59E-5) & 0.211(4.04E-4)\dag & 0.108(2.02E-5)\dag \\
    \midrule
    WFG1  & 3     & 2.31(5.73E-4)\ddag & 2.37(3.19E-1)\ddag & 2.32(5.82E-4) & \cellcolor[rgb]{ .588,  .588,  .588}2.21(7.32E-5) \\
    WFG3  & 3     & 1.7(1.17E-1)\ddag & 1.15(1.66E-1)\ddag & \cellcolor[rgb]{ .588,  .588,  .588}0.914(3.05E-3) & 1.27(8.38E-3) \\
    WFG5  & 3     & 2.57(3.74E-1) & \cellcolor[rgb]{ .588,  .588,  .588}1.64(1.22e+00) & 1.72(1.21E-3)\dag & 1.69(9.87E-5) \\
    WFG7  & 3     & 2.19(3.78E-1)\ddag & 1.8(1.57e+00)\ddag & 1.33(2.30E-3) & \cellcolor[rgb]{ .588,  .588,  .588}1.2(2.74E-3) \\
    \midrule
    \multirow{4}[2]{*}{DTLZ1} & 3     & \cellcolor[rgb]{ .588,  .588,  .588}0.223(1.42E-3) & 0.143(2.05E-2) & 0.393(5.06E-2)\dag & 2.1(3.80E-4)\dag \\
          & 5     & 0.56(8.51E-2) & \cellcolor[rgb]{ .588,  .588,  .588}0.391(1.12E-4) & 2.04(9.56e+00)\dag & 3.01(1.85E-4)\dag \\
          & 8     & 3.38(8.02e+01) & \cellcolor[rgb]{ .588,  .588,  .588}0.54(2.00E-4) & 11.8(1.32e+03)\dag & 0.571(4.80E-3) \\
          & 10    & 13.3(5.29e+02) & \cellcolor[rgb]{ .588,  .588,  .588}0.209(4.21E-4) & 8.97(1.29e+02)\dag & 0.269(8.92E-4)\dag \\
    \midrule
    \multirow{4}[2]{*}{DTLZ2} & 3     & \cellcolor[rgb]{ .588,  .588,  .588}0.254(3.64E-3) & 0.416(2.08E-3) & 0.585(2.20E-2)\dag & 0.499(3.41E-3)\dag \\
          & 5     & 0.669(2.72E-2) & \cellcolor[rgb]{ .588,  .588,  .588}0.626(3.14E-3) & 1.15(9.99E-3)\dag & 0.699(3.95E-3) \\
          & 8     & 1.25(2.68E-2) & \cellcolor[rgb]{ .588,  .588,  .588}0.798(2.30E-3) & 1.47(3.08E-3)\dag & 1.21(4.87E-5)\dag \\
          & 10    & 1.65(3.64E-1) & \cellcolor[rgb]{ .588,  .588,  .588}0.498(1.77E-2) & 1.16(1.88E-3)\dag & 1.0(1.87E-4)\dag \\
    \midrule
    \multirow{4}[2]{*}{DTLZ3} & 3     & 0.574(8.00E-2) & \cellcolor[rgb]{ .588,  .588,  .588}0.447(1.06E-2) & 0.86(1.66E-2)\dag & 0.573(1.74E-4)\dag \\
          & 5     & 6.03(5.18e+01) & \cellcolor[rgb]{ .588,  .588,  .588}0.633(6.21E-3) & 5.21(3.68e+01)\dag & 0.672(6.73E-3) \\
          & 8     & 48.5(7.50e+03) & \cellcolor[rgb]{ .588,  .588,  .588}0.795(7.39E-3) & 428.0(2.75e+05)\dag & 0.798(7.85E-3) \\
          & 10    & 1.47(3.72E-1) & \cellcolor[rgb]{ .588,  .588,  .588}0.565(4.12E-2) & 1170.0(4.15e+05)\dag & 0.609(1.78E-3)\dag \\
    \midrule
    \multirow{4}[2]{*}{DTLZ4} & 3     & \cellcolor[rgb]{ .588,  .588,  .588}0.566(9.57E-2) & 0.716(5.49E-2) & 0.746(3.35E-2)\dag & 0.578(1.03E-4) \\
          & 5     & 0.989(4.83E-2)\ddag & 0.684(2.42E-2) & 1.05(1.94E-2) & \cellcolor[rgb]{ .588,  .588,  .588}0.649(2.17E-4) \\
          & 8     & 1.21(2.31E-2) & \cellcolor[rgb]{ .588,  .588,  .588}0.935(1.03E-2) & 1.49(8.32E-4)\dag & 1.02(6.60E-3)\dag \\
          & 10    & 1.62(4.54E-1) & \cellcolor[rgb]{ .588,  .588,  .588}0.683(2.30E-2) & 1.25(6.01E-4)\dag & 0.703(1.72E-4)\dag \\
    \midrule
    \multirow{4}[2]{*}{DTLZ5} & 3     & \cellcolor[rgb]{ .588,  .588,  .588}0.337(4.33E-5) & 0.345(1.37E-4) & 0.404(8.62E-3)\dag & 0.348(4.73E-4) \\
          & 5     & 1.54(8.14E-1) & \cellcolor[rgb]{ .588,  .588,  .588}0.261(6.29E-4) & 1.74(2.58E-1)\dag & \cellcolor[rgb]{ .588,  .588,  .588}0.261(2.18E-4) \\
          & 8     & 1.69(3.34E-1) & \cellcolor[rgb]{ .588,  .588,  .588}0.758(1.83E-4) & 1.54(3.07E-1)\dag & 0.78(1.82E-3) \\
          & 10    & 2.3(3.47E-1) & \cellcolor[rgb]{ .588,  .588,  .588}0.951(4.70E-3) & 0.992(2.13E-1)\dag & 1.04(2.90E-3)\dag \\
    \midrule
    \multirow{4}[2]{*}{DTLZ6} & 3     & 1.54(7.80E-2) & \cellcolor[rgb]{ .588,  .588,  .588}0.489(1.64E-3) & 1.61(2.54E-2)\dag & 0.509(1.91E-3)\dag \\
          & 5     & 8.99(8.03E-1)\ddag & 0.421(2.94E-3) & 3.1(8.25E-1) & \cellcolor[rgb]{ .588,  .588,  .588}0.402(2.78E-4) \\
          & 8     & 9.57(5.51E-1) & \cellcolor[rgb]{ .588,  .588,  .588}0.798(1.99E-3) & 5.03(1.44e+00)\dag & 0.801(9.81E-4)\dag \\
          & 10    & 10.5(2.09E-1) & \cellcolor[rgb]{ .588,  .588,  .588}1.08(7.47E-2) & 4.75(6.14E-1)\dag & 1.19(3.80E-4)\dag \\
    \bottomrule
    \end{tabular}%
    }
    \begin{tablenotes}
        \item[1] {\dag} denotes our proposed method significantly outperforms other peer algorithms according to the Wilcoxon's rank sum test at a 0.05 significance level;
        \item[2] {\ddag} denotes the corresponding peer algorithm outperforms our proposed algorithm.
    \end{tablenotes}
  \label{tab:expDis_distillation}%
\end{table}%

\begin{figure}[ht]
    \centering
    \includegraphics[width=1.0\linewidth]{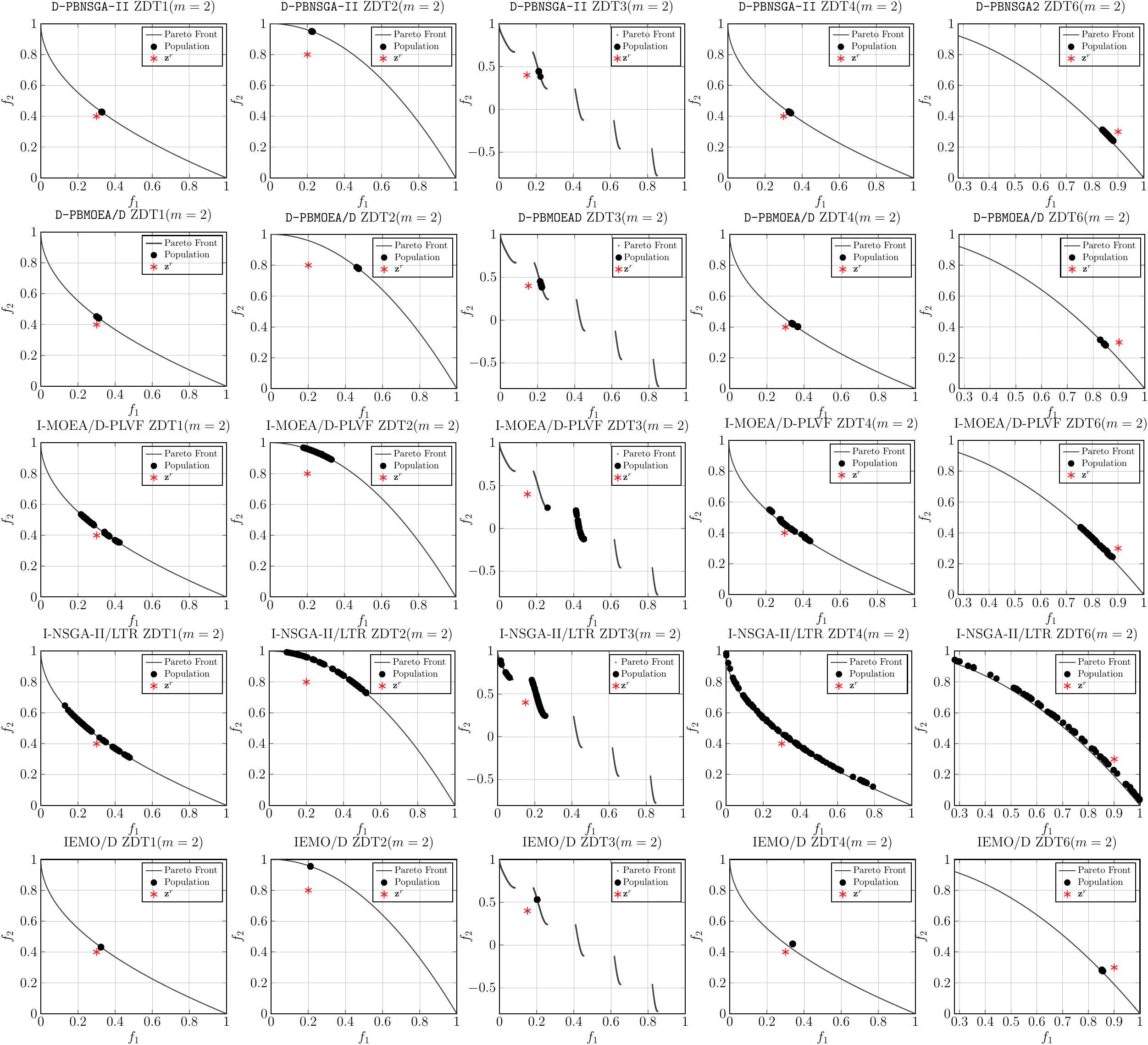}
    \caption{The population distributions of \our\ and peer algorithms running on ZDT test suite ($m=2$).}
\label{fig:ZDT_pop}
\end{figure}

\begin{figure}[ht]
    \centering
    \includegraphics[width=1.0\linewidth]{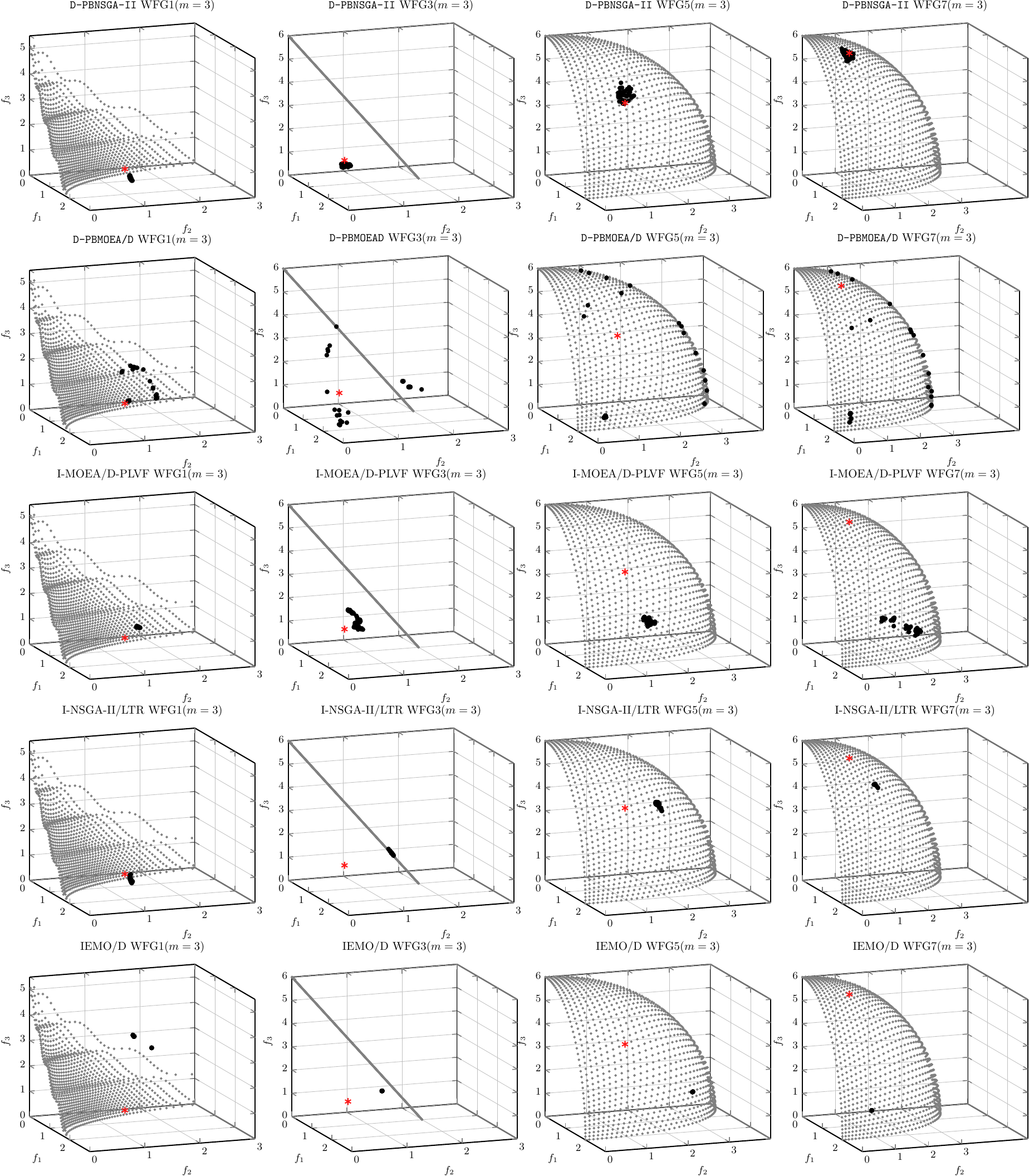}
    \caption{The population distribution of \our\ algorithms and peer algorithms running on WFG test suite ($m=3$).}
\label{fig:WFG}
\end{figure}

\begin{figure}[ht]
    \centering
    \includegraphics[width=1.0\linewidth]{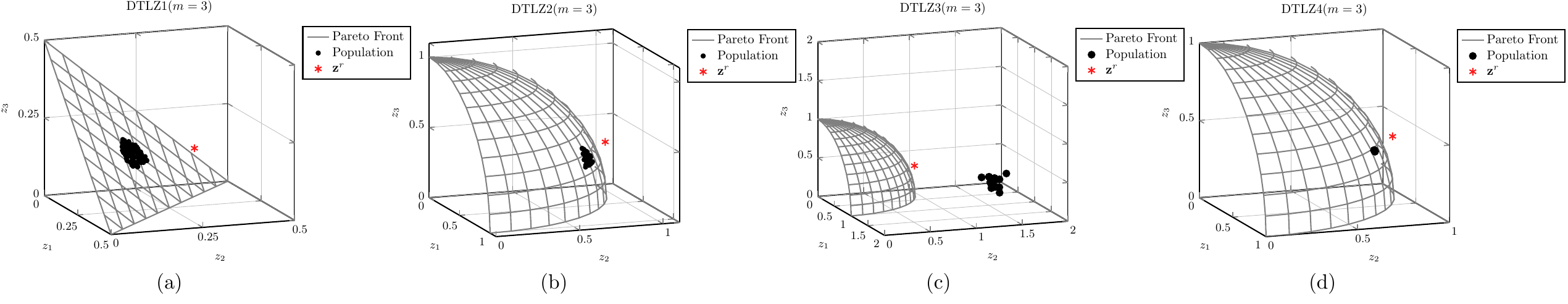}
    \caption{The population distribution of our proposed method (i.e., \texttt{D-PBMOEA/D}) running on DTLZ test suite ($m=3$).}
\label{fig:DTLZ3D_pop}
\end{figure}

\begin{figure}[ht]
    \centering
    \includegraphics[width=1.0\linewidth]{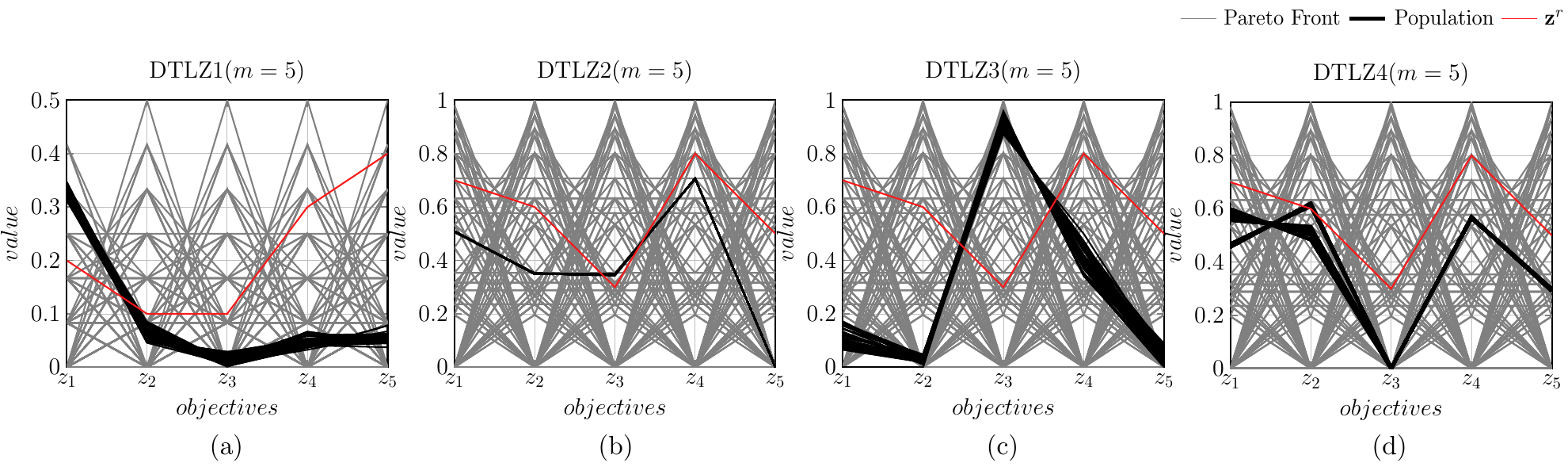}
    \caption{The population distribution of our proposed method (i.e., \texttt{D-PBMOEA/D}) running on DTLZ test suite ($m=5$)}.
\label{fig:DTLZ5D_pop}
\end{figure}

\begin{figure}[ht]
    \centering
    \includegraphics[width=1.0\linewidth]{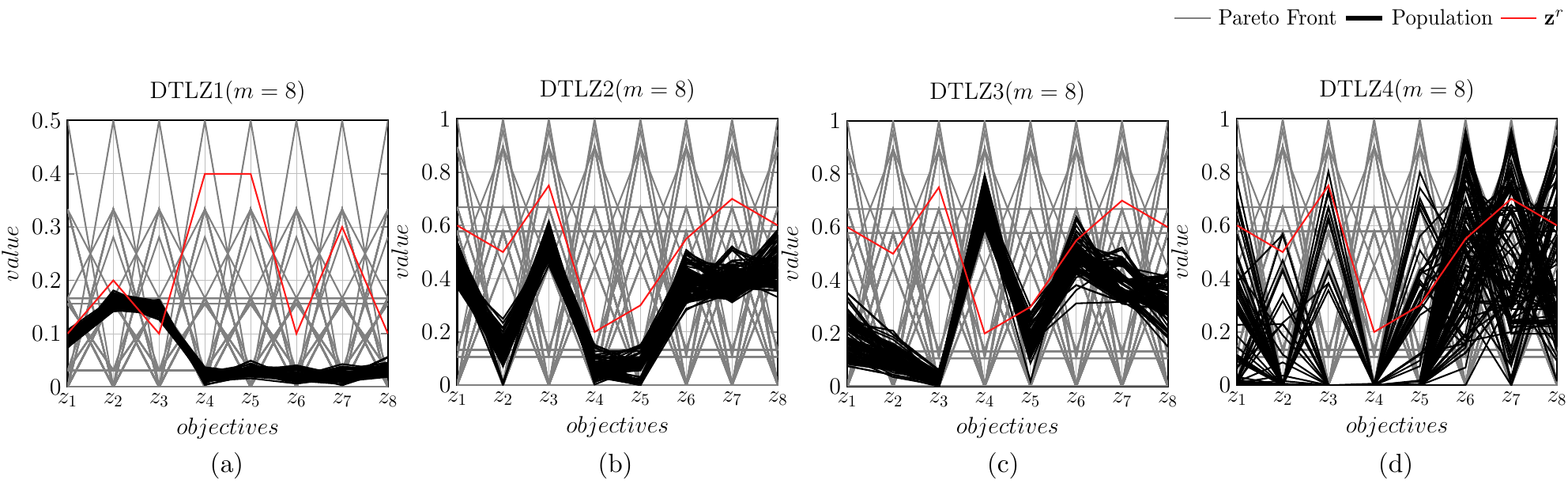}
    \caption{The population distribution of our proposed method (i.e., \texttt{D-PBMOEA/D}) running on DTLZ test suite ($m=8$).}
\label{fig:DTLZ8D_pop}
\end{figure}

\begin{figure}[ht]
    \centering
    \includegraphics[width=1.0\linewidth]{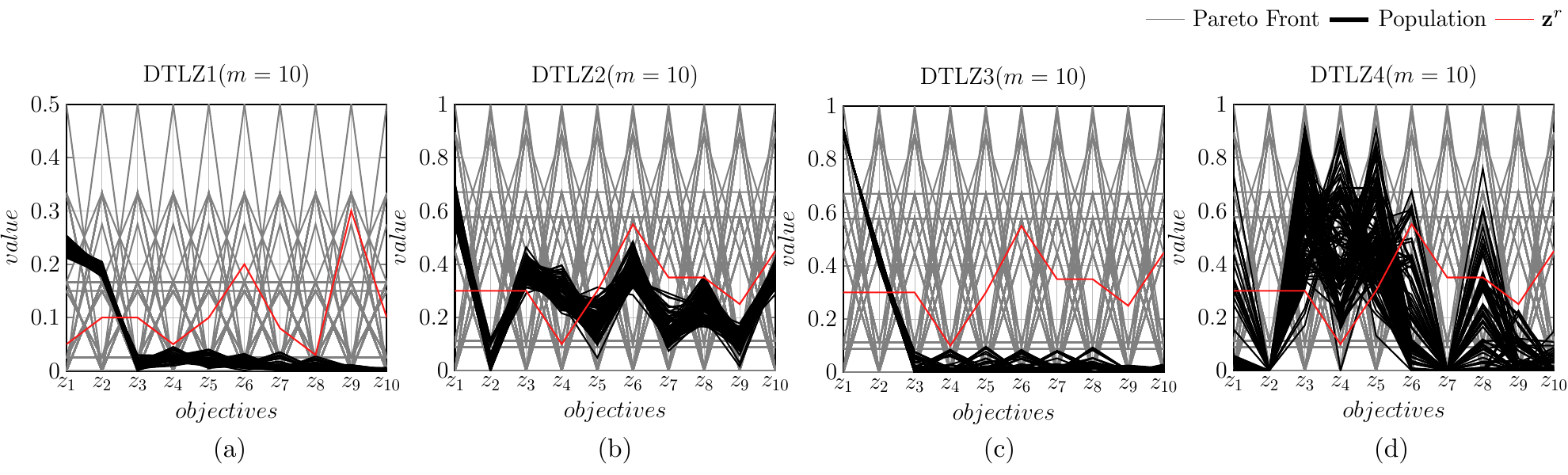}
    \caption{The population distribution of our proposed method (i.e., \texttt{D-PBMOEA/D}) running on DTLZ test suite ($m=10$).}
\label{fig:DTLZ10D_pop}
\end{figure}

\subsection{Parameter influence}
\label{app:parameter_influence}
In this section, we testify the influence of different hyperparameters, including KL threshold $\varepsilon$, and subset number $K$.


We set $\varepsilon$ in \our\texttt{-MOEA/D} to three different values, $\varepsilon=\{10^{-1},10^{-3},10^{-6}\}$. The experiment results of $\epsilon^\star(\mathcal{S})$ and $\bar\epsilon(\mathcal{S})$ can be referenced in~\ref{tab:epsilon_minDis} and~\ref{tab:epsilon_expDis}.

We set $K$ in \our\texttt{-MOEA/D} to three different values running on ZDT test problems ($N=100$), $K=\{2,5,10\}$. The experiment results of $\epsilon^\star(\mathcal{S})$ and $\bar\epsilon(\mathcal{S})$ can be referenced in~\ref{tab:K_minDis_2},~\ref{tab:K_minDis_3},~\ref{tab:K_minDis_5},~\ref{tab:K_minDis_8},~\ref{tab:K_minDis_10},~\ref{tab:K_expDis_2},~\ref{tab:K_expDis_3},~\ref{tab:K_expDis_5},~\ref{tab:K_expDis_8}, and~\ref{tab:K_expDis_10}.

\begin{table}[t]
  \centering
  \caption{The statistical comparison results of $\epsilon^\star(\mathcal{S})$ obtained by \our\ with different KL threshold $\varepsilon$.}
  \resizebox{0.6\linewidth}{!}{
    \begin{tabular}{ccccc}
    \toprule
    \textsc{Problem} & $m$     & $\varepsilon=10^{-1}$   & $\varepsilon=10^{-3}$ & $\varepsilon=10^{-6}$ \\
    \midrule
    ZDT1  & 2     & \cellcolor[rgb]{ .588,  .588,  .588}0.04(6.35E-7) & 0.066(1.76E-3) & \cellcolor[rgb]{ .588,  .588,  .588}0.04(1.94E-6) \\
    ZDT2  & 2     & 0.188(2.02E-3) & 0.208(1.26E-3) & \cellcolor[rgb]{ .588,  .588,  .588}0.187(1.84E-3) \\
    ZDT3  & 2     & \cellcolor[rgb]{ .588,  .588,  .588}0.072(1.29E-5) & 0.098(2.17E-3)\dag & 0.073(1.79E-5)\dag \\
    ZDT4  & 2     & \cellcolor[rgb]{ .588,  .588,  .588}0.048(1.67E-5) & 0.114(7.88E-3)\dag & 0.058(2.90E-4)\dag \\
    ZDT6  & 2     & 0.055(2.48E-6) & 0.055(2.16E-5) & \cellcolor[rgb]{ .588,  .588,  .588}0.054(4.54E-7) \\
    \midrule
    WFG1  & 3     & \cellcolor[rgb]{ .588,  .588,  .588}2.2(1.06E-2) & 2.36(3.17E-1) & 2.25(4.02E-2) \\
    WFG3  & 3     & 1.03(4.74E-2) & \cellcolor[rgb]{ .588,  .588,  .588}1.01(4.48E-2) & 1.03(3.72E-2) \\
    WFG5  & 3     & 3.7(1.90e+00) & \cellcolor[rgb]{ .588,  .588,  .588}3.24(1.23e+00) & 3.29(1.17e+00) \\
    WFG7  & 3     & 3.59(1.53e+00) & 3.59(1.58e+00) & \cellcolor[rgb]{ .588,  .588,  .588}2.96(7.92E-2) \\
    \midrule
    \multirow{4}[2]{*}{DTLZ1} & 3     & \cellcolor[rgb]{ .588,  .588,  .588}0.18(2.46E-4)\dag & 0.194(3.11E-4) & 0.182(7.28E-5) \\
          & 5     & \cellcolor[rgb]{ .588,  .588,  .588}0.304(3.95E-4) & 0.318(3.99E-4) & 0.313(4.33E-4) \\
          & 8     & \cellcolor[rgb]{ .588,  .588,  .588}0.511(2.67E-4) & 0.512(2.81E-4) & 0.516(2.09E-4) \\
          & 10    & 0.257(1.20E-3) & \cellcolor[rgb]{ .588,  .588,  .588}0.25(6.59E-4) & 0.258(7.16E-4) \\
    \midrule
    \multirow{4}[2]{*}{DTLZ2} & 3     & 0.211(2.04E-3) & 0.213(1.77E-3) & \cellcolor[rgb]{ .588,  .588,  .588}0.21(8.70E-4) \\
          & 5     & \cellcolor[rgb]{ .588,  .588,  .588}0.442(4.39E-3) & 0.507(4.02E-3)\dag & 0.5(6.90E-3)\dag \\
          & 8     & 0.757(6.38E-3)\dag & 0.714(2.53E-3) & \cellcolor[rgb]{ .588,  .588,  .588}0.705(1.84E-3) \\
          & 10    & 0.569(1.29E-2)\dag & \cellcolor[rgb]{ .588,  .588,  .588}0.336(4.38E-3) & 0.377(7.22E-3)\dag \\
    \midrule
    \multirow{4}[2]{*}{DTLZ3} & 3     & 0.22(4.63E-3) & 0.265(6.54E-3) & \cellcolor[rgb]{ .588,  .588,  .588}0.219(3.08E-3) \\
          & 5     & 0.457(3.82E-3) & 0.498(7.69E-3) & \cellcolor[rgb]{ .588,  .588,  .588}0.467(2.76E-3) \\
          & 8     & 0.775(8.16E-3)\dag & \cellcolor[rgb]{ .588,  .588,  .588}0.701(3.39E-3) & 0.725(3.35E-3) \\
          & 10    & 0.576(1.57E-2)\dag & 0.385(1.01E-2) & \cellcolor[rgb]{ .588,  .588,  .588}0.371(5.07E-3) \\
    \midrule
    \multirow{4}[2]{*}{DTLZ4} & 3     & 0.645(1.16E-1)\dag & 0.618(9.03E-2) & \cellcolor[rgb]{ .588,  .588,  .588}0.474(1.18E-1) \\
          & 5     & 0.654(2.14E-2)\dag & \cellcolor[rgb]{ .588,  .588,  .588}0.612(2.75E-2) & 0.666(2.29E-2)\dag \\
          & 8     & 0.926(1.60E-2)\dag & 0.851(1.17E-2) & \cellcolor[rgb]{ .588,  .588,  .588}0.804(1.53E-2) \\
          & 10    & 0.663(1.57E-2)\dag & 0.6(2.26E-2) & \cellcolor[rgb]{ .588,  .588,  .588}0.507(1.02E-2) \\
    \midrule
    \multirow{4}[2]{*}{DTLZ5} & 3     & \cellcolor[rgb]{ .588,  .588,  .588}0.312(8.36E-7) & \cellcolor[rgb]{ .588,  .588,  .588}0.312(6.14E-7) & \cellcolor[rgb]{ .588,  .588,  .588}0.312(4.70E-7) \\
          & 5     & \cellcolor[rgb]{ .588,  .588,  .588}0.223(3.71E-6) & \cellcolor[rgb]{ .588,  .588,  .588}0.223(1.09E-6) & \cellcolor[rgb]{ .588,  .588,  .588}0.223(1.55E-6) \\
          & 8     & \cellcolor[rgb]{ .588,  .588,  .588}0.718(2.03E-5) & 0.721(1.12E-4) & 0.724(1.46E-4) \\
          & 10    & \cellcolor[rgb]{ .588,  .588,  .588}0.51(1.27E-3) & 0.581(1.17E-2)\dag & 0.617(1.19E-2)\dag \\
    \midrule
    \multirow{4}[2]{*}{DTLZ6} & 3     & 0.42(1.38E-3) & 0.425(1.06E-3) & \cellcolor[rgb]{ .588,  .588,  .588}0.418(8.85E-4) \\
          & 5     & 0.339(1.11E-3) & \cellcolor[rgb]{ .588,  .588,  .588}0.33(9.20E-4) & 0.341(1.53E-3) \\
          & 8     & 0.743(3.77E-4) & \cellcolor[rgb]{ .588,  .588,  .588}0.735(3.99E-4) & 0.742(9.50E-5) \\
          & 10    & \cellcolor[rgb]{ .588,  .588,  .588}0.562(2.30E-3) & 0.645(1.62E-2) & 0.663(9.24E-3)\dag \\
    \bottomrule
    \end{tabular}%
    }
    \begin{tablenotes}
        \item[1] {\dag} denotes our proposed method with this $\varepsilon$ setting significantly outperforms other settings according to the Wilcoxon's rank sum test at a 0.05 significance level.
    \end{tablenotes}
  \label{tab:epsilon_minDis}%
\end{table}%

\begin{table}[t!]
  \centering
  \caption{Statistical comparison results of $\bar\epsilon(\mathcal{S})$ obtained by \our\ with different $\varepsilon$ settings.}
  \resizebox{0.6\linewidth}{!}{
    \begin{tabular}{c|c|ccc}
    \toprule
    \textsc{Problem} & $m$     & $\varepsilon=10^{-1}$   & $\varepsilon=10^{-3}$ & $\varepsilon=10^{-6}$  \\
    \midrule
    ZDT1  & 2     & 0.113(6.44E-6)\dag & 0.128(1.40E-3) & \cellcolor[rgb]{ .588,  .588,  .588}0.11(1.17E-5) \\
    ZDT2  & 2     & 0.293(3.50E-3) & 0.683(9.54E-2) & \cellcolor[rgb]{ .588,  .588,  .588}0.278(2.19E-3) \\
    ZDT3  & 2     & 0.22(5.51E-4)\dag & 0.209(3.26E-3) & \cellcolor[rgb]{ .588,  .588,  .588}0.201(3.05E-4) \\
    ZDT4  & 2     & \cellcolor[rgb]{ .588,  .588,  .588}0.121(1.40E-4) & 0.166(6.78E-3) & 0.127(3.47E-4) \\
    ZDT6  & 2     & 0.139(1.16E-4)\dag & 0.134(4.59E-5) & \cellcolor[rgb]{ .588,  .588,  .588}0.133(8.94E-7) \\
    \midrule
    WFG1  & 3     & \cellcolor[rgb]{ .588,  .588,  .588}2.22(6.62E-3) & 2.37(3.19E-1)\dag & 2.27(3.57E-2) \\
    WFG3  & 3     & 1.15(1.63E-1) & 1.15(1.66E-1) & \cellcolor[rgb]{ .588,  .588,  .588}1.12(1.20E-1) \\
    WFG5  & 3     & 3.72(1.86e+00)\dag & \cellcolor[rgb]{ .588,  .588,  .588}3.27(1.22e+00) & 3.51(1.61e+00)\dag \\
    WFG7  & 3     & 3.6(1.51e+00) & 3.6(1.57e+00)\dag & \cellcolor[rgb]{ .588,  .588,  .588}3.0(1.01E-1) \\
    \midrule
    \multirow{4}[2]{*}{DTLZ1} & 3     & 0.243(4.50E-4)\dag & 0.243(2.05e+02) & \cellcolor[rgb]{ .588,  .588,  .588}0.231(1.06E-4) \\
          & 5     & 0.394(3.04E-4)\dag & 0.391(1.12E-4) & \cellcolor[rgb]{ .588,  .588,  .588}0.384(7.81E-5) \\
          & 8     & 0.561(3.85E-4) & 0.56(2.00E-4) & \cellcolor[rgb]{ .588,  .588,  .588}0.555(2.68E-4) \\
          & 10    & 0.318(2.49E-4)\dag & 0.314(4.21E-4) & \cellcolor[rgb]{ .588,  .588,  .588}0.299(1.27E-4) \\
    \midrule
    \multirow{4}[2]{*}{DTLZ2} & 3     & 0.416(3.42E-3) & 0.416(2.08E-3) & \cellcolor[rgb]{ .588,  .588,  .588}0.396(5.66E-4) \\
          & 5     & 0.655(3.28E-3) & \cellcolor[rgb]{ .588,  .588,  .588}0.626(3.14E-3) & 0.627(3.67E-3) \\
          & 8     & 0.929(7.71E-3)\dag & 0.798(2.30E-3) & \cellcolor[rgb]{ .588,  .588,  .588}0.794(4.20E-3) \\
          & 10    & 0.709(4.36E-3)\dag & \cellcolor[rgb]{ .588,  .588,  .588}0.498(1.77E-2) & 0.58(3.64E-2) \\
    \midrule
    \multirow{4}[2]{*}{DTLZ3} & 3     & 0.45(1.13E-2) & 0.447(1.06E-2) & \cellcolor[rgb]{ .588,  .588,  .588}0.436(7.92E-3) \\
          & 5     & 0.686(4.82E-3)\dag & 0.633(6.21E-3) & \cellcolor[rgb]{ .588,  .588,  .588}0.599(1.89E-3) \\
          & 8     & 0.93(5.40E-3)\dag & \cellcolor[rgb]{ .588,  .588,  .588}0.795(7.39E-3) & 0.832(6.22E-3) \\
          & 10    & 0.744(9.40E-3)\dag & 0.565(4.12E-2) & \cellcolor[rgb]{ .588,  .588,  .588}0.504(1.32E-2) \\
    \midrule
    \multirow{4}[2]{*}{DTLZ4} & 3     & 0.753(5.75E-2)\dag & 0.716(5.49E-2) & \cellcolor[rgb]{ .588,  .588,  .588}0.604(6.36E-2) \\
          & 5     & 0.836(1.39E-2)\dag & \cellcolor[rgb]{ .588,  .588,  .588}0.684(2.42E-2) & 0.758(1.52E-2) \\
          & 8     & 1.09(8.10E-3)\dag & 0.935(1.03E-2) & \cellcolor[rgb]{ .588,  .588,  .588}0.916(2.05E-2) \\
          & 10    & 0.829(1.24E-2)\dag & 0.683(2.30E-2) & \cellcolor[rgb]{ .588,  .588,  .588}0.634(1.53E-2) \\
    \midrule
    \multirow{4}[2]{*}{DTLZ5} & 3     & 0.359(2.06E-4)\dag & \cellcolor[rgb]{ .588,  .588,  .588}0.345(1.37E-4) & 0.347(1.14E-4) \\
          & 5     & 0.301(9.21E-3)\dag & \cellcolor[rgb]{ .588,  .588,  .588}0.261(6.29E-4) & 0.265(8.00E-4) \\
          & 8     & 0.83(2.45E-2)\dag & \cellcolor[rgb]{ .588,  .588,  .588}0.758(1.83E-4) & 0.779(3.25E-3)\dag \\
          & 10    & 0.964(4.45E-3) & \cellcolor[rgb]{ .588,  .588,  .588}0.951(4.70E-3) & 0.991(1.12E-2) \\
    \midrule
    \multirow{4}[2]{*}{DTLZ6} & 3     & \cellcolor[rgb]{ .588,  .588,  .588}0.479(2.92E-3) & 0.489(1.64E-3) & \cellcolor[rgb]{ .588,  .588,  .588}0.479(1.73E-3) \\
          & 5     & 0.458(3.48E-3)\dag & \cellcolor[rgb]{ .588,  .588,  .588}0.421(2.94E-3) & 0.439(2.02E-3) \\
          & 8     & 0.834(3.81E-3)\dag & \cellcolor[rgb]{ .588,  .588,  .588}0.798(1.99E-3) & 0.821(1.75E-2)\dag \\
          & 10    & \cellcolor[rgb]{ .588,  .588,  .588}1.04(1.21E-2) & 1.08(7.47E-2)\dag & 1.07(2.69E-2) \\
    \bottomrule
    \end{tabular}%
    }
    \begin{tablenotes}
        \item[1] \footnotesize {\dag} denotes our proposed method with this $\varepsilon$ setting significantly outperforms other settings according to the Wilcoxon's rank sum test at a 0.05 significance level.
    \end{tablenotes}
  \label{tab:epsilon_expDis}%
\end{table}%

\begin{table}[t]
  \centering
  \caption{The statistical comparison results of $\epsilon^\star(\mathcal{S})$ obtained by \texttt{D-PBMOEA/D} with different number of subsets $K$ ($m=2$).}
  \resizebox{0.6\linewidth}{!}{
    \begin{tabular}{c|c|ccc}
    \toprule
    \textsc{Problem} & $m$     & $K=2$     & $K=5$     & $K=10$ \\
    \midrule
    ZDT1  & 2     & 0.041(9.27E-6) & \cellcolor[rgb]{ .588,  .588,  .588}0.04(3.53E-6) & 0.041(1.10E-5) \\
    ZDT2  & 2     & 0.189(1.93E-3) & 0.195(2.67E-3) & \cellcolor[rgb]{ .588,  .588,  .588}0.178(1.55E-3) \\
    ZDT3  & 2     & 0.075(6.53E-5) & 0.079(1.12E-4)\dag & \cellcolor[rgb]{ .588,  .588,  .588}0.072(3.79E-5) \\
    ZDT4  & 2     & 0.062(7.32E-4) & \cellcolor[rgb]{ .588,  .588,  .588}0.054(8.05E-4) & 0.075(3.28E-3) \\
    ZDT6  & 2     & \cellcolor[rgb]{ .588,  .588,  .588}0.054(5.82E-7) & \cellcolor[rgb]{ .588,  .588,  .588}0.054(5.20E-7) & \cellcolor[rgb]{ .588,  .588,  .588}0.054(5.18E-7) \\
    \bottomrule
    \end{tabular}%
    }
    \begin{tablenotes}
        \item[1] \footnotesize {\dag} denotes our proposed method with this $K$ setting significantly outperforms other settings according to the Wilcoxon's rank sum test at a 0.05 significance level.
    \end{tablenotes}
  \label{tab:K_minDis_2}%
\end{table}%

\begin{table}[t]
  \centering
  \caption{The statistical comparison results of $\bar\epsilon(\mathcal{S})$ obtained by \texttt{D-PBMOEA/D} with different number of subsets $K$ ($m=2$).}
  \resizebox{0.6\linewidth}{!}{
    \begin{tabular}{c|c|ccc}
    \toprule
    \textsc{Problem} & $m$     & $K=2$     & $K=5$     & $K=10$ \\
    \midrule
    ZDT1  & 2     & \cellcolor[rgb]{ .588,  .588,  .588}0.111(2.08E-5) & 0.112(2.38E-5) & 0.112(1.52E-5) \\
    ZDT2  & 2     & 0.283(1.73E-3) & 0.283(2.13E-3) & \cellcolor[rgb]{ .588,  .588,  .588}0.269(2.11E-3) \\
    ZDT3  & 2     & \cellcolor[rgb]{ .588,  .588,  .588}0.2(2.37E-4) & 0.208(1.00E-4) & 0.206(3.72E-4) \\
    ZDT4  & 2     & 0.121(5.99E-4) & \cellcolor[rgb]{ .588,  .588,  .588}0.115(6.68E-4) & 0.144(2.85E-3)\dag \\
    ZDT6  & 2     & \cellcolor[rgb]{ .588,  .588,  .588}0.133(3.58E-6) & \cellcolor[rgb]{ .588,  .588,  .588}0.133(3.33E-6) & 0.135(9.38E-6)\dag \\
    \bottomrule
    \end{tabular}%
    }
    \begin{tablenotes}
        \item[1] \footnotesize {\dag} denotes our proposed method with this $K$ setting significantly outperforms other settings according to the Wilcoxon's rank sum test at a 0.05 significance level.
    \end{tablenotes}
  \label{tab:K_expDis_2}%
\end{table}%

\begin{table}[ht]
    \centering
    \caption{The statistical comparison results of $\epsilon^\star(\mathcal{S})$ obtained by \texttt{D-PBMOEA/D} with different subsets $K$ ($m = 3$).}
    \resizebox{0.8\linewidth}{!}{
    \begin{tabular}{c|c|cccc}
    \toprule
    \textsc{Problem} & \multicolumn{1}{c|}{$m$} & \multicolumn{1}{c}{$K=2$} & \multicolumn{1}{c}{$K=4$} & \multicolumn{1}{c}{$K=8$} & \multicolumn{1}{c}{$K = N$} \\
    \midrule
    WFG1  & 3     & 1.808(1.94E-3) & \cellcolor[rgb]{ .584,  .588,  .588}1.791(8.30E-4) & 1.817(1.52E-3)\dag & 2.26(8.56E-4)\dag\\
    WFG3  & 3     & 0.907(2.87E-2) & 0.942(1.83E-2) & \cellcolor[rgb]{ .584,  .588,  .588}0.813(1.83E-2) & 0.913(3.05E-3)\dag \\
    WFG5  & 3     & \cellcolor[rgb]{ .584,  .588,  .588}1.694(2.61E-4) & 1.708(3.87E-3) & 1.716(5.24E-3) & 1.72(1.21E-3) \\
    WFG7  & 3     & \cellcolor[rgb]{ .584,  .588,  .588}1.3(7.86E-4) & 1.324(1.63E-2) & 1.369(1.00E-2) & 1.33(2.30E-3) \\
    DTLZ1 & 3     & 0.212(2.31E-4)\dag & 0.187(2.68E-4) & \cellcolor[rgb]{ .584,  .588,  .588}0.184(1.57E-4) & 0.34(3.26E-3)\dag \\
    DTLZ2 & 3     & 0.363(6.72E-3)\dag & 0.224(2.41E-3)\dag & \cellcolor[rgb]{ .584,  .588,  .588}0.213(5.77E-3) & 0.56(1.89E-2)\dag \\
    DTLZ3 & 3     & 0.402(4.42E-3)\dag & \cellcolor[rgb]{ .584,  .588,  .588}0.247(5.88E-3) & 0.262(2.68E-2) & 0.827(1.92E-2)\dag \\
    DTLZ4 & 3     & 0.741(7.07E-2) & 0.717(6.11E-2) & \cellcolor[rgb]{ .584,  .588,  .588}0.699(1.07E-1) & 0.72(3.47E-2)\\
    DTLZ5 & 3     & \cellcolor[rgb]{ .584,  .588,  .588}0.312(1.06E-6) & 0.313(1.39E-5) & 0.313(1.06E-5) & 0.392(5.13E-3)\dag \\
    DTLZ6 & 3     & 0.417(6.82E-4) & 0.418(1.13E-3) & \cellcolor[rgb]{ .584,  .588,  .588}0.413(1.25E-3) & 1.39(2.19E-2)\dag \\
    \bottomrule
    \end{tabular}
    }
    \label{tab:K_minDis_3}
    \begin{tablenotes}
    \item[1] \footnotesize {\dag} denotes our proposed method with this $K$ setting significantly outperforms other settings according to the Wilcoxon's rank sum test at a 0.05 significance level.
    \end{tablenotes}
\end{table}

\begin{table}[ht]
  \centering
  \caption{The statistical comparison results of $\bar\epsilon(\mathcal{S})$ obtainted by \texttt{D-PBMOEA/D} with different $K$ ($m=3$).}
  \resizebox{0.8\linewidth}{!}{
    \begin{tabular}{c|c|cccc}
    \toprule
    \textsc{Problem} & $m$     & $K=2$     & $K=4$     & $K=8$ & $K =N$\\
    \midrule
    WFG1 & 3     & 2.048(1.39E-3) & \cellcolor[rgb]{ .584,  .588,  .588}2.001(3.36E-3) & 2.026(4.10E-3) & 2.32(5.82E-4)\dag \\
    WFG3 & 3     & 1.374(1.95E-2)\dag & 1.299(2.68E-2)\dag & \cellcolor[rgb]{ .584,  .588,  .588}1.043(2.60E-2) & 1.914(3.05E-3)\dag \\
    WFG5 & 3     & 2.628(8.61E-3)\dag & \cellcolor[rgb]{ .584,  .588,  .588}2.278(9.09E-3) & 2.542(7.15E-1) & 2.72(1.21E-3)\dag \\
    WFG7 & 3     & \cellcolor[rgb]{ .584,  .588,  .588}2.498(7.02E-3) & 2.499(4.10E-1) & 2.264(7.41E-1) & 2.63(2.30E-3)\dag \\
    DTLZ1 & 3     & 0.286(3.20E-4)\dag & 0.234(2.18E-4)\dag & \cellcolor[rgb]{ .584,  .588,  .588}0.214(3.22E-4) & 0.393(5.06E-2)\dag \\
    DTLZ2 & 3     & 0.678(5.52E-3)\dag & 0.417(3.00E-3)\dag & \cellcolor[rgb]{ .584,  .588,  .588}0.321(4.72E-3) & 0.585(2.20E-2)\dag \\
    DTLZ3 & 3     & 0.67(2.42E-3)\dag & 0.454(8.31E-3)\dag & \cellcolor[rgb]{ .584,  .588,  .588}0.382(2.71E-2) & 0.86(1.66E-2)\dag \\
    DTLZ4 & 3     & 0.857(2.50E-2) & 0.775(3.55E-2) & \cellcolor[rgb]{ .584,  .588,  .588}0.742(7.41E-2) & 0.746(3.35E-2) \\
    DTLZ5 & 3     & 0.373(2.14E-4)\dag & 0.342(2.98E-4) & \cellcolor[rgb]{ .584,  .588,  .588}0.338(7.64E-5) & 0.404(8.62E-3)\dag \\
    DTLZ6 & 3     & 0.503(1.14E-3)\dag & 0.471(2.55E-3) & \cellcolor[rgb]{ .584,  .588,  .588}0.434(1.80E-3) & 1.61(2.54E-2)\dag \\
    \bottomrule
    \end{tabular}%
    }
  \label{tab:K_expDis_3}%
  \begin{tablenotes}
    \item[1] \footnotesize {\dag} denotes our proposed method with this $K$ setting significantly outperforms other settings according to the Wilcoxon's rank sum test at a 0.05 significance level.
    \end{tablenotes}
\end{table}

\begin{table}[ht]
    \centering
  \caption{The statistical comparison results of $\epsilon^\star(\mathcal{S})$ obtainted by \texttt{D-PBMOEA/D} results with different $K$ ($m = 5$).}
  \resizebox{0.8\linewidth}{!}{
    \begin{tabular}{c|c|cccc}
    \toprule
    \textsc{Problem} & $m$     & $K=2$     & $K = 8$     & $K=16$ & $K=N$ \\
    \midrule
    DTLZ1 & 5     & \cellcolor[rgb]{ .584,  .588,  .588}0.303(3.83E-4) & 0.317(4.75E-4)\dag & 0.329(6.42E-4)\dag & 0.572(1.05E-1)\dag \\
    DTLZ2 & 5     & 0.606(8.00E-3)\dag & 0.492(4.14E-3)\dag & \cellcolor[rgb]{ .584,  .588,  .588}0.436(4.15E-3) & 1.06(1.62E-2)\dag \\
    DTLZ3 & 5     & 0.624(1.06E-2)\dag & 0.498(6.88E-3)\dag & \cellcolor[rgb]{ .584,  .588,  .588}0.439(1.78E-3) & 2.26(3.21E+00)\dag \\
    DTLZ4 & 5     & 0.724(1.80E-2) & 0.684(3.20E-2) & \cellcolor[rgb]{ .584,  .588,  .588}0.657(3.27E-2) & 0.866(2.92E-2)\dag \\
    DTLZ5 & 5     & 0.223(1.54E-6) & 0.223(1.62E-6) & \cellcolor[rgb]{ .584,  .588,  .588}0.222(7.60E-7) & 1.65(2.44E-1)\dag \\
    DTLZ6 & 5     & \cellcolor[rgb]{ .584,  .588,  .588}0.33(1.34E-3) & 0.343(1.65E-3) & 0.333(1.19E-3) & 2.69(6.31E-1)\dag \\
    \bottomrule
    \end{tabular}%
    }
    \label{tab:K_minDis_5}
    \begin{tablenotes}
    \item[1] \footnotesize {\dag} denotes our proposed method with this $K$ setting significantly outperforms other settings according to the Wilcoxon's rank sum test at a 0.05 significance level.
    \end{tablenotes}
\end{table}

\begin{table}[ht]
    \centering
  \caption{The statistical comparison results of $\bar\epsilon(\mathcal{S})$ obtainted by \texttt{D-PBMOEA/D} with different subsets $K$ ($m=5$).}
  \resizebox{0.8\linewidth}{!}{
    \begin{tabular}{c|c|cccc}
    \toprule
    \textsc{Problem} & $m$     & $K=2$     & $K=8$     & $K=16$ & $K=N$ \\
    \midrule
    DTLZ1 & 5     & 0.418(1.77E-4)\dag & 0.393(2.48E-4)\dag & \cellcolor[rgb]{ .584,  .588,  .588}0.373(1.28E-4) & 2.04(9.56E+00)\dag \\
    DTLZ2 & 5     & 0.879(4.95E-3)\dag & 0.627(2.72E-3)\dag & \cellcolor[rgb]{ .584,  .588,  .588}0.557(3.96E-3) & 1.15(9.99E-3)\dag \\
    DTLZ3 & 5     & 0.878(4.22E-3)\dag & 0.624(5.26E-3)\dag & \cellcolor[rgb]{ .584,  .588,  .588}0.578(3.56E-3) & 5.21(3.68E+01)\dag \\
    DTLZ4 & 5     & 0.942(7.57E-3)\dag & 0.765(2.15E-2) & \cellcolor[rgb]{ .584,  .588,  .588}0.72(2.42E-2) & 1.05(1.94E-2)\dag \\
    DTLZ5 & 5     & 0.4(1.88E-3)\dag & 0.28(3.81E-4) & \cellcolor[rgb]{ .584,  .588,  .588}0.27(2.75E-4) & 1.74(2.58E-1)\dag \\
    DTLZ6 & 5     & 0.607(5.81E-3)\dag & 0.428(3.25E-3)\dag & \cellcolor[rgb]{ .584,  .588,  .588}0.39(2.47E-3) & 3.1(8.25E-1)\dag \\
    \bottomrule
    \end{tabular}%
    }
    \label{tab:K_expDis_5}
    \begin{tablenotes}
    \item[1] \footnotesize {\dag} denotes our proposed method with this $K$ setting significantly outperforms other settings according to the Wilcoxon's rank sum test at a 0.05 significance level.
    \end{tablenotes}
\end{table}

\begin{table}[ht]
    \centering
    \caption{The statistical comparison results of $\epsilon^\star(\mathcal{S})$ obtained by \texttt{D-PBMOEA/D} with different $K$ ($m = 8$).}
    \resizebox{0.8\linewidth}{!}{
    \begin{tabular}{c|c|cccc}
    \toprule
    \textsc{Problem} & $m$     & $K=2$     & $K=8$     & $K=16$ & $K=N$\\
    \midrule
    DTLZ1 & 8     & \cellcolor[rgb]{ .584,  .588,  .588}0.486(9.41E-5) & 0.509(2.88E-4)\dag & 0.516(2.53E-4)\dag & 4.23(5.36E+01)\dag \\
    DTLZ2 & 8     & 0.857(4.91E-3)\dag & \cellcolor[rgb]{ .584,  .588,  .588}0.698(9.91E-4) & 0.755(1.45E-2) & 1.42(9.76E-3)\dag \\
    DTLZ3 & 8     & 0.811(5.61E-3)\dag & 0.708(6.81E-3) & \cellcolor[rgb]{ .584,  .588,  .588}0.689(3.39E-3) & 133.0(8.89E+03)\dag \\
    DTLZ4 & 8     & 0.926(9.45E-3)\dag & 0.841(1.65E-2) & \cellcolor[rgb]{ .584,  .588,  .588}0.806(1.16E-2) & 1.47(4.52E-3) \\
    DTLZ5 & 8     & \cellcolor[rgb]{ .584,  .588,  .588}0.714(4.61E-9) & 0.717(3.27E-5)\dag & 0.721(7.73E-5)\dag & 1.43(2.89E-1)\dag \\
    DTLZ6 & 8     & \cellcolor[rgb]{ .584,  .588,  .588}0.714(1.81E-4) & 0.732(3.54E-4)\dag & 0.742(7.25E-4)\dag & 4.64(1.58E+00)\dag \\
    \bottomrule
    \end{tabular}%
    }
    \label{tab:K_minDis_8}
    \begin{tablenotes}
    \item[1] \footnotesize {\dag} denotes our proposed method with this $K$ setting significantly outperforms other settings according to the Wilcoxon's rank sum test at a 0.05 significance level.
    \end{tablenotes}
\end{table}

\begin{table}[ht]
    \centering
    \caption{The statistical comparison results of $\bar\epsilon(\mathcal{S})$ obtained by \texttt{D-PBMOEA/D} with different $K$ ($m=8$).}
    \resizebox{0.8\linewidth}{!}{
    \begin{tabular}{c|c|cccc}
    \toprule
    \textsc{Problem} & $m$     & $K=2$     & $K=8$     & $K=16$ & $K=N$ \\
    \midrule
    DTLZ1 & 8     & 0.58(1.52E-4)\dag & 0.563(1.16E-4)\dag & \cellcolor[rgb]{ .584,  .588,  .588}0.554(9.18E-5) & 11.8(1.32E+03)\dag \\
    DTLZ2 & 8     & 1.101(3.59E-3)\dag & \cellcolor[rgb]{ .584,  .588,  .588}0.814(1.24E-3) & 0.83(1.33E-2) & 1.47(3.08E-3)\dag\\
    DTLZ3 & 8     & 1.066(3.15E-3)\dag & 0.831(4.91E-3)\dag & \cellcolor[rgb]{ .584,  .588,  .588}0.767(2.58E-3) & 428.0(2.75E+05)\dag \\
    DTLZ4 & 8     & 1.184(7.36E-3)\dag & 0.947(1.12E-2) & \cellcolor[rgb]{ .584,  .588,  .588}0.888(1.54E-2) & 1.49(8.32E-4)\dag \\
    DTLZ5 & 8     & 0.83(8.00E-4)\dag & \cellcolor[rgb]{ .584,  .588,  .588}0.757(4.27E-4) & 0.758(1.43E-4) & 1.54(3.07E-1)\dag \\
    DTLZ6 & 8     & 0.9(2.89E-3)\dag & 0.802(9.58E-4) & \cellcolor[rgb]{ .584,  .588,  .588}0.794(5.72E-4) &  5.03(1.44E+00)\dag\\
    \bottomrule
    \end{tabular}%
    }
    \label{tab:K_expDis_8}
    \begin{tablenotes}
    \item[1] \footnotesize {\dag} denotes our proposed method with this $K$ setting significantly outperforms other settings according to the Wilcoxon's rank sum test at a 0.05 significance level.
    \end{tablenotes}
\end{table}

\begin{table}[ht]
    \centering
    \caption{The statistical comparison results of $\epsilon^\star(\mathcal{S})$ obtained by \texttt{D-PBMOEA/D} with different $K$ ($m = 10$).}
    \resizebox{0.8\linewidth}{!}{
    \begin{tabular}{c|c|cccc}
    \toprule
    \textsc{Problem} & $m$     & $K=2$     & $K=6$     & $K=24$ & $K=N$ \\
    \midrule
    DTLZ1 & 10    & \cellcolor[rgb]{ .584,  .588,  .588}0.223(1.87E-4) & 0.248(4.71E-4)\dag & 0.244(6.34E-4)\dag & 5.99(5.99e+01)\dag\\
    DTLZ2 & 10    & 0.537(2.58E-3)\dag & 0.437(8.59E-3) & \cellcolor[rgb]{ .584,  .588,  .588}0.412(1.41E-2) & 1.13(1.75E-3)\dag \\
    DTLZ3 & 10    & 0.496(3.96E-3)\dag & 0.488(1.17E-2)\dag & \cellcolor[rgb]{ .584,  .588,  .588}0.382(6.49E-3) & 981.0(5.48e+05)\dag \\
    DTLZ4 & 10    & 0.639(1.72E-2) & 0.611(3.25E-2) & \cellcolor[rgb]{ .584,  .588,  .588}0.561(2.38E-2) & 1.23(4.61E-3)\dag \\
    DTLZ5 & 10    & \cellcolor[rgb]{ .584,  .588,  .588}0.495(3.18E-6) & 0.511(3.24E-4)\dag & 0.597(1.40E-2)\dag & 0.923(2.15E-1)\dag \\
    DTLZ6 & 10    & \cellcolor[rgb]{ .584,  .588,  .588}0.527(4.77E-4) & 0.562(2.03E-3)\dag & 0.659(1.43E-2)\dag & 4.33(6.03E-1)\dag\\
    \bottomrule
    \end{tabular}%
    }
    \label{tab:K_minDis_10}
    \begin{tablenotes}
    \item[1] \footnotesize {\dag} denotes our proposed method with this $K$ setting significantly outperforms other settings according to the Wilcoxon's rank sum test at a 0.05 significance level.
    \end{tablenotes}
\end{table}

\begin{table}[ht]
    \centering
  \caption{The statistical comparison results of $\bar\epsilon(\mathcal{S})$ obtained by \texttt{D-PBMOEA/D} with different $K$ ($m=10$).}
  \resizebox{0.8\linewidth}{!}{
    \begin{tabular}{c|c|cccc}
    \toprule
    \textsc{Problem} & $m$     & $K=2$     & $K=6$     & $K=24$ & $K=N$\\
    \midrule
    DTLZ1 & 10    & 0.342(1.41E-4)\dag & \cellcolor[rgb]{ .584,  .588,  .588}0.321(2.60E-4) & 0.323(5.62E-4) & 8.97(1.29e+02)\dag \\
    DTLZ2 & 10    & 0.804(4.06E-3)\dag & 0.567(4.54E-3) & \cellcolor[rgb]{ .584,  .588,  .588}0.562(3.63E-2) & 1.16(1.88E-3)\dag \\
    DTLZ3 & 10    & 0.791(3.31E-3)\dag & 0.614(1.03E-2) & \cellcolor[rgb]{ .584,  .588,  .588}0.544(2.29E-2) & 1170.0(4.15e+05)\dag \\
    DTLZ4 & 10    & 0.934(2.92E-3)\dag & 0.722(1.80E-2) & \cellcolor[rgb]{ .584,  .588,  .588}0.647(3.01E-2) & 1.25(6.01E-4)\dag \\
    DTLZ5 & 10    & \cellcolor[rgb]{ .584,  .588,  .588}0.818(1.21E-2) & 0.901(3.29E-3)\dag & 0.896(8.05E-3)\dag & 0.992(2.13E-1)\dag\\
    DTLZ6 & 10    & \cellcolor[rgb]{ .584,  .588,  .588}0.978(7.61E-3) & 0.981(5.44E-3) & 1.008(1.94E-2) & 4.75(6.14E-1)\dag\\
    \bottomrule
    \end{tabular}%
    }
  \label{tab:K_expDis_10}%
\begin{tablenotes}
    \item[1] \footnotesize {\dag} denotes our proposed method with this $K$ setting significantly outperforms other settings according to the Wilcoxon's rank sum test at a 0.05 significance level.
\end{tablenotes}
\end{table}%

\section{Experiments on Scientific Discovery Problems}
\label{app:science}
\subsection{Experiment on Inverse RNA Design}
This section shows the experiment result of our proposed method, specifically \our\texttt{-NSGA-II}. Our method and peer algorithms run on Eterna100-vienna1 benchmark\footnote{https://github.com/eternagame/eterna100-benchmarking}, which contains 100 RNA sequences. We selectively run our method on $10$ short sequences $n\in[12,36]$. For inverse RNA design problems, the most important task is to predict an accurate sequence whose secondary structure is the same as target structure, $f_2 =0$. However, in real-world scenarios, we need to tradE-off between stability and similarity. When $f_2=0$, the stability $f_1$ can not reach its global optimum in PF. Our \our\ framework can do more than those singlE-objective algorithms for RNA. Users can guide the search of \our\ algorithms and finally reach a point which sacrifices a little similarity but has better stability. So our experiment are divided into two parts. In the first part, \our\texttt{-NSGA-II} only focuses on finding the most similar solutions, whose reference point locates on $f_2= 0$. In another part, user prefers a solution in the middle of PF, $f_2\in(0,1)$. The reference points of our two-session experiments can be referenced in~\ref{tab:RNA_setting}.

The experiment results of running on reference point 1 are listed in~\ref{tab:MinDis_RNA} and~\ref{tab:ExpectDis_RNA}. The predicted secondary structures of each algorithm are shown in~\ref{fig:RNA_1_5} and~\ref{fig:RNA_6_10}. Each algorithm repeatedly runs for $10$ times. As we can see from these two tables, \our\texttt{-NSGA-II} has best performance for $5$ times from the perspective of $\epsilon^\star(\mathcal{S})$ while $6$ times from $\bar{\epsilon}(\mathcal{S})$. 

The experiment results of running on reference point 2 are listed in~\ref{tab:minDis_RNA_ref2} and~\ref{tab:expDis_RNA_ref2}. As we can see from these two tables, \our\texttt{-NSGA-II} has best performance for $8$ times with reference to $\epsilon^\star(\mathcal{S})$ and $\bar{\epsilon}(\mathcal{S})$.
\begin{table}[t]
  \centering
  
  \caption{RNA experiment settings.}
  \resizebox{\linewidth}{!}{
    \begin{tabular}{lllllll}
    \toprule
    \textbf{RNA}   & \textbf{Eterna ID} & \textbf{Target Structure} & \textbf{Reference Point 1} & \textbf{Reference Point 2} & \textbf{Sample Solution} & $n$ \\
    \midrule
    $1$     & $1074756$ & $((((...)))).$ & $(-6.3, 0)^\top$ & $(-7.1,0.2)^\top$ & GAUAAAAUAUCA & $12$ \\
    \midrule
    
    $2$     & $20111$ & $(((((......)))))$ & $(-9.1,0)^\top$ & $(-13.8, 0.125)^\top$ & GGGGGGAAAAACCCCC & $16$ \\
    \midrule
    
    $3$     & $997382$ & $((....)).((....))$ & $(-4, 0)^\top$ & $(-8.9, 0.6)^\top$ & CUGAAAAGAGUGAGAGC & $17$ \\
    \midrule
    \multirow{2}[1]{*}{$4$}     & \multirow{2}[1]{*}{$852950$} & \multirow{2}[1]{*}{$..((((((((.....)).))))))..$} & \multirow{2}[1]{*}{$(-12,0)^\top$} & \multirow{2}[1]{*}{$(-24,0.23)^\top$} & AAAUGUGAAUGAA & \multirow{2}[1]{*}{$26$} \\
        &   &   &   &   & AAAUAUUAUAUAA  & \\
    \midrule
    
    \multirow{2}[1]{*}{$5$}     & \multirow{2}[1]{*}{$727172$} & \multirow{2}[1]{*}{$(((((.....))..((.........)))))$} & \multirow{2}[1]{*}{$(-9,0)^\top$} & \multirow{2}[1]{*}{$(-17.7,0.33)^\top$} & GCCGCGAAAAGCAAC & \multirow{2}[1]{*}{$30$} \\
        &   &   &   &   & CGAAAAAAAAGGGGC & \\
    \midrule
    
    \multirow{2}[1]{*}{$7$}     & \multirow{2}[1]{*}{$477402$} & \multirow{2}[1]{*}{$....((((((((.(....)).).).)))))....$} & \multirow{2}[1]{*}{$(-24,0)^\top$} & \multirow{2}[1]{*}{$(-31,0.24)^\top$} & AAAAUACAGGGCGCGAA & \multirow{2}[1]{*}{$34$} \\
        &   &   &   &   & AGGACUCACUGUAAAAA & \\
    \midrule
    
    \multirow{2}[1]{*}{$8$}     & \multirow{2}[1]{*}{$997391$} & \multirow{2}[1]{*}{$((....)).((....)).((....)).((....))$} & \multirow{2}[1]{*}{$(-13,0)^\top$} & \multirow{2}[1]{*}{$(-28,0.57)^\top$} & GAGAAAUCAGAGAAAUC & \multirow{2}[1]{*}{$35$} \\
        &   &   &   &   &  AGUGAAAGCAGGGAAACU & \\
    \midrule
    
    \multirow{2}[1]{*}{$9$}     & \multirow{2}[1]{*}{$15819$} & \multirow{2}[1]{*}{$((((((.((((....))))))).)))..........$} & \multirow{2}[1]{*}{$(-15,0)^\top$} & \multirow{2}[1]{*}{$(-38,0.58)^\top$} & CGUGACAUUAUAAAAAUG & \multirow{2}[1]{*}{$36$} \\
        &   &   &   &   & AGUCGAUGAAAAAAAAAA & \\

    \midrule
    
    \multirow{2}[1]{*}{$10$}    & \multirow{2}[1]{*}{$816170$} & \multirow{2}[1]{*}{$((((((.((((((((....))))).)).).))))))$} & \multirow{2}[1]{*}{$(-26.7,0)^\top$} & \multirow{2}[1]{*}{$(-45,0.17)^\top$} & UAAAACGAGAAAAACGAA & \multirow{2}[1]{*}{$36$} \\
        &   &   &   &   & AGUUUUAUCAUGGUUUUA & \\
    \bottomrule
    \end{tabular}%
    }
    \begin{tablenotes}
        \item[1] The column reference point 1 lists our first session experiment ($f_2 =0$).
        \item[2] The column reference point 2 lists reference points in the second-session ($f_2\in(0,1)$).
        \item[3] The sample solution is the possible sequence for the given target structure provided by benchmark.
    \end{tablenotes}
  \label{tab:RNA_setting}%
\end{table}%

\begin{table}[t]
  \centering
  \caption{The mean(std) of $\epsilon^\star(\mathcal{S})$ comparing our proposed method with peer algorithms on inverse RNA design problems given reference point 1}
  \resizebox{0.8\linewidth}{!}{
    \begin{tabular}{cccccc}
    \toprule
    \textbf{RNA} & \texttt{D-PBNSGA-II} & \texttt{D-PBMOEA/D} & \texttt{I-MOEA/D-PLVF} & \texttt{I-NSGA2/LTR} & \texttt{IEMO/D} \\
    \midrule
    1     & \cellcolor[rgb]{ .651,  .651,  .651}\textbf{0.289(0.0)} & 0.679(0.27)& 0.971(0.73)\dag & 0.35(0.05) & 0.605(0.21) \\
    2     & \cellcolor[rgb]{ .651,  .651,  .651}\textbf{0.33(0.01)} & 1.02(1.11) & 1.31(1.09)\dag & 1.342(1.51)\dag & 1.943(2.13)\dag \\
    3     & 2.454(11.6) & 7.89(3.39) & 7.102(13.05) & \cellcolor[rgb]{ .651,  .651,  .651}\textbf{1.332(0.19)} & 1.819(2.62) \\
    4     & 4.514(17.76) & 2.052(2.08) & \cellcolor[rgb]{ .651,  .651,  .651}\textbf{1.612(1.62)} & 1.848(0.86) & 2.96(2.05) \\
    5     & \cellcolor[rgb]{ .651,  .651,  .651}\textbf{2.871(4.43)} & 
 5.497(21.22) &7.663(14.46)\dag & 7.557(31.47) & 6.951(11.98)\dag \\
    6     & \cellcolor[rgb]{ .651,  .651,  .651}\textbf{1.764(4.51)} & 4.878(11.11) & 5.699(21.21)\dag & 7.656(12.38)\dag & 4.1(10.83)\dag \\
    7     & 4.522(7.7) & 6.956(11.14) & 6.51(16.64) & \cellcolor[rgb]{ .651,  .651,  .651}\textbf{4.216(12.95)} & 5.284(13.19) \\
    8     & 6.604(26.97) & 12.114(23.74) & 13.638(28.74) & 5.945(15.62) & \cellcolor[rgb]{ .651,  .651,  .651}\textbf{3.959(15.97)} \\
    9     & \cellcolor[rgb]{ .651,  .651,  .651}\textbf{1.445(1.73)} & 3.656(5.75) & 4.506(10.72)\dag & 4.424(14.0)\dag & 6.731(40.42)\dag \\
    10    & 3.988(11.25) & 8.506(13.14) & \cellcolor[rgb]{ .651,  .651,  .651}\textbf{3.86(10.51)} & 4.988(20.48) & 7.128(23.01) \\
    \bottomrule
    \end{tabular}%
    }
    \begin{tablenotes}
        \item[1] {\dag} denotes our proposed method significantly outperforms other peer algorithms according to the Wilcoxon's rank sum test at a 0.05 significance level;
        \item[2] {\ddag} denotes the corresponding peer algorithm outperforms our proposed algorithm.
    \end{tablenotes}
  \label{tab:MinDis_RNA}%
\end{table}%

\begin{table}[t]
  \centering
  \caption{The mean(std) of $\bar{\epsilon}(\mathcal{S})$ of our proposed method with peer algorithms on inverse RNA design problems given reference point 1}
  \resizebox{0.8\linewidth}{!}{
    \begin{tabular}{cccccc}
    \toprule
    \textbf{RNA} & \texttt{D-PBNSGA-II} & \texttt{D-PBMOEA/D} & \texttt{I-MOEA/D-PLVF} & \texttt{I-NSGA2/LTR} & \texttt{IEMO/D} \\
    \midrule
    1     & \cellcolor[rgb]{ .651,  .651,  .651}\textbf{0.35(0.05)} & 0.685(0.26) & 1.063(0.65)\dag & 0.357(0.03)\dag & 0.605(0.21)\dag \\
    2     & \cellcolor[rgb]{ .651,  .651,  .651}\textbf{0.33(0.01)} & 1.259(1.14) & 1.832(1.04)\dag & 1.501(1.6)\dag & 1.945(2.13)\dag \\
    3     & 2.454(11.6) & 8.667(2.27) & 9.409(2.68) & \cellcolor[rgb]{ .651,  .651,  .651}\textbf{1.697(0.39)} & 3.234(8.49) \\
    4     & 4.514(17.76) & 2.569(3.85) & 1.758(1.35) & \cellcolor[rgb]{ .651,  .651,  .651}\textbf{1.849(0.86)} & 3.226(2.22) \\
    5     & \cellcolor[rgb]{ .651,  .651,  .651}\textbf{2.871(4.43)} & 6.894(15.14) & 9.261(5.33)\dag & 8.439(22.87)\dag & 7.206(12.81)\dag \\
    6     & \cellcolor[rgb]{ .651,  .651,  .651}\textbf{1.764(4.51)} & 5.372(10.74) & 6.451(19.45)\dag & 9.077(8.67)\dag & 4.833(8.18)\dag \\
    7     & 4.522(7.7) & 7.226(17.14) & 9.602(10.75) & \cellcolor[rgb]{ .651,  .651,  .651}\textbf{4.216(12.95)} & 6.579(10.64) \\
    8     & 6.604(26.97) & 12.775(24.08) & 14.764(25.17) & 7.105(13.47) & 7.131(14.81) \\
    9     & \cellcolor[rgb]{ .651,  .651,  .651}\textbf{1.46(1.74)} & 4.94(5.79) & 4.965(10.91)\dag & 5.332(11.64)\dag & 7.876(34.82)\dag \\
    10    & \cellcolor[rgb]{ .651,  .651,  .651}\textbf{4.001(11.16)} & 9.077(17.27) & 4.057(9.73) & 5.014(20.4) & 7.133(22.96) \\
    \bottomrule
    \end{tabular}%
    }
    \begin{tablenotes}
        \item[1] {\dag} denotes our proposed method significantly outperforms other peer algorithms according to the Wilcoxon's rank sum test at a 0.05 significance level;
        \item[2] {\ddag} denotes the corresponding peer algorithm outperforms our proposed algorithm.
    \end{tablenotes}
  \label{tab:ExpectDis_RNA}%
\end{table}%

\begin{table}[t]
  \centering
  \caption{The mean(std) of $\epsilon^\star(\mathcal{S})$ comparing our proposed method with peer algorithms on inverse RNA design problems given reference point 2}
  \resizebox{0.8\linewidth}{!}{
    \begin{tabular}{cccccc}
    \toprule
    \textbf{RNA}   & \texttt{D-PBNSGA-II} & \texttt{D-PBMOEA/D} & \texttt{I-MOEA/D-PLVF} & \texttt{I-NSGA2/LTR} & \texttt{IEMO/D} \\
    \midrule
    1     & 0.459(0.07) & 0.469(0.09) & \cellcolor[rgb]{ .651,  .651,  .651}\textbf{0.333(0.09)} & 0.821(0.06) & 1.614(1.33) \\
    2     & 0.752(0.54) & \cellcolor[rgb]{ .651,  .651,  .651}\textbf{0.74(0.24)} & 2.112(1.11)\dag & 3.143(0.82)\dag & 4.131(1.38)\dag \\
    3     & 0.409(0.11) & \cellcolor[rgb]{ .651,  .651,  .651}\textbf{0.064(0.0)} & 2.095(2.54)\dag & 5.037(2.93)\dag & 6.343(5.02)\dag \\
    4     & \cellcolor[rgb]{ .651,  .651,  .651}\textbf{1.459(3.0)} & 1.72(2.1) &4.25(5.64)\dag & 3.702(14.69) & 9.43(14.85)\dag \\
    5     & 2.247(2.19) & \cellcolor[rgb]{ .651,  .651,  .651}\textbf{0.84(0.65)} & 7.28(13.67)\dag & 5.405(23.99)\dag & 5.934(12.8)\dag \\
    6     & \cellcolor[rgb]{ .651,  .651,  .651}\textbf{1.846(2.53)} & 3.91(2.5) & 3.592(10.88)\dag & 7.571(19.21)\dag & 7.94(14.85)\dag \\
    7     & 2.612(4.65) & \cellcolor[rgb]{ .651,  .651,  .651}\textbf{1.518(1.65)} & 3.601(6.84) & 8.531(31.14)\dag & 10.78(33.49)\dag \\
    8     & 2.1(2.57) & \cellcolor[rgb]{ .651,  .651,  .651}\textbf{1.388(0.99)} & 4.953(31.37) & 7.202(50.66)\dag & 9.016(23.62)\dag \\
    9     & 5.559(25.5) & \cellcolor[rgb]{ .651,  .651,  .651}\textbf{4.071(4.5)} & 4.276(17.19) & 8.706(28.49)\dag & 13.123(31.61)\dag \\
    10    & \cellcolor[rgb]{ .651,  .651,  .651}\textbf{4.25(11.95)} & 7.42(2.38) & 8.94(13.94)\dag & 7.089(16.7) & 15.29(49.85)\dag \\
    \bottomrule
    \end{tabular}%
    }
    \begin{tablenotes}
        \item[1] {\dag} denotes our proposed method significantly outperforms other peer algorithms according to the Wilcoxon's rank sum test at a 0.05 significance level;
        \item[2] {\ddag} denotes the corresponding peer algorithm outperforms our proposed algorithm.
    \end{tablenotes}
  \label{tab:minDis_RNA_ref2}%
\end{table}%

\begin{table}[t]
  \centering
  \caption{The mean(std) of $\bar{\epsilon}(\mathcal{S})$ of our proposed method with peer algorithms on inverse RNA design problems given reference point 2}
  \resizebox{0.8\linewidth}{!}{
    \begin{tabular}{cccccc}
    \toprule
    \textbf{RNA}   & \texttt{D-PBNSGA-II} & \texttt{D-PBMOEA/D} & \texttt{I-MOEA/D-PLVF} & \texttt{I-NSGA2/LTR} & \texttt{IEMO/D} \\
    \midrule
    1     & 0.459(0.07) & 0.523(0.07) & \cellcolor[rgb]{ .651,  .651,  .651}\textbf{0.425(0.07)} & 0.821(0.06) & 1.614(1.33) \\
    2     & 0.753(0.54) & \cellcolor[rgb]{ .651,  .651,  .651}\textbf{0.745(0.22)} & 2.295(1.22)\dag & 3.143(0.82)\dag & 4.165(1.37)\dag \\
    3     & \cellcolor[rgb]{ .651,  .651,  .651}\textbf{0.409(0.11)} & 0.688(0.03) & 3.402(1.29)\dag & 5.072(2.92)\dag & 6.874(2.45)\dag \\
    4     & \cellcolor[rgb]{ .651,  .651,  .651}\textbf{1.462(3.0)} & 2.51(4.3) & 4.393(5.65)\dag & 3.797(14.2)\dag & 9.436(14.79)\dag \\
    5     & \cellcolor[rgb]{ .651,  .651,  .651}\textbf{2.247(2.19)} & 2.918(1.27) & 8.126(12.68)\dag & 5.535(23.02)\dag & 7.108(15.5)\dag \\
    6     & \cellcolor[rgb]{ .651,  .651,  .651}\textbf{1.846(2.53)} & 4.442(3.9) & 3.851(11.13)\dag & 8.347(24.1)\dag & 8.001(14.62)\dag \\
    7     & \cellcolor[rgb]{ .651,  .651,  .651}\textbf{2.616(4.64)} & 4.066(5.7) & 3.944(6.39) & 8.924(26.74)\dag & 11.35(31.91)\dag \\
    8     & \cellcolor[rgb]{ .651,  .651,  .651}\textbf{2.141(2.47)} & 3.988(2.57) & 9.281(12.21)\dag & 7.267(49.92) & 10.736(28.62)\dag \\
    9     & 5.724(24.65) & 5.729(7.45) & \cellcolor[rgb]{ .651,  .651,  .651}\textbf{4.701(15.73)} & 8.736(28.2) & 14.447(50.17) \\
    10    & \cellcolor[rgb]{ .651,  .651,  .651}\textbf{4.272(12.0)} & 7.477(2.3) & 8.954(13.95)\dag & 7.345(13.88) & 15.295(49.86)\dag \\
    \bottomrule
    \end{tabular}%
    }
    \begin{tablenotes}
        \item[1] {\dag} denotes our proposed method significantly outperforms other peer algorithms according to the Wilcoxon's rank sum test at a 0.05 significance level;
        \item[2] {\ddag} denotes the corresponding peer algorithm outperforms our proposed algorithm.
    \end{tablenotes}
  \label{tab:expDis_RNA_ref2}%
\end{table}%

\begin{figure}
    \centering
    \vspace{-1em}
    \includegraphics[width=1.0\linewidth]{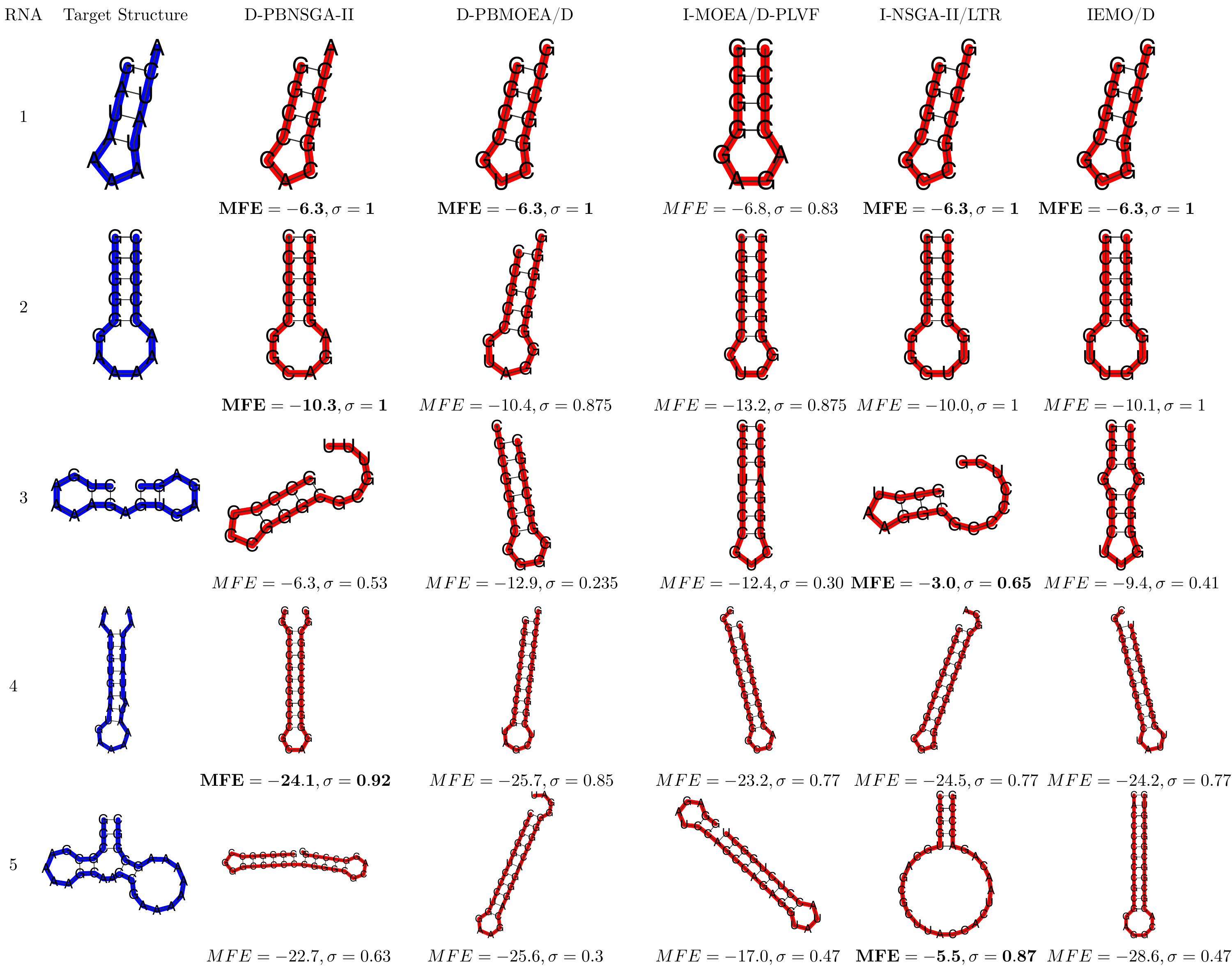}
    \caption{The comparison results of \our\ against the other three stat-of-the-art PBEMO algorithms on inverse RNA design problems. In particular, the target structure is a sample of possible solution represented in \textcolor{blue}{blue color} while the predicted one obtained by different optimization algorithms are highlighted in \textcolor{red}{red color}. In this graph, the reference point is set as $\sigma=1$. The closer $\sigma$ is to $1$, the better performance achieved by the corresponding algorithm. When the $\sigma$ share the same biggest value, the smaller $MFE$ the better the performance is.}
    \label{fig:RNA_1_5}
\end{figure}

\begin{figure}
    \centering
    \vspace{-1em}
    \includegraphics[width=1.0\linewidth]{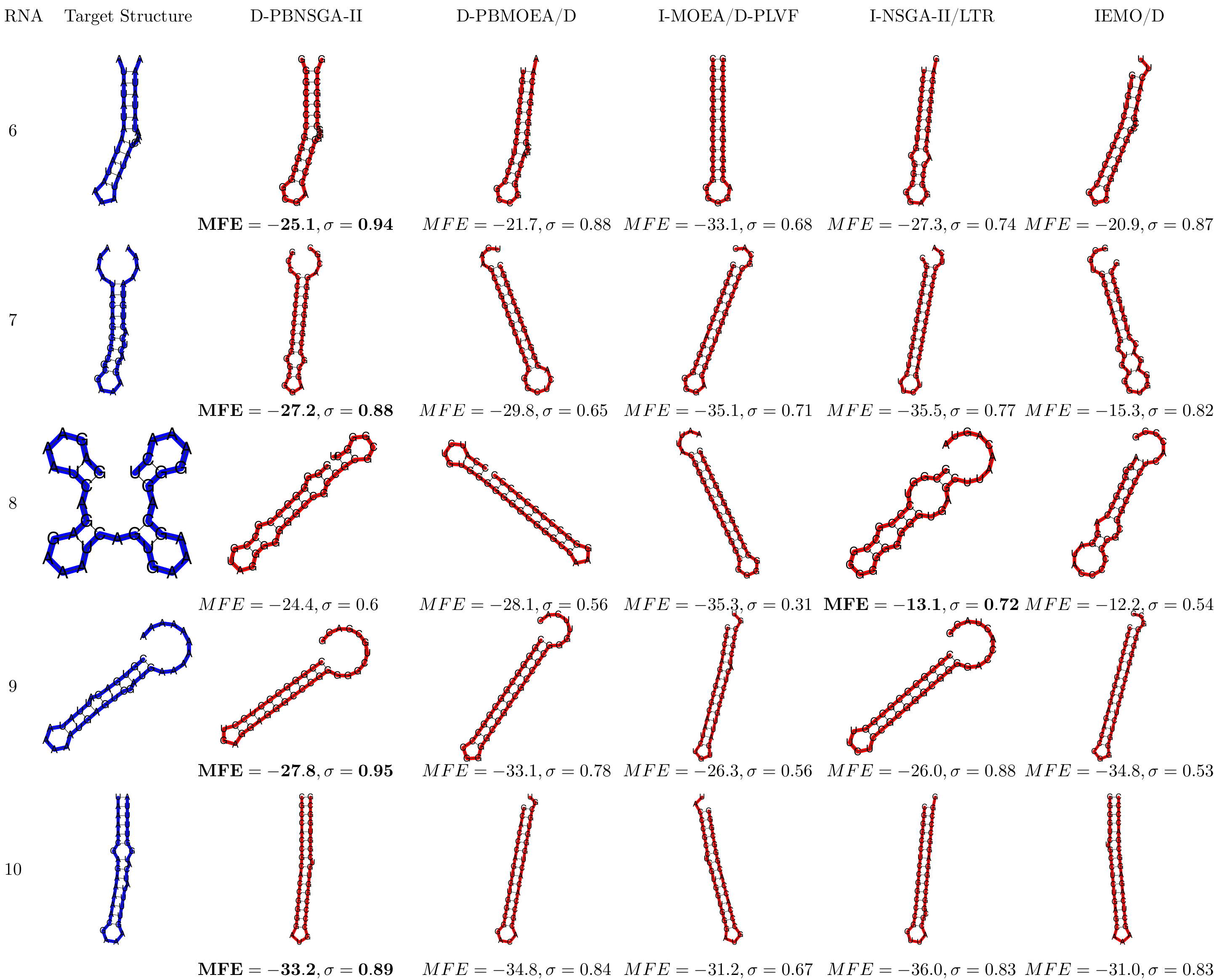}
    \caption{The comparison results of \our\ against the other three state-of-the-art PBEMO algorithms on inverse RNA design problems. In particular, the target structure is a sample of possible solution represented in \textcolor{blue}{blue color} while the predicted one obtained by different optimization algorithms are highlighted in \textcolor{red}{red color}. In this graph, the reference point is set as $\sigma=1$. The closer $\sigma$ is to $1$, the better performance achieved by the corresponding algorithm. When the $\sigma$ share the same biggest value, the smaller $MFE$ the better the performance is.}
    \label{fig:RNA_6_10}
\end{figure}

\subsection{Experiment on PSP}
\label{app:PSP}

In this section we listed the results implementing our proposed method, specifically \our\texttt{-NSGA-II} on PSP problems. 
We implement RMSD as the performance metric for PSP problems:
\begin{equation}
    RMSD=\sqrt{\frac{\sum_{i=1}^{n_{atom}} d_i^2}{n_{atom}}}
    \label{eq:rmsd}
\end{equation}
where $n_{atom}$ is the total number of matched atoms between the two protein structures and $d_i$ is the distance between each pair of atoms. The four energy settings and predicted results of our proposed method are in~\cref{tab:PSP}. Other parameters in PSP problems are align with~\cite{ZhangGLXC23}.

The population results are shown in~\cref{fig:PSP_3} and the secondary structures are dipicted in~\cref{fig:protein}. The RMSD comparison results are shown in~\cref{tab:PSP_performance}. As we can see our proposed method have better convergence and accuracy than synthetic problems. This may be caused by two reasons:
\begin{itemize}
    \item The first one is the PSP problem is only conducted on 4-dimensional objective spaces. In synthetic problems, our proposed method shows better performance results when dimension $m\ge 3$ while peer algorithms may collapse.
    \item The second reason is the formulation of PSP problems. In this paper, we adopt utilizing 4 energy function to represent, which are empirically proved to be more accurate than in 1-dimensional objective function\cite{ZhangGLXC23}.
\end{itemize}
\begin{table*}[t!]
    \centering
    \caption{The difference between native and predicted protein in energy}
    \label{tab:PSP}
    \resizebox{0.6\linewidth}{!}{
        \begin{tabular}{c|c|cccc}
        \toprule
        \textbf{ID} & \textbf{Type} & \textbf{Bound} & \textbf{dDFIRE} & \textbf{Rosetta} & \textbf{RWplus}\\
        \midrule
        \multicolumn{1}{c|}{\multirow{2}{*}{1K36}} & Native & 431.51 & -52.84 & 293.70 & -5059.39 \\ 
        \multicolumn{1}{c|}{} & Predicted & 431.75 & -41.66 & 402.33 & -3990.52 \\
        \midrule
        
        \multicolumn{1}{c|}{\multirow{2}{*}{1ZDD}} & Native & 297.18 & -74.02 & -27.73 & -4604.18 \\ 
        \multicolumn{1}{c|}{} & Predicted & 328.84 & -63.03 & 63.03 & -3986.78 \\
        \midrule

        \multicolumn{1}{c|}{\multirow{2}{*}{2M7T}} & Native & 269.76 & -39.51 & -10.82 & -3313.84 \\ 
        \multicolumn{1}{c|}{} & Predicted & 276.12 & -22.98 & 210.47 & -2111.19\\
        \midrule

        \multicolumn{1}{c|}{\multirow{2}{*}{3P7K}} & Native & 379.04 & -104.15 & -11.29 & -6140.81 \\ 
        \multicolumn{1}{c|}{} & Predicted & 413.47 & -91.21 & 184.17 & -3399.93 \\
        \bottomrule

            
    \end{tabular}}
    
\end{table*}
\begin{table*}[t]
    \centering
    \caption{The mean(std) of RMSD comparing our propsoed emthod with peer algorithms on PSP problems.}
    \label{tab:PSP_performance}
    \resizebox{1.0\linewidth}{!}{
        \begin{tabular}{c|ccccc}
        \toprule
        \textbf{ID} & \textbf{\texttt{D-PBNSGA-II}}& \texttt{D-PBMOEA/D} & \textbf{\texttt{I-MOEA/D-PLVF}} & \textbf{\texttt{I-NSGA2-LTR}} & \textbf{\texttt{IEMO/D}}\\
        \midrule
        1K36 & 583.29(117.08) & \cellcolor[rgb]{.651, .651, .651}\textbf{302.82(138.59)} & 682.23(182.63) & 597.19(284.91) & 610.62(402.31) \\
        1ZDD & 446.88(542.33) & \cellcolor[rgb]{.651, .651, .651}\textbf{360.25(17.73)} & 623.14(394.14) & 450.23(582.19) & 488.28(518.42) \\
        2M7T & \cellcolor[rgb]{.651, .651, .651}\textbf{350.51(8.95)} & 477.56(76.93)& 671.45(372.01) & 721.73(502.31) & 823.46(1023.54) \\
        3P7K & 719.90(1202.92) &\cellcolor[rgb]{.651, .651, .651}\textbf{152.53(7.45)} & 663.29(802.99) & 692.31(823.13) & 818.93(923.87)\\
        3V1A & 687.07(497.33) & \cellcolor[rgb]{.651, .651, .651}\textbf{584.43(28.34)} & 887.68(391.74) & 791.13(304.72) & 823.28(528.87)\\
        \bottomrule
        \end{tabular}}
    
\end{table*}


\begin{figure}[t]
    \centering
    \includegraphics[width=1.0\linewidth]{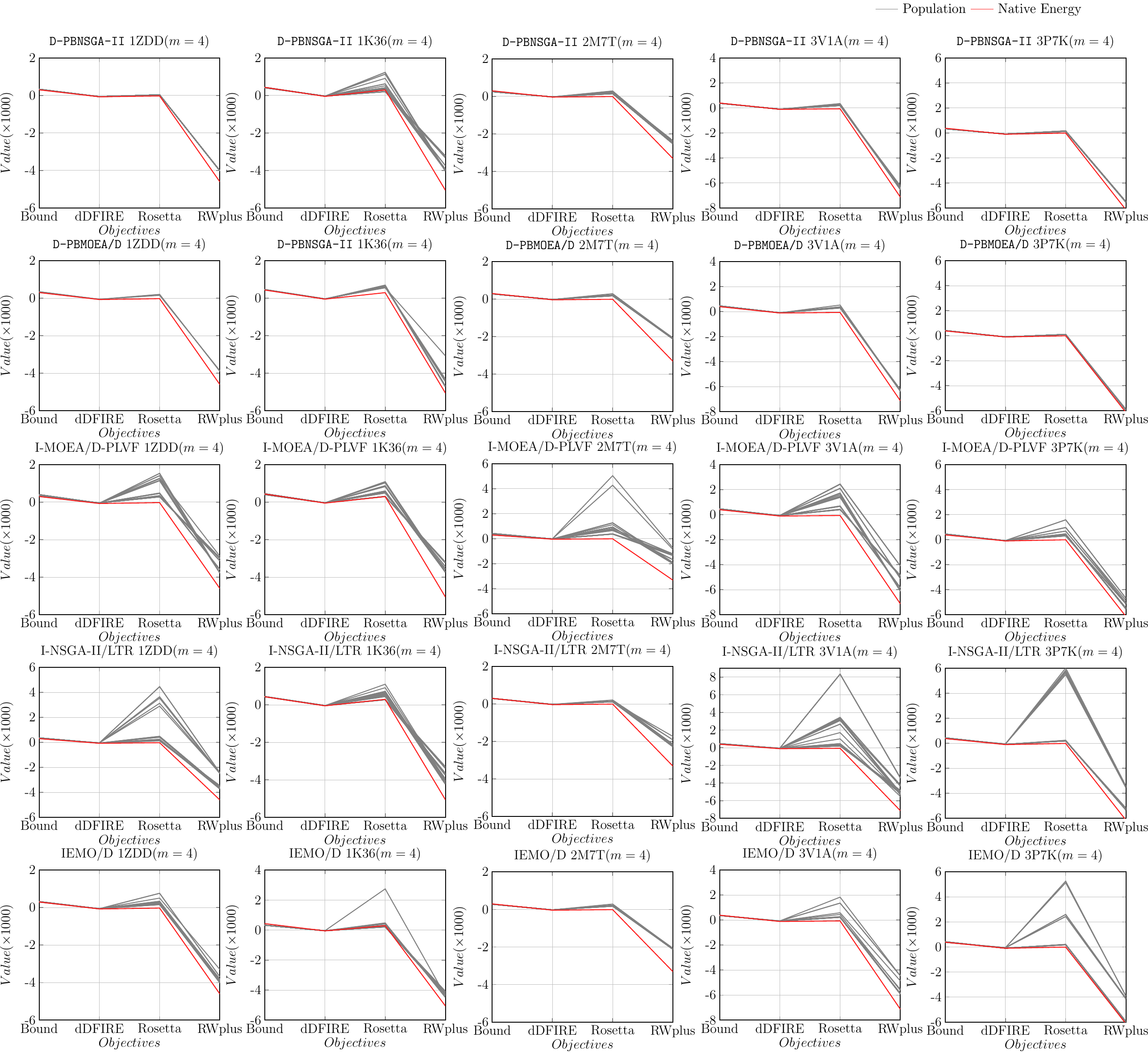}
    \caption{The population distribution of \our\ and peer algorithms running on PSP problems ($m=4$).}
\label{fig:PSP_3}
\end{figure}

\end{document}